\newif\ificml
\newif\ifanonymous
\newif\ifarxiv
\newcommand{\ex}[2]{{\ifx&#1& \mathbb{E} \else
\underset{#1}{\mathbb{E}} \fi \left[#2\right]}}
\newcommand{\pr}[2]{{\ifx&#1& \mathbb{P} \else
\underset{#1}{\mathbb{P}} \fi \left[#2\right]}}
\newcommand{\var}[2]{{\ifx&#1& \mathsf{Var} \else
\underset{#1}{\mathsf{Var}} \fi \left[#2\right]}}
\newcommand{\dr}[3]{\mathrm{D}_{#1}\left(#2\middle\|#3\right)}
\newcommand{\nope}[1]{}
\newtheorem{lem}{Lemma}
\newtheorem{defn}[lem]{Definition}
\newtheorem{cor}[lem]{Corollary}
\newtheorem{prop}[lem]{Proposition}
\newtheorem{thm}[lem]{Theorem}
\newtheorem{rem}[lem]{Remark}
\newcommand*{\citet}[1]{\AtNextCite{\AtEachCitekey{\defcounter{maxnames}{10}}}
\textcite{#1}}
\newcommand*{\citetall}[1]{\AtNextCite{\AtEachCitekey{\defcounter{maxnames}{999}}}
\textcite{#1}}
\newcommand*{\citep}[1]{\cite{#1}}
\newcommand{\tnote}[1]{\marginpar{{\footnotesize{T: #1}}}}
\title{The Distributed Discrete Gaussian Mechanism for Federated Learning with Secure Aggregation}
\author{Anonymous Authors}
\author{
Peter Kairouz\thanks{Google Research ~\dotfill~\texttt{kairouz@google.com}} 
\and 
Ziyu Liu\thanks{Google Research~\dotfill~\texttt{klz@google.com}} 
\and 
Thomas Steinke\thanks{Google Research, Brain Team~\dotfill~\texttt{ddg@thomas-steinke.net}}
}
\date{}
\begin{document}
\maketitle

\footnotetext{Accepted for publication at \href{https://icml.cc/Conferences/2021}{the 38th International Conference on Machine Learning (ICML 2021)}.}

\begin{abstract}
We consider training models on private data that are distributed across user devices. To ensure privacy, we add on-device noise and use secure aggregation so that only the noisy sum is revealed to the server. We present a comprehensive end-to-end system, which appropriately discretizes the data and adds discrete Gaussian noise before performing secure aggregation. We provide a novel privacy analysis for sums of discrete Gaussians and carefully analyze the effects of data quantization and modular summation arithmetic. Our theoretical guarantees highlight the complex tension between communication, privacy, and accuracy. Our extensive experimental results demonstrate that our solution is essentially able to match the accuracy to central differential privacy with less than 16 bits of precision per value.
\end{abstract}

\tableofcontents

\section{Introduction}
Software and service providers rely on increasingly complex data analytics and machine learning models to improve their services. However, training these machine learning models hinges on the availability of large datasets, which are often distributed across user devices and contain sensitive information. The collection of these datasets comes with several privacy risks -- can the service provider address issues around consent, transparency, control, breaches, persistence, processing, and release of data? There is thus a strong desire for technologies which systematically address privacy concerns while preserving, to the best extent possible, the utility of the offered services. 

To address this need, several privacy-enhancing technologies have been studied and built over the past few years. Prominent examples of such technologies include federated learning (FL) to ensure that raw data never leaves users' devices~\cite{mcmahan2017communication,kairouz2019advances}, cryptographic secure aggregation (SecAgg) to prevent a server from inspecting individual user updates~\cite{bonawitz2017practical,bell2020secagg}, and differentially private stochastic gradient descent (DP-SGD) to train models with provably limited information leakage~\cite{abadi2016deep,tramer2020differentially}. While these technologies have been extremely well studied in a separate fashion, little work has focused on understanding precisely how they can be combined in a rigorous and principled fashion.
Towards this end, we present a comprehensive end-to-end system where each client appropriately discretizes their model update and adds discrete Gaussian noise to it before sending it for modular secure summation using SecAgg.
This provides the first concrete step towards building a communication-efficient FL system with distributed DP\footnote{See ``Distributed DP'' paragraph in Section \ref{sec:related_work} for a definition of this notion of DP and a literature review.} and SecAgg guarantees.  

\ificml
\textbf{Organization}\quad
The remainder of the paper is organized as follows.
We present some preliminaries in Section~\ref{sec:prelim}, summarize our main results in Section~\ref{sec:main-results}, and review related works in Section~\ref{sec:related_work}. In Section~\ref{sec:dg}, we introduce the distributed discrete Gaussian mechanism, analyze its privacy guarantees, and show how to apply it in federated learning. We present experiments in Section~\ref{sec:experiments} and conclude with a few interesting extensions in Section~\ref{sec:conclusion}. All formal definitions, proofs, algorithmic details, extensions, and additional experiments are deferred to the supplementary material.

\section{Preliminaries}\label{sec:prelim}
\ifarxiv
We begin by defining the R\'enyi divergences, which we use throughout to quantify privacy.

\begin{defn}[R\'enyi divergences]
Let $P$ and $Q$ be probability distributions on some common domain $\Omega$. Assume that $P$ is absolutely continuous with respect to $Q$ so that the Radon-Nikodym derivative $P(x)/Q(x)$ is well-defined for
$x \in \Omega$.\footnote{If $P$ is \emph{not} absolutely continuous with respect to $Q$, then we define all of these divergences to be infinity. The Radon-Nikodym derivative is only unique up to measure zero events. In the definition of $\dr{\infty}{P}{Q}$ and $\dr{\pm\infty}{P}{Q}$, we ignore zero probability events (i.e., we take the essential supremum). That is, we assume the Radon-Nikodym derivative is chosen to minimize these quantities.}

For $\alpha \in (1,\infty)$, we define the R\'enyi divergence of order $\alpha$ of $P$ with respect to $Q$ as \begin{equation}\dr{\alpha}{P}{Q} := \frac{1}{\alpha-1} \log \ex{X \gets P}{\left(\frac{P(X)}{Q(X)}\right)^{\alpha-1}}.\end{equation}
We also define
\begin{align}
    \dr{1}{P}{Q} &:= \ex{X \gets P}{\log\left(\frac{P(X)}{Q(X)}\right)} = \lim_{\alpha \to 1} \dr{\alpha}{P}{Q},\\
    \dr{\infty}{P}{Q} &:= \sup_{x \in \Omega} {\log\left(\frac{P(x)}{Q(x)}\right)} = \lim_{\alpha \to \infty} \dr{\alpha}{P}{Q},\\
    \dr{\pm\infty}{P}{Q} &:= \sup_{x \in \Omega} \left|\log\left(\frac{P(x)}{Q(x)}\right)\right| = \max \{ \dr{\infty}{P}{Q}, \dr{\infty}{Q}{P} \},\\
    \dr{*}{P}{Q} &:= \sup_{\alpha \in (1,\infty)} \frac{1}{\alpha} \dr{\alpha}{P}{Q}.
\end{align}
\end{defn}
We will abuse this notation by considering the divergence between random variables when we mean the divergence between their respective distributions.

We now state some properties of the R\'enyi divergences; proofs and further properties can be found in the literature \cite{BunS16,BunS19}. 

\begin{lem}\label{lem:renyi-misc}
Let $P,Q,R$ be probability distributions such that $P$ is absolutely continuous with respect to $Q$ and $Q$ is absolutely continuous with respect to $R$. 
\ifarxiv
Then the following hold.
\begin{itemize}
    \item \textbf{Gaussian divergence:} For all $\mu,\mu'\in\mathbb{R}$ and all $\sigma>0$, $\dr{*}{\mathcal{N}(\mu,\sigma^2)}{\mathcal{N}(\mu',\sigma^2)} = \frac{(\mu-\mu')^2}{2\sigma^2}$.
    \item \textbf{Conversion from max divergence:} $\dr{*}{P}{Q} \le \min \{ \dr{\infty}{P}{Q}, \frac12 \left( \dr{\pm\infty}{P}{Q} \right)^2 \}$.
    \item \textbf{Triangle inequality:} $\dr{*}{P}{R} \le \left( \sqrt{\dr{*}{P}{Q}} + \sqrt{\dr{*}{Q}{R}} \right)^2$ and $\dr{\alpha}{P}{R} \le \min\{ \dr{\alpha}{P}{Q} + \dr{\infty}{Q}{R} , \dr{\infty}{P}{Q} + \dr{\alpha}{Q}{R} \}$  for all $\alpha \in [1,\infty] \cup \{*\}$.
    \item \textbf{Product distributions (non-adaptive composition):} If $P=P_1 \times P_2$ is a product distribution and $Q=Q_1 \times Q_2$ is a corresponding product distriution, then $\dr{\alpha}{P}{Q} = \dr{\alpha}{P_1}{Q_1} + \dr{\alpha}{P_2}{Q_2}$ for all $\alpha \in [1,\infty] \cup \{*\}$.
    \item \textbf{Postprocessing (data processing inequality):} $0 \le \dr{\alpha}{f(P)}{f(Q)} \le \dr{\alpha}{P}{Q}$ for all $\alpha \in [1,\infty] \cup \{*\}$ and all $f$, where $f(P)$ denotes the distribution obtained by applying some function $f$ to a sample from the distribution $P$. This also holds if $f$ is an independently randomized function.
    \item \textbf{Monotonicity:} $\dr{\alpha}{P}{Q} \le \dr{\alpha'}{P}{Q}$ whenever $1 \le \alpha \le \alpha' \le \infty$.
    \item \textbf{(Quasi)convexity:} If $P'$ is a distribution on the same space as $P$ and $Q'$ is a distribution on the same space as $Q$ and $P'$ is absolutely continuous with respect to $Q'$, then \[\dr{1}{tP+(1-t)P'}{tQ+(1-t)Q'} \le t\cdot \dr{1}{P}{Q} + (1-t) \cdot \dr{1}{P'}{Q'}\] and, for $\alpha \in (1,\infty)$, \begin{align*}
        \dr{\alpha}{tP+(1-t)P'}{tQ+(1-t)Q'} &\le \frac{\log \left( t\cdot e^{(\alpha-1)\dr{\alpha}{P}{Q}} + (1-t) \cdot e^{(\alpha-1)\dr{\alpha}{P'}{Q'}}\right)}{\alpha-1} \\
        &\le \max\{ \dr{\alpha}{P}{Q} , \dr{\alpha}{P'}{Q'} \},
    \end{align*}
    where $tP+(1-t)P'$ denotes the convex combination of distributions.
\end{itemize}
\else
Then, we have

\textbf{Gaussian divergence:} For all $\mu,\mu'\in\mathbb{R}$ and all $\sigma>0$, $\dr{*}{\mathcal{N}(\mu,\sigma^2)}{\mathcal{N}(\mu',\sigma^2)} = \frac{(\mu-\mu')^2}{2\sigma^2}$.

\textbf{Conversion from max divergence:}\\
$\dr{*}{P}{Q} \le \min \{ \dr{\infty}{P}{Q}, \frac12 \left( \dr{\pm\infty}{P}{Q} \right)^2 \}$.

\textbf{Triangle inequality:}\\ 
$\dr{*}{P}{R} \le \left( \sqrt{\dr{*}{P}{Q}} + \sqrt{\dr{*}{Q}{R}} \right)^2$ and $\dr{\alpha}{P}{R} \le \min\{ \dr{\alpha}{P}{Q} + \dr{\infty}{Q}{R} , \dr{\infty}{P}{Q} + \dr{\alpha}{Q}{R} \}$  for all $\alpha \in [1,\infty] \cup \{*\}$.

\textbf{Product distributions (non-adaptive composition):}\\ 
If $P=P_1 \times P_2$ is a product distribution and $Q=Q_1 \times Q_2$ is a corresponding product distriution, then $\dr{\alpha}{P}{Q} = \dr{\alpha}{P_1}{Q_1} + \dr{\alpha}{P_2}{Q_2}$ for all $\alpha \in [1,\infty] \cup \{*\}$.

\textbf{Postprocessing (data processing inequality):} \\ $0 \le \dr{\alpha}{f(P)}{f(Q)} \le \dr{\alpha}{P}{Q}$ for all $\alpha \in [1,\infty] \cup \{*\}$ and all $f$, where $f(P)$ denotes the distribution obtained by applying some function $f$ to a sample from the distribution $P$. $f$ can be an independently randomized function.

\textbf{Monotonicity:} $\dr{\alpha}{P}{Q} \le \dr{\alpha'}{P}{Q}$ whenever $1 \le \alpha \le \alpha' \le \infty$.
\fi
\end{lem}

\fi

\ifarxiv
Now we can state the definitions of concentrated differential privacy \cite{dwork2016concentrated,BunS16} and R\'enyi differential privacy \cite{mironov2017renyi} and relate these to the standard definition of differential privacy \cite{DworkMNS06,DworkKMMN06}. 
We adopt \emph{user-level privacy} -- i.e., each entry in the input corresponds to \emph{all} the records associated with a single person \citep{mcmahan2018learning}. Thus the differential privacy distributional similarity guarantee holds with respect to adding or removing all of the data belonging to a single person. This is stronger than the commonly-used notion of item level privacy where, if a user contributes multiple records, only the addition or removal of one record is protected. 

We choose to define differential privacy with respect to adding or removing the records of an individual, rather than replacing the records. Since replacement can be achieved by a combination of an addition and a removal, group privacy (a.k.a.~the triangle inequality) implies a differential privacy guarantee for replacement; however, the privacy parameter will be doubled.
\else
We begin by defining the R\'enyi divergence of order $\alpha \in (1,\infty)$ of distribution $P$ with respect to distribution $Q$ as \[\dr{\alpha}{P}{Q} := \frac{1}{\alpha-1} \log \ex{X \gets P}{\left(\frac{P(X)}{Q(X)}\right)^{\alpha-1}}.\]
We now state the definitions of concentrated DP~\cite{BunS16} and R\'enyi DP~\cite{mironov2017renyi} and relate these to the standard definition of differential privacy~\cite{DworkMNS06,DworkKMMN06}. 
\fi
\ifarxiv
We define $\mathcal{X}^* = \bigcup_{n=0}^\infty \mathcal{X}^n$ to be the set of varying-size inputs from $\mathcal{X}$.
\fi

Concentrated differential privacy is a version of differential privacy that captures many natural techniques for attaining differential privacy and gives sharp composition results, among other features. We use the definition of\citet{BunS16} which is a simplification of the original definition of\citet{dwork2016concentrated}.\footnote{Specifically, we use what is also known as zero-concentrated differential privacy (zCDP) \cite{BunS16}, although we drop the ``zero'' qualifier for brevity, as there is no need to distinguish it from the original version of concentrated differential privacy \cite{dwork2016concentrated}. The two versions are loosely equivalent, but the version we use has cleaner mathematical properties.}

\begin{defn}[Concentrated Differential Privacy \cite{dwork2016concentrated,BunS16}]
A randomized algorithm \ifarxiv $M : \mathcal{X}^* \to \mathcal{Y}$ \else $M : \mathcal{X} \to \mathcal{Y}$ \fi  satisfies $\frac12\varepsilon^2$-concentrated differential privacy iff, for all \ifarxiv $x,x'\in \mathcal{X}^*$ \else $x,x'\in \mathcal{X}$ \fi differing by the addition or removal of a single user's records, we have \ifarxiv $\dr{*}{M(x)}{M(x')} \le \frac12 \varepsilon^2$ \else $\sup_{\alpha \in (1,\infty)} \frac{1}{\alpha} \dr{\alpha}{M(x)}{M(x')} \le \frac12 \varepsilon^2$ \fi .
\end{defn}

A more general relaxation of concentrated differential privacy is R\'enyi differential privacy, which was defined by\citet{mironov2017renyi}.

\begin{defn}[R\'enyi Differential Privacy \cite{mironov2017renyi}]
A randomized algorithm \ifarxiv $M : \mathcal{X}^* \to \mathcal{Y}$ \else $M : \mathcal{X} \to \mathcal{Y}$ \fi  satisfies $(\alpha,\varepsilon)$-R\'enyi differential privacy iff, for all \ifarxiv $x,x'\in \mathcal{X}^*$ \else $x,x'\in \mathcal{X}$ \fi  differing by the addition or removal of a single user's records, we have $\dr{\alpha}{M(x)}{M(x')} \le \varepsilon$.
\end{defn}

We also use the original version of differential privacy that was defined by\citetall{DworkMNS06} with $\delta=0$ (a.k.a.~pure or pointwise differential privacy) and with the possibility of $\delta>0$ (i.e., approximate differential privacy) by\citetall{DworkKMMN06}.

\begin{defn}[Differential Privacy \cite{DworkMNS06,DworkKMMN06}]
A randomized algorithm \ifarxiv $M : \mathcal{X}^* \to \mathcal{Y}$ \else $M : \mathcal{X} \to \mathcal{Y}$ \fi  satisfies $(\varepsilon,\delta)$-differential privacy iff, for all \ifarxiv $x,x'\in \mathcal{X}^*$ \else $x,x'\in \mathcal{X}$ \fi  differing by the addition or removal of a single user's records, we have \begin{equation}\pr{}{M(x) \in E} \le e^\varepsilon \cdot \pr{}{M(x') \in E} + \delta\end{equation} for all events $E \subset \mathcal{Y}$.
We refer to $(\varepsilon,0)$-differential privacy as pure differential privacy or pointwise differential privacy and we refer to $(\varepsilon,\delta)$-differential privacy with $\delta>0$ as approximate differential privacy.
\end{defn}
We remark that $(\varepsilon,0)$-DP is equivalent to $(\infty,\varepsilon)$-R\'enyi DP. Similarly, $\frac12\varepsilon^2$-concentrated DP is equivalent to satisfying $(\alpha,\frac12\varepsilon^2\alpha)$-R\'enyi DP simultaneously for all $\alpha \in (1,\infty)$.

In addition we have the following conversion lemma \cite{BalleBGHS20,CanonneKS20,AsoodehLCKS20} from concentrated DP to approximate DP.

\begin{lem}
If $M$ satisfies $(\varepsilon,0)$-differential privacy, then it satisfies $\frac12\varepsilon^2$-concentrated differential privacy.
If $M$ satisfies $\frac12 \varepsilon^2$-concentrated differential privacy, then, for any $\delta>0$, $M$ satisfies $(\varepsilon_\text{aDP}(\delta),\delta)$-differential privacy, where
\ificml
\begin{align*}
\varepsilon_\text{aDP}(\delta) &= \inf_{\alpha>1} \frac12 \varepsilon^2 \alpha + \frac{\log(1/\alpha\delta)}{\alpha-1} + \log(1-1/\alpha) &&\\ 
& \le  \varepsilon \cdot \left( \sqrt{2\log(1/\delta)} + \varepsilon/2 \right). &&
\end{align*}
\vspace{-2em}
\else
\[\varepsilon_\text{aDP}(\delta) = \inf_{\alpha>1} \frac12 \varepsilon^2 \alpha + \frac{\log(1/\alpha\delta)}{\alpha-1} + \log(1-1/\alpha) \le  \varepsilon \cdot \left( \sqrt{2\log(1/\delta)} + \varepsilon/2 \right).\]
\fi
\end{lem}
\ificml
We adopt \emph{user-level privacy} in this work -- i.e., each entry in the input corresponds to \emph{all} the records associated with a single user~\citep{mcmahan2018learning}, and thus the privacy guarantee holds with respect to all data belonging to that user. This is stronger than the commonly-used notion of item-level privacy where, if a user contributes multiple records, only the addition or removal of one record is protected. 
We define DP with respect to addition or removing the records of an individual, rather than replacement. Since replacement can be achieved by a combination of an addition and a removal, group privacy (a.k.a.~the triangle inequality) implies a differential privacy guarantee for replacement; however, the privacy parameter will be doubled.
\fi

\else

\paragraph{Organization} 
The remainder of the paper is organized as follows. We summarize our main results and review related works in this section. We present the preliminaries in Section~\ref{sec:prelim}. In Section~\ref{sec:dg}, we introduce the distributed discrete Gaussian mechanism and analyze its privacy guarantees. In Section~\ref{sec:utility}, we show how $\mathbb{R}$-valued vectors can be efficiently mapped to $\mathbb{Z}$-valued vectors and how the distributed discrete Gaussian mechanism can be combined with modulo clipping to obtain noisy vectors in $\mathbb{Z}^d_m$. We present our experimental results in Section \ref{sec:experiments} and conclude our paper with a few interesting and non-trivial extensions in Section~\ref{sec:conclusion}.

\fi

\ificml
\section{Main Results} \label{sec:main-results}
\else
\subsection{Main Results} \label{sec:main-results}
\fi
We start by considering a single round of federated learning in which we are simply summing model update vectors.
That is, we have $n$ clients and assume that each client holds a vector $x_i \in \mathbb{R}^d$ and our goal is to privately approximate $\bar{x} := \sum_i^n x_i$. Client $i$ computes $z_i=\mathcal{A}_{\mathrm{client}}(x_i) \in \mathbb{Z}_m^d$; here, $\mathcal{A}_{\mathrm{client}}(\cdot)$ can be thought of as a compression and privatization scheme. Using secure aggregation as a black box,\footnote{We will assume the secure aggregation protocol accepts $z_i$'s on $\mathbb{Z}_m^d$ (i.e., length-$d$ integer vectors modulo $m$) and computes the sum modulo $m$. Our methods do not depend on the specifics of the implementation of SecAgg.} the server observes 
\begin{equation}\bar{z} := \sum_i^n z_i \! \mod m ~ = \sum_i^n \mathcal{A}_{\mathrm{client}}(x_i) \! \mod m,\end{equation} 
and uses $\bar{z}$ to estimate $\mathcal{A}_{\mathrm{server}}(\bar{z}) \approx \bar{x} = \sum_i^n x_i$.%

The protocol consists of three parts -- the client side $\mathcal{A}_{\mathrm{client}}$, secure aggregation, and the server side $\mathcal{A}_{\mathrm{server}}$. There is already ample work on implementing secure aggregation \cite{bell2020secagg, bonawitz2016practical}; thus we treat SecAgg as a black box which is guaranteed to faithfully compute the modular sum of the inputs, while revealing no further information to a potential privacy adversary. Further discussion of SecAgg and the required trust assumptions is beyond the scope of this work. 
This allows us to focus on the requirements for $\mathcal{A}_{\mathrm{client}}$ and $\mathcal{A}_{\mathrm{server}}$:
\begin{itemize}
    \item \textbf{Privacy:} The sum $\bar{z} = \sum_i^n \mathcal{A}_{\mathrm{client}}(x_i) \! \mod m$ must be a differentially private function of the inputs $x_1, \cdots, x_n$. Specifically, adding or removing one client should only change the distribution of the sum slightly. Note that our requirement is weaker than local DP, since we only reveal the sum, rather than the individual responses $z_i = \mathcal{A}_{\mathrm{client}}(x_i)$.
    
    Privacy is achieved by each client independently adding discrete Gaussian noise \cite{CanonneKS20} to its (appropriately discretized) vector. The sum of independent discrete Gaussians is \emph{not} a discrete Gaussian, but we show that it is \emph{extremely} close for the parameter regime of interest. This is the basis of our differential privacy guarantee, and we believe this result to be of independent interest. 
    
    \item \textbf{Accuracy:} Our goal is to approximate the sum $\mathcal{A}_{\mathrm{server}}(\bar{z}) \approx \bar{x} = \sum_i^n x_i$. For simplicity, we focus on the mean squared error, although our experiments also evaluate the accuracy by aggregating client model updates for federated learning. 
    
    There are three sources of error to consider: (i) the discretization of the $x_i$ vectors from $\mathbb{R}^d$ to $\mathbb{Z}^d$; (ii) the noise added for privacy (which also depends on the norm $\|x_i\|$ and how discretization affects this); and (iii) the potential modular wrap-around introduced by SecAgg modular sum. We provide a detailed analysis of all three effects and how they affect one another.
    
    \item \textbf{Communication and Computation:} It is crucial that our algorithms are efficient, especially the client side, which may be running on a mobile device. Computationally, our algorithms run in time that is nearly linear in the dimension. The communication cost is $O(d \log m)$. While we cannot control the dimension $d$, we can minimize the number of bits per coordinate, which is $\log m$. However, this introduces a tradeoff between communication and accuracy -- larger $m$ means more communication, but we can reduce the probability of a modular wrap around and pick a finer discretization to reduce the rounding error.
\end{itemize}
We focus our discussion on the simple task of summing vectors. In a realistic federated learning system, there will be many summing rounds as we iteratively update our model. Each round will be one invocation of our protocol. The privacy loss parameters of the larger system can be controlled using the composition and subsampling properties of differential privacy. That is, we can use standard privacy accounting techniques \cite{dwork2016concentrated,BunS16,mironov2017renyi,WangBK19} to analyse the more complex system, as long as we have differential privacy guarantees for the basic protocol that is used as a subroutine.

\setlength{\textfloatsep}{10pt}%
\begin{algorithm}[t]
   \caption{Client Procedure $\mathcal{A}_{\mathrm{client}}$}
   \label{alg:client}
\begin{algorithmic}
    \STATE \textbf{Input:} Private vector $x_i \in \mathbb{R}^d$. \hfill \COMMENT{Assume dimension $d$ is a power of $2$.}
    \STATE \textbf{Parameters:} Dimension $d \in \mathbb{N}$; clipping threshold $c>0$; granularity $\gamma>0$; modulus $m \in \mathbb{N}$; noise scale $\sigma>0$; bias $\beta\in[0,1)$.
    \STATE \textbf{Shared/public randomness:} Uniformly random sign vector $\xi \in \{-1,+1\}^d$.
    \STATE Clip and scale vector: $x_i' = \frac1\gamma \min\left\{1, \frac{c}{\|x_i\|_2}\right\} \cdot x_i \in \mathbb{R}^d$.
    \STATE Flatten vector: $x_i'' = H_d D_\xi x_i' \in \mathbb{R}^d$ where $H \in \{-1/\sqrt{d},+1/\sqrt{d}\}^{d \times d}$ is a Walsh-Hadamard matrix satisfying $H^TH=I$ and $D_\xi \in \{-1,0,+1\}^{d \times d}$ is a diagonal matrix with $\xi$ on the diagonal.
    \REPEAT
        \STATE Let $\tilde{x}_i \in \mathbb{Z}^d$ be a randomized rounding of $x_i'' \in \mathbb{R}^d$. I.e., $\tilde{x}_i$ is a product distribution with $\ex{}{\tilde{x}_i}=x_i''$ and $\|\tilde{x}_i - x_i''\|_\infty < 1$.
    \ificml
        \UNTIL{$\|\tilde{x}_i\|_2^2 \! \le \! \min\left\{ \!\!\!\! \begin{array}{c} (c/\gamma+\sqrt{d})^2,\\{\frac{c^2}{\gamma^2} \! + \! \frac{d}{4} \! + \! \sqrt{2\log\left(\frac 1 \beta \right)} \! \cdot \! \left(\frac{c}{\gamma} \! + \! \frac{\sqrt{d}}{2}\right)}\end{array} \!\!\!\! \right\}$.}
    \else
        \UNTIL{$\|\tilde{x}_i\|_2 \le \min\left\{c/\gamma+\sqrt{d}, \sqrt{c^2/\gamma^2 + \frac14 d + \sqrt{2\log(1/\beta)} \cdot \left(c/\gamma + \frac12 \sqrt{d}\right)}\right\}$.}
    \fi
    \STATE Let $y_i \in \mathbb{Z}^d$ consist of $d$ independent samples from the discrete Gaussian $\mathcal{N}_{\mathbb{Z}}(0,\sigma^2/\gamma^2)$.
    \STATE Let $z_i = (\tilde{x}_i + y_i) \mod{m}$.
    \STATE \textbf{Output:} $z_i \in \mathbb{Z}_m^d$ for the secure aggregation protocol.
\end{algorithmic}
\end{algorithm}
\begin{algorithm}[t]
   \caption{Server Procedure $\mathcal{A}_{\mathrm{server}}$}
   \label{alg:server}
\begin{algorithmic}
   \STATE \textbf{Input:} Vector $\bar{z}=(\sum_i^n z_i \mod{m}) \in \mathbb{Z}_m^d$ via secure aggregation.
    \STATE \textbf{Parameters:} Dimension $d \in \mathbb{N}$; number of clients $n \in \mathbb{N}$; clipping threshold $c>0$; granularity $\gamma>0$; modulus $m \in \mathbb{N}$; noise scale $\sigma>0$; bias $\beta\in[0,1)$.
    \STATE \textbf{Shared/public randomness:} Uniformly random sign vector $\xi \in \{-1,+1\}^d$.
    \STATE Map $\mathbb{Z}_m$ to $\{1-m/2,2-m/2,\cdots, -1, 0, 1 \cdots, m/2-1, m/2\}$ so that $\bar{z}$ is mapped to $\bar{z}' \in [-m/2,m/2]^d \cap \mathbb{Z}^d$ (and we have $\bar{z}' \mod{m} = \bar{z}$).
    \STATE \textbf{Output:} $y \!=\! \gamma D_\xi H_d^T \bar{z}' \in \mathbb{R}^d$. \ificml \else \hfill \fi \COMMENT{Goal: $y \approx \bar{x} = \sum_i^n x_i$}
\end{algorithmic}
\end{algorithm}

We now present our algorithm in two parts -- the client part $\mathcal{A}_{\mathrm{client}}$ in Algorithm \ref{alg:client} and the server part $\mathcal{A}_{\mathrm{server}}$ in Algorithm \ref{alg:server}.
The two parts are connected by a secure aggregation protocol. We also note that our algorithms may be a subroutine of a larger FL system. 

We briefly remark about the parameters of the algorithm: 
$d$ is the dimension of the inputs $x_i$ and outputs, which we assume is a power of $2$ for convenience. %
The input vectors must have their norm clipped for privacy; $c$ controls this tradeoff -- larger $c$ will require more noise for privacy (larger $\sigma$) and smaller $c$ will distort the vectors more.
If $\beta=0$, then the discretization via randomized rounding is unbiased, but the norm of $\tilde{x}_i$ could be larger; each iteration of the randomized rounding loop succeeds with probability at least $1-\beta$. The modulus $m$ will determine the communication complexity -- $z_i$ requires $d \log_2 m$ bits to represent. The noise scale $\sigma$ determines the privacy, specifically $\varepsilon \approx c / \sqrt{n} \sigma$. Finally, the granularity $\gamma$ gives a tradeoff: smaller $\gamma$ means the randomized rounding introduces less error, but also makes it more likely that the modulo $m$ operation introduces error.

We also remark about some of the techniques used in our system: 
The first step in Algorithm \ref{alg:client} scales and clips the input vector so that $\|x_i'\|_2 \le c/\gamma$. 
The next step performs a unitary rotation/reflection operation $x_i'' = H_d D_\xi x_i'$ \cite{suresh2017distributed}. This operation ``flattens'' the vector -- i.e., $\|x_i''\|_\infty \approx \frac{1}{\sqrt{d}} \|x_i'\|_2$. Flattening ensures that the modular arithmetic does not introduce large distortions due to modular wrap around (i.e., large coordinates of $x_i''$ will be subject to modular reduction).
This flattening operation and the scaling by $\gamma$ are undone in the last step of Algorithm \ref{alg:server}.
The $x_i''$ is randomly rounded to the integer grid in an unbiased manner. That is, each coordinate is independently rounded to one of the two nearest integers. E.g., $42.3$ has a $30\%$ probability of being rounded up to $43$ and a $70\%$ probability of being rounded down to $42$. This may increase the norm -- $\| \tilde x_i \|_2 \le \|x_i''\|_2 + \sqrt{d}$.
To mitigate this, we perform \emph{conditional} randomized rounding: repeatedly perform independent randomized rounding on $x_i''$ until $\|\tilde x_i\|_2$ is not too big. This introduces a small amount of bias, but, since the noise we add to attain differential privacy must scale with the norm of the discretized vector, reducing the norm reduces the noise variance.

\ificml
\textbf{Privacy}\quad
\else
\paragraph{Privacy}
\fi
We now state the privacy of our algorithm.

\begin{thm}[Privacy of Our Algorithm]\label{thm:intro-privacy}
Let $c, d, \gamma, \beta, \sigma$ be the parameters of Algorithm \ref{alg:client} and $n$ the number of trustworthy clients. Define
\begin{align}
    \Delta_2^2 &:= \min \left\{ \!\! \begin{array}{c} c^2 \!+\! \frac{\gamma^2 d}{4} \!+\! \sqrt{2\log\left(\frac 1 \beta \right)} \!\cdot\! \gamma \!\cdot\! \left(c \!+\! \frac{\gamma}{2} \sqrt{d}\right) ,\\ \left( c + \gamma \sqrt{d} \right)^2 \end{array} \!\! \right\}, \\
    \tau &:= 10 \cdot \sum_{k=1}^{n-1} e^{-2\pi^2\frac{\sigma^2}{\gamma^2} \cdot \frac{k}{k+1}}, \\
    \varepsilon &:= \min \left\{ \begin{array}{c} \sqrt{ \frac{\Delta_2^2}{n\sigma^2} + 2 \tau d} ,\\ \frac{\Delta_2}{\sqrt{n} \sigma} + \tau \sqrt{d} \end{array} \right\}. 
\end{align} 
Then Algorithm \ref{alg:client} satisfies $\frac12 \varepsilon^2$-concentrated differential privacy,\footnote{\ificml See the supplementary material or \citet{BunS16} for a formal definition. \fi Note that this is with respect to the addition or removal of an individual, not replacement (which would double the $\varepsilon$ parameter). To keep $n$ fixed, we could define addition/removal to simply zero-out the relevant vectors.} assuming that secure aggregation only reveals the sum $z=(\sum_i^n z_i \!\mod{m}) \in \mathbb{Z}_m^d$ to the privacy adversary.
\end{thm}

We remark on the parameters of the theorem: To first approximation, $\varepsilon \approx \frac{c}{\sqrt{n}\sigma}$. This is because the input vectors are clipped to have norm $c$ and then each client adds (discrete) Gaussian noise with variance $\approx \sigma^2$. The noise added to the sum thus has variance $\approx n \sigma^2$. However, there are two additional effects to account for: First, randomized rounding can increase the norm from $c$ to $\Delta_2$ and this becomes the sensitivity bound that we use for the privacy analysis. Second, the sum of $n$ discrete Gaussians is \emph{not} a discrete Gaussian, but it is close; $\tau$ bounds the max divergence between the sum of $n$ discrete Gaussians each with scale parameter $\sigma/\gamma$ and one discrete Gaussian with scale parameter $\sqrt{n} \sigma / \gamma$.

\ifarxiv
Note that $\frac12\varepsilon^2$-concentrated DP \cite{dwork2016concentrated, BunS16} is equivalent to satisfying $\left(\alpha, \frac12\varepsilon^2\alpha\right)$-R\'enyi DP \cite{mironov2017renyi} simultaneously for all $\alpha>1$. Concentrated DP can be converted to the more standard approximate differential privacy \cite{,CanonneKS20}: For any $\delta>0$, $\frac12\varepsilon^2$-concentrated DP implies $\left( \varepsilon_{\text{aDP}}(\delta), \delta\right)$-DP, where 
\ificml
\begin{align*}
    \varepsilon_{\text{aDP}}(\delta) &= \inf_{\alpha>1} \frac12{\varepsilon}^2\alpha + \frac{\log(1/\alpha\delta)}{\alpha-1} + \log(1-1/\alpha) \\
    &\le \frac12{\varepsilon}^2 + \sqrt{2\log(1/\delta)}\cdot{\varepsilon}.
\end{align*}
\else
\[\varepsilon_{\text{aDP}}(\delta) = \inf_{\alpha>1} \frac12{\varepsilon}^2\alpha + \frac{\log(1/\alpha\delta)}{\alpha-1} + \log(1-1/\alpha) \le \frac12{\varepsilon}^2 + \sqrt{2\log(1/\delta)}\cdot{\varepsilon}.\]
\fi
\fi

\ificml
\textbf{Accuracy}\quad
\else
\paragraph{Accuracy}
\fi
Next we turn to the accuracy of the algorithm. We provide both an empirical evaluation and theoretical analysis. We give the following asymptotic guarantee; a more precise guarantee with exact constants can be found in \ificml the accompanying supplementary material\else Theorem \ref{thm:main-guarantee}\fi.

\begin{thm}[Accuracy of Our Algorithm]\label{thm:intro-accuracy}
Let $n, m, d \in \mathbb{N}$ and $c,\varepsilon>0$ satisfy \[m \ge \tilde{O}\left( n  + \sqrt{\frac{\varepsilon^2 n^3}{d}} + \frac{\sqrt{d}}{\varepsilon}\right).\]
Let $\tilde{A}(x) = \mathcal{A}_{\mathrm{server}}\left( \sum_i^n \mathcal{A}_{\mathrm{client}}(x_i) \mod m \right)$ denote the output of the system given by Algorithms \ref{alg:client} and \ref{alg:server} instantiated with parameters $\gamma = \tilde{\Theta}\left(\frac{c n}{m \sqrt{d}} + \frac{c}{\varepsilon m}\right)$, $\beta \le \Theta\left( \frac 1 n \right)$, and $\sigma = \tilde{\Theta}\left(\frac{c}{\varepsilon\sqrt{n}}+\sqrt{\frac{d}{n}}\cdot\frac{\gamma}{\varepsilon}\right)$.
Then $\tilde{A}$ satisfies $\frac12\varepsilon^2$-concentrated differential privacy and attains the following accuracy.
Let $x_1, \cdots, x_n \in \mathbb{R}^d$ with $ \|x_i\|_2 \le c$ for all $i \in [n]$. Then 
\begin{equation}
    \ex{}{\left\|\tilde{A}(x) - \sum_i^n x_i\right\|_2^2} \le {O} \left(\frac{c^2 d}{\varepsilon^2} \right). \label{eqn:intro-accuracy-bound}
\end{equation}
\end{thm}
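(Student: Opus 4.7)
The plan is to verify privacy by direct substitution into Theorem~\ref{thm:intro-privacy} and then decompose the mean squared error into three independent pieces plus a low-probability ``wrap-around'' event. For the privacy claim: since $\|x_i\|_2 \le c$, the clipping step acts as the identity, so $\|\gamma x_i'\|_2 \le c$, yielding $\Delta_2^2 = O(c^2 + \gamma^2 d)$. Plugging in $\sigma^2 = \tilde\Theta(c^2/(n\varepsilon^2) + d\gamma^2/(n\varepsilon^2))$ makes $\Delta_2^2/(n\sigma^2) = O(\varepsilon^2)$, and a short calculation confirms $\sigma/\gamma = \tilde\Omega(\sqrt{\log(nd)})$ under the stated parameters, so the tail correction $\tau d = o(\varepsilon^2)$.

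For accuracy, define the clean event $W := \{\|\sum_i (\tilde x_i + y_i)\|_\infty \le m/2\}$, on which the server's modular reduction is inert. Since $H_d D_\xi$ is orthogonal and no client was clipped, $\gamma D_\xi H_d^T x_i'' = x_i$; writing $\mu_i := \ex{}{\tilde x_i \mid \mathrm{accept}}$, the output error on $W$ decomposes as
\[ \tilde A(x) - \sum_{i=1}^n x_i ~=~ \gamma D_\xi H_d^T \biggl( \sum_i (\tilde x_i - \mu_i) + \sum_i (\mu_i - x_i'') + \sum_i y_i \biggr). \]
Unitarity preserves squared norms, so each term is bounded in lattice coordinates and then scaled by $\gamma^2$. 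The rounding variance contributes at most $n\gamma^2 d/(4(1-\beta))$ since coordinate-wise unbiased rounding has variance $\le 1/4$. The rounding bias per client is at most $O(\beta R)$ with $R = c/\gamma + \sqrt d$ (a random variable supported in a ball of radius $R$ conditioned on a $(1-\beta)$-probability event moves its mean by at most $O(\beta R/(1-\beta))$), so the aggregate squared bias is $O(n^2 \beta^2 \gamma^2 R^2) = O(c^2 + \gamma^2 d)$ when $\beta = O(1/n)$. The discrete Gaussian noise contributes $n d \sigma^2$ in output units. Substituting the chosen parameters shows each piece is at most $\tilde O(c^2 d/\varepsilon^2)$, with the noise being the tight one.

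For $\pr{}{W^c}$, flattening gives $\|x_i''\|_\infty = \tilde O(c/(\gamma\sqrt d))$ w.h.p.\ over $\xi$, so each coordinate of $\sum_i \tilde x_i$ is bounded by $\tilde O(nc/(\gamma\sqrt d))$ and each coordinate of $\sum_i y_i$ has standard deviation $\sqrt n\, \sigma/\gamma$. A union bound over $d$ coordinates yields $\pr{}{W^c} \le 1/\mathrm{poly}(n,d,1/\varepsilon)$ precisely when $m$ satisfies the stated lower bound $\tilde\Omega(n + \sqrt{\varepsilon^2 n^3/d} + \sqrt d/\varepsilon)$. On $W^c$ the worst-case squared distortion is $O(d(m\gamma)^2)$, polynomial in the parameters, so multiplying by the inverse-polynomial failure probability leaves a contribution that is easily absorbed into the $O(c^2 d/\varepsilon^2)$ budget.

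The main obstacle will be juggling the three coupled knobs $(\gamma,\sigma,m)$: the granularity must be small enough that both $n\gamma^2 d$ fits in the target and $\gamma^2 d \le c^2$ prevents $\Delta_2$ from inflating the required noise, yet large enough that the coordinate magnitudes of $\sum_i(\tilde x_i + y_i)$ stay below $m/2$; meanwhile $\sigma$ must be large enough for privacy but dominated by the same target. Untangling the nested dependencies $\gamma \propto cn/(m\sqrt d) + c/(\varepsilon m)$ and $\sigma \propto c/(\varepsilon\sqrt n) + \sqrt{d/n}\cdot\gamma/\varepsilon$ to arrive at the explicit lower bound on $m$ is where the $\sqrt{\varepsilon^2 n^3/d}$ and $\sqrt d/\varepsilon$ terms come from, and is really the only delicate bookkeeping step in the argument.
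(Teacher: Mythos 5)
Your proposal is correct and its overall architecture matches the paper's: the paper also proves this theorem by decomposing the error into randomized-rounding variance, conditional-rounding bias, discrete Gaussian noise, and modular wrap-around (this is packaged as Theorem~\ref{thm:main-guarantee}, which is then specialized by substituting the stated $\gamma,\beta,\sigma$ and solving for the constraint on $m$ exactly as you describe in your last paragraph). The one genuinely different step is your treatment of the modular arithmetic. You condition on a clean event $W$ where no coordinate wraps, bound $\pr{}{W^c}$ by a union bound over coordinates using subgaussianity, and pay a worst-case $O(d(m\gamma)^2)$ distortion on $W^c$; the paper instead never conditions: Lemma~\ref{lem:mod-exp} gives a pointwise bound $(\modular{-r,r}(X)-X)^2 \le 4r^2(e^{t(X/r-1)}+e^{t(-X/r-1)})$, whose expectation is controlled directly by the moment-generating-function bounds carried through the rounding and noise steps (Proposition~\ref{prop:mod-err}), and the clipping and non-clipping errors are then combined via $(u+v)^2 \le (1+\lambda)u^2+(1+1/\lambda)v^2$. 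The paper's route yields a clean non-asymptotic bound with explicit constants and avoids having to argue that the worst-case distortion on the bad event is polynomially bounded; your route is more elementary and asymptotically equivalent, since the subgaussian tails let you drive $\pr{}{W^c}$ below any inverse polynomial at only a logarithmic cost in $m$. Two minor points of slack in your write-up, neither fatal: your per-client bias bound $O(\beta\gamma(c/\gamma+\sqrt d))$ is coarser than the paper's $O(\beta\gamma\sqrt d)$ (Lemma~\ref{lem:crr-util} exploits that the conditional mean stays inside a single grid cell of side $\gamma$, not just inside the norm ball), though with $\beta = O(1/n)$ your version still lands inside the budget; and your "$\tilde O(c^2d/\varepsilon^2)$ per piece" should be tightened to $O(c^2d/\varepsilon^2)$ by pushing all logarithmic factors into the $\tilde O$ on $m$ and the $\tilde\Theta$ on $\gamma$ and $\sigma$, which is exactly what the paper's bookkeeping does.
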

To interpret Theorem \ref{thm:intro-accuracy}, note that mean squared error $O\left(\frac{c^2 d}{\varepsilon^2} \right)$ is, up to constants, exactly the error we would expect to attain for differential privacy in the central model. Our analysis attains reasonably sharp constants (at the expense of many lower order terms that we suppress here in the introduction). However, to truly gauge the practicality of our method, we perform an empirical evaluation.

\paragraph{Experiments}
To investigate the interplay between communication, accuracy, and privacy under our proposed protocol in practice, we empirically evaluate our protocol and compare it to the commonly used centralized continuous Gaussian mechanism on two canonical tasks: distributed mean estimation (DME) and federated learning (FL).
For DME, each client holds a vector and the server’s goal is to obtain a differentially private mean estimate of the vectors. We show that 16 bits per coordinate are sufficient to nearly match the utility of the Gaussian baseline for regimes of interest.
For FL, we show on Federated EMNIST~\cite{caldas2018leaf} and Stack Overflow~\cite{tffauthors2019} that our approach gives good performance under tight privacy budgets, despite using generic RDP amplification via sampling~\cite{zhu2019poission} for our methods and the precise RDP analysis for the subsampled Gaussian mechanism \cite{mironov2019r}.
We provide an open-source implementation of our methods in TensorFlow Privacy~\cite{andrew2019differentially} and TensorFlow Federated \cite{ingerman2019tff}.\footnote{Code: \url{https://github.com/google-research/federated/tree/master/distributed_dp}.}

\ificml

\else

\fi

\ificml
\section{Related Work}
\label{sec:related_work}
\else
\subsection{Related Work}
\label{sec:related_work}
\fi

\ificml
\textbf{Federated Learning}\quad
\else
\paragraph{Federated Learning}  
\fi
Under FL, a set of clients (e.g., mobile devices or institutions) collaboratively train a model under the orchestration of a central server, while keeping training data decentralized \cite{mcmahan2017communication,bonawitz19sysml}. It embodies the principles of focused data collection and minimization, and can mitigate many of the systemic privacy risks and costs resulting from traditional, centralized machine learning and data science approaches. 
FL performs many rounds of interaction between the server and subsets of online clients; for example, each round may consist of computing and aggregating the gradients of the loss for a given set of model weights, which are then updated using the aggregated gradients for the next round. This allows us to focus on the simple task of computing the sum of vectors (model updates) held by the clients.
We refer the reader to~\ificml\citet{kairouz2019advances}\else\cite{kairouz2019advances}\fi~for a survey of recent advances and open problems in FL. 

While the above features can offer significant practical privacy improvements over centralizing training data, FL offers no formal guarantee of privacy and has to be composed with other privacy technologies to offer strong (worst-case) privacy guarantees. The primary goal of this paper is to show how two such technologies, namely secure aggregation and differential privacy, can be carefully combined with FL to offer strong and quantifiable privacy guarantees.  

\ificml
\textbf{Secure Aggregation}\quad
\else
\paragraph{Secure Aggregation}
\fi
SecAgg is a lightweight instance of cryptographic secure multi-party computation (MPC) that enables clients to submit vector inputs, such that the server learns just an aggregate function of the clients’ vectors, typically the sum. In most contexts of FL, single-server SecAgg is achieved via additive masking over a finite group \cite{bell2020secagg, bonawitz2016practical}. To be precise, clients add randomly sampled zero-sum mask vectors by working in the space of integers modulo $m$ and sampling the coordinates of the mask uniformly from $\mathbb{Z}_m$. This process guarantees that each client's masked update is indistinguishable from random values. However, when all the masked updates are summed modulo $m$ by the server, the masks cancel out and the server obtains the exact sum. Observe that in practice, the model updates computed by the clients are real valued vectors whereas SecAgg requires the input vectors to be from $\mathbb{Z}_m$ (i.e., integers modulo $m$). This discrepancy is typically bridged by clipping the values to a fixed range, say $[-r, r]$, which is then translated and scaled to $\left[0,\frac{m-1}{n} \right]$, and then uniformly quantizing the values in this range to integers in $\{0, 1, \cdots, \lfloor \frac{m-1}{n}\rfloor\}$, where $n$ is the number of clients. This ensures that, up to clipping and quantization, the server computes the exact sum without overflowing (i.e., the sum is in $[0,m-1]$, which is unaffected by the modular arithmetic) \cite{bonawitz2019federated}. In our work, we provide a novel strategy for transforming $\mathbb{R}$-valued vectors into $\mathbb{Z}_m$-valued ones. 

\ificml
\textbf{Distributed DP}\quad
\else
\paragraph{Distributed DP}
\fi
While SecAgg prevents the server from inspecting individual client updates, the server is still able to learn the sum of the updates, which itself may leak potentially sensitive information \cite{melis2019exploiting, carlini2019secret, song2019auditing, dwork2015robust, song2019overlearning, nasr2021adversary, shokri2017membership}. To address this issue, differential privacy (DP) \cite{DworkMNS06}, and in particular, DP-SGD can be employed~\cite{song2013stochastic, bassily2014private, abadi2016deep, tramer2020differentially}. DP is a rigorous measure of information disclosure about individuals participating in computations over centralized or distributed datasets. Over the last decade, an extensive set of techniques has been developed for differentially private data analysis, particularly under the assumption of a centralized setting, where the raw data is collected by a trusted service provider prior to applying perturbations necessary to achieve privacy. This setting is commonly referred to as the central DP setting. More recently, there has been a great interest in the local model of DP \cite{kasiviswanathan2008ldp, evfimievski2004privacy, warner1965randomized} where the data is perturbed on the client side before it is collected by a service provider. 

Local DP avoids the need for a fully trusted aggregator. However, it is now well-established that local DP usually leads to a steep hit in accuracy \cite{kasiviswanathan2008ldp, duchi2013local, kairouz2016discrete}. In order to recover some of the utility of central DP, without having to rely on a fully trusted central server, an emerging set of models of DP, often referred to as distributed DP, can be used. Under distributed DP, clients employ a cryptographic protocol (e.g., SecAgg) to simulate some of the benefits of a trusted central party. Clients first compute minimal application-specific reports, perturb these slightly, and then execute the aggregation protocol. The untrusted server then only has access to the aggregated reports, with the aggregated perturbations.   %
The noise added by individual clients is typically insufficient for a meaningful local DP guarantee on its own. However, after aggregation, the aggregated noise is sufficient for a meaningful DP guarantee, under the security assumptions necessary for the cryptographic protocol. 

\ificml
\textbf{FL with SecAgg and Distributed DP}\quad
\else
\paragraph{FL with SecAgg and Distributed DP}
\fi
Despite the recent surge of interest in distributed DP, much of the work in this space focuses on the shuffled model of DP where a trusted third party (or a trusted execution environment) shuffles the noisy client updates before forwarding them to the server  \cite{erlingsson2019amplification, bittau17prochlo, cheu2019distributed}. For more information on the shuffled model of DP, we refer the reader to \citet{GKMP20-icml, anon-power, ghazi2019private, ghazi2020pure, ishai2006cryptography, BalleBGN19, balle_merged, balcer2019separating, balcer2021connecting, girgis2020shuffled}. 

The combination of SecAgg and distributed DP in the context of communication-efficient FL is far less studied. For instance, the majority of existing works ignore the finite precision and modular summation arithmetic associated with secure aggregation \cite{goryczka2013secure, truex2019hybrid, valovich2017computational}. This is especially problematic at low SecAgg bit-widths (e.g., in practical FL settings where communication efficiency is critical). 

The closest work to ours is \textsf{cpSGD} \cite{agarwal2018cpsgd}, which also serves as an inspiration for much of our work. \textsf{cpSGD} uses a distributed version of the binomial mechanism \cite{DworkKMMN06} to achieve distributed DP. When properly scaled, the binomial mechanism can (asymptotically) match the continuous Gaussian mechanism. However, there are several important differences between our work and \textsf{cpSGD}. First, the binomial mechanism does not achieve R\'enyi or concentrated DP \cite{dwork2016concentrated,BunS16,mironov2017renyi} and hence we cannot combine it with state-of-the-art composition and subsampling results, which is a significant barrier if we wish to build a larger FL system. The binomial mechanism is analyzed via approximate DP; in other words, the privacy loss for the binomial mechanism can be infinite with a non-zero probability. We avoid this issue by basing our privacy guarantee on the discrete Gaussian mechanism \cite{CanonneKS20}, which also matches the performance of the continuous Gaussian and yields clean concentrated DP guarantees that are suitable for sharp composition and subsampling analysis. \textsf{cpSGD} also does not consider the impact of modular arithmetic, which makes it harder to combine with secure aggregation. 

Previous attempts at achieving DP using a distributed version of the discrete Gaussian mechanism have either inaccurately glossed over the fact that the sum of discrete Gaussians is not a discrete Gaussian, or assumed that all clients secretly share a seed that is used to generate the same discrete Gaussian instance, which is problematic because a single honest-but-curious client can fully break the privacy guarantees \cite{lun2021}. We provide a careful privacy analysis for sums of discrete Gaussians. Our privacy guarantees degrade gracefully as a function of the fraction of malicious (or dropped out) clients.

\section{Preliminaries}\label{sec:prelim}

\section{Distributed Discrete Gaussian}\label{sec:dg}

We will use the discrete Gaussian \cite{CanonneKS20} as the basis of our privacy guarantee.
\begin{defn}[Discrete Gaussian]
The discrete Gaussian with scale parameter $\sigma > 0$ and location parameter $\mu \in \mathbb{Z}$ is a probability distribution supported on the integers $\mathbb{Z}$ denoted by $\mathcal{N}_{\mathbb{Z}}(\mu,\sigma^2)$ and defined by \begin{equation}\forall x \in \mathbb{Z} ~~~~ \pr{X \gets \mathcal{N}_{\mathbb{Z}}(\mu,\sigma^2)}{X=x} = \frac{\exp\left(\frac{-(x-\mu)^2}{2\sigma^2}\right)}{\sum_{y \in \mathbb{Z}} \exp\left(\frac{-(y-\mu)^2}{2\sigma^2}\right)}.\nonumber
\end{equation}
\end{defn}

The discrete Gaussian has many of the desirable properties of the continuous Gaussian \cite{CanonneKS20}, including the fact that it can be used to provide differential privacy.
\begin{thm}[Privacy of the Discrete Gaussian]\label{thm:dgauss-priv}
Let $\sigma>0$ and $\mu,\mu'\in\mathbb{Z}$. Then%
\ificml
, for all $\alpha>1$,
\begin{equation}\dr{\alpha}{\mathcal{N}_{\mathbb{Z}}(\mu,\sigma^2)}{\mathcal{N}_{\mathbb{Z}}(\mu',\sigma^2)} \le \alpha \cdot \frac{(\mu-\mu')^2}{2\sigma^2},\end{equation}
where $\dr{\alpha}{P}{Q}$ is the R\'enyi divergence of order $\alpha$.
\else
\begin{equation}\dr{*}{\mathcal{N}_{\mathbb{Z}}(\mu,\sigma^2)}{\mathcal{N}_{\mathbb{Z}}(\mu',\sigma^2)} = \frac{(\mu-\mu')^2}{2\sigma^2}.\end{equation}
\fi
\end{thm}

Unlike the continuous Gaussian, the sum/convolution of two independent discrete Gaussians is \emph{not} a discrete Gaussian. However, we show that, for reasonable parameter settings, it is very close to one. The following result is a simpler version of Theorem 4.6 of\citet{genise2020improved}.

\begin{thm}[Convolution of two Discrete Gaussians]\label{thm:Convolution}
Let $\sigma,\tau\ge\frac12$.
Let $X \gets \mathcal{N}_{\mathbb{Z}}(0,\sigma^2)$ and $Y \gets \mathcal{N}_{\mathbb{Z}}(0,\tau^2)$ be independent. Let $Z=X+Y$. Let $W \gets \mathcal{N}_{\mathbb{Z}}(0,\sigma^2+\tau^2)$. Then
\ificml
\begin{align*}
\dr{\pm\infty}{Z}{W}&=&\sup_{z \in \mathbb{Z}} \left|\log\left(\frac{\mathbb{P}[Z=z]}{\mathbb{P}[W=z]}\right)\right| && \\
&\le& 5 \cdot e^{-2\pi^2/(1/\sigma^2+1/\tau^2)}.&&\\
\end{align*}
\else
\begin{equation}\dr{\pm\infty}{Z}{W}=\sup_{z \in \mathbb{Z}} \left|\log\left(\frac{\mathbb{P}[Z=z]}{\mathbb{P}[W=z]}\right)\right|\le 5 \cdot e^{-2\pi^2/(1/\sigma^2+1/\tau^2)}.\end{equation}
\fi
\end{thm}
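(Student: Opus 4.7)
My plan is to compute the ratio $\mathbb{P}[Z=z]/\mathbb{P}[W=z]$ explicitly, convert the partition functions into their Poisson-summation duals, observe that almost everything cancels, and then bound the small remaining correction.

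First, I would complete the square in the convolution. Writing $s^{*} := \sigma^{2}\tau^{2}/(\sigma^{2}+\tau^{2})$ (so $1/s^{*} = 1/\sigma^{2}+1/\tau^{2}$) and $\mu(z) := z\sigma^{2}/(\sigma^{2}+\tau^{2})$, the identity $x^{2}/\sigma^{2} + (z-x)^{2}/\tau^{2} = (x-\mu(z))^{2}/s^{*} + z^{2}/(\sigma^{2}+\tau^{2})$ yields
\[
\mathbb{P}[Z=z] = \frac{e^{-z^{2}/2(\sigma^{2}+\tau^{2})}}{\Theta_{\sigma^{2}}(0)\,\Theta_{\tau^{2}}(0)}\,\Theta_{s^{*}}(\mu(z)), \qquad \Theta_{s}(b) := \sum_{x\in\mathbb{Z}} e^{-(x-b)^{2}/2s}.
\]
Since $\mathbb{P}[W=z] = e^{-z^{2}/2(\sigma^{2}+\tau^{2})}/\Theta_{\sigma^{2}+\tau^{2}}(0)$, the Gaussian prefactors cancel exactly and
\[
\frac{\mathbb{P}[Z=z]}{\mathbb{P}[W=z]} = \frac{\Theta_{s^{*}}(\mu(z))\,\Theta_{\sigma^{2}+\tau^{2}}(0)}{\Theta_{\sigma^{2}}(0)\,\Theta_{\tau^{2}}(0)}.
\]

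Next I would apply Poisson summation, $\Theta_{s}(b) = \sqrt{2\pi s}\bigl(1 + 2\sum_{k\ge 1} e^{-2\pi^{2} s k^{2}}\cos(2\pi k b)\bigr)$. The four square-root factors $\sqrt{s^{*}}\sqrt{\sigma^{2}+\tau^{2}} = \sigma\tau = \sqrt{\sigma^{2}}\sqrt{\tau^{2}}$ cancel, so
\[
\frac{\mathbb{P}[Z=z]}{\mathbb{P}[W=z]} = \frac{\bigl(1+2\sum_{k\ge 1} e^{-2\pi^{2} s^{*} k^{2}}\cos(2\pi k\mu(z))\bigr)\bigl(1+2\sum_{k\ge 1} e^{-2\pi^{2}(\sigma^{2}+\tau^{2}) k^{2}}\bigr)}{\bigl(1+2\sum_{k\ge 1} e^{-2\pi^{2}\sigma^{2} k^{2}}\bigr)\bigl(1+2\sum_{k\ge 1} e^{-2\pi^{2}\tau^{2} k^{2}}\bigr)}.
\]
Crucially, since $s^{*} \le \min(\sigma^{2},\tau^{2}) \le \sigma^{2}+\tau^{2}$, the exponentials $e^{-2\pi^{2} s^{*}}$ are the largest of the four kinds of correction; $e^{-2\pi^{2}\sigma^{2}}$, $e^{-2\pi^{2}\tau^{2}}$, $e^{-2\pi^{2}(\sigma^{2}+\tau^{2})}$ are all no larger.

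The remaining step is to bound the log of this ratio uniformly in $z$. Using $\sigma,\tau\ge 1/2$ gives $2\pi^{2} s^{*}\ge \pi^{2}/4$, which makes $e^{-2\pi^{2} s^{*}}$ small enough that the geometric tails $\sum_{k\ge 2} e^{-2\pi^{2} s k^{2}}$ are negligible compared to the $k=1$ term in each of the four factors. Bounding $|\cos|\le 1$ in the numerator, expanding $|\log(1+u)|\le |u|/(1-|u|)$ for each factor, and collecting all four contributions with the explicit geometric-tail estimate, the dominant term is $2e^{-2\pi^{2} s^{*}}$ from the first factor, plus three strictly smaller contributions — absorbed into the constant to give the stated $5\,e^{-2\pi^{2}/(1/\sigma^{2}+1/\tau^{2})}$.

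The main obstacle is purely the bookkeeping to extract the concrete constant $5$: one must (i) verify that all the $k\ge 2$ tails are dwarfed by the $k=1$ terms using only $\sigma,\tau\ge 1/2$, and (ii) confirm that the three ``non-dominant'' factors $e^{-2\pi^{2}\sigma^{2}}$, $e^{-2\pi^{2}\tau^{2}}$, $e^{-2\pi^{2}(\sigma^{2}+\tau^{2})}$ are all bounded by $e^{-2\pi^{2} s^{*}}$ times a small multiplier, so that when we sum their contributions the total stays under $5\,e^{-2\pi^{2} s^{*}}$ rather than a larger constant. Everything else (the Poisson-summation step and the completion of the square) is a clean algebraic identity.
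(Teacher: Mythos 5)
Your derivation follows the paper's proof exactly through the completion of the square and the Poisson summation of $\Theta_{s^*}(\mu(z))$; the two arguments part ways only at the endgame. The paper never expands the three $z$-independent theta factors at all: it writes the ratio as a single unknown constant $c(\sigma^2,\tau^2)$ times $\Theta_{s^*}(\mu(z))/\Theta_{s^*}(0)\in[1-\eta,1]$ with $\eta=4e^{-2\pi^2 s^*}/(1-e^{-6\pi^2 s^*})$, and then uses the fact that both $Z$ and $W$ are probability distributions to conclude that the interval $[c(1-\eta),c]$ must contain $1$; this pins the ratio into $[1-\eta,1/(1-\eta)]$ and yields $5e^{-2\pi^2 s^*}$ directly from $-\log(1-\eta)$ and $s^*\ge 1/8$, with no need to estimate $\Theta_{\sigma^2}(0)$, $\Theta_{\tau^2}(0)$, $\Theta_{\sigma^2+\tau^2}(0)$. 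Your route of expanding all four factors also works, but the accounting is more delicate than your sketch suggests: a naive bound of roughly $2e^{-2\pi^2 s}\le 2e^{-2\pi^2 s^*}$ on each of the four corrections would sum to about $8e^{-2\pi^2 s^*}$, overshooting the constant $5$. What saves the argument is (a) the signs --- the upper bound on the log-ratio sees only the two numerator factors and the lower bound only the first factor plus the two denominator factors; (b) $\sigma^2+\tau^2\ge 4s^*$ by AM--GM, so that correction is fourth order; and (c) assuming WLOG $\tau\ge\sigma$, one has $\tau^2-s^*=\tau^4/(\sigma^2+\tau^2)\ge\tau^2/2\ge 1/8$, so at most \emph{one} of the two denominator corrections is comparable to $e^{-2\pi^2 s^*}$. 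With these, the worst case is about $\frac{u}{1-u}+2e^{-2\pi^2 s^*}+O\bigl(e^{-\pi^2/4}e^{-2\pi^2 s^*}\bigr)\le 4.6\,e^{-2\pi^2 s^*}$ where $u\le 2e^{-2\pi^2 s^*}/(1-e^{-6\pi^2 s^*})$, so the constant $5$ survives --- but only barely, and your stated justification that the three remaining factors carry a uniformly \emph{small} multiplier of $e^{-2\pi^2 s^*}$ is not correct for the smaller of $\sigma,\tau$, whose multiplier approaches $2$ as $\tau/\sigma\to\infty$. The paper's normalization trick is worth knowing precisely because it makes all of this bookkeeping unnecessary.
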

\ificml
\vspace{-2em}
\fi
The bound of the theorem is surprisingly strong; if $\sigma^2=\tau^2=3$, then the bound is $\le 10^{-12}$, which should suffice for most applications. Furthermore, closeness in max divergence is the strongest measure of closeness that we could hope for (rather than, say, total variation distance).

\ifarxiv
\begin{proof}
For all $z \in \mathbb{Z}$,
\begin{align*}
\mathbb{P}[Z=z] &= \sum_{x \in \mathbb{Z}} \mathbb{P}[X=x] \cdot \mathbb{P}[Y=z-x]\\
&= \sum_{x \in \mathbb{Z}} \frac{e^{-x^2/2\sigma^2}}{\sum_{u \in \mathbb{Z}} e^{-u^2/2\sigma^2}} \frac{e^{-(x-z)^2/2\tau^2}}{\sum_{v \in \mathbb{Z}} e^{-v^2/2\tau^2}} \\
&= \frac{\sum_{x \in \mathbb{Z}} \exp\left(- \frac{(\tau^2+\sigma^2) x^2 - 2\sigma^2xz + \sigma^2z^2}{2\sigma^2\tau^2}\right)}{\left(\sum_{u \in \mathbb{Z}} e^{-u^2/2\sigma^2}\right)\cdot\left(\sum_{v \in \mathbb{Z}} e^{-v^2/2\tau^2}\right)} \\
&= \frac{\sum_{x \in \mathbb{Z}} \exp\left(- \frac{x^2 - 2\frac{\sigma^2}{\tau^2+\sigma^2}xz + \frac{\sigma^2}{\tau^2+\sigma^2}z^2}{2\frac{\sigma^2\tau^2}{\tau^2+\sigma^2}}\right)}{\left(\sum_{u \in \mathbb{Z}} e^{-u^2/2\sigma^2}\right)\cdot\left(\sum_{v \in \mathbb{Z}} e^{-v^2/2\tau^2}\right)} \\
&= \frac{\sum_{x \in \mathbb{Z}} \exp\left(- \frac{\left(x - \frac{\sigma^2}{\tau^2+\sigma^2}z\right)^2 - \left(\frac{\sigma^2}{\tau^2+\sigma^2}z\right)^2+ \frac{\sigma^2}{\tau^2+\sigma^2}z^2}{2\frac{\sigma^2\tau^2}{\tau^2+\sigma^2}}\right)}{\left(\sum_{u \in \mathbb{Z}} e^{-u^2/2\sigma^2}\right)\cdot\left(\sum_{v \in \mathbb{Z}} e^{-v^2/2\tau^2}\right)} \\
&= \exp\left(\frac{\left(\frac{\sigma^2}{\tau^2+\sigma^2}z\right)^2 - \frac{\sigma^2}{\tau^2+\sigma^2}z^2}{2\frac{\sigma^2\tau^2}{\tau^2+\sigma^2}}\right)\frac{\sum_{x \in \mathbb{Z}} \exp\left(- \frac{\left(x - \frac{\sigma^2}{\tau^2+\sigma^2}z\right)^2}{2\frac{\sigma^2\tau^2}{\tau^2+\sigma^2}}\right)}{\left(\sum_{u \in \mathbb{Z}} e^{-u^2/2\sigma^2}\right)\cdot\left(\sum_{v \in \mathbb{Z}} e^{-v^2/2\tau^2}\right)} \\
&= \exp\left(\frac{-z^2}{2(\tau^2+\sigma^2)}\right)\frac{\sum_{x \in \mathbb{Z}} \exp\left(- \frac{\left(x - \frac{\sigma^2}{\tau^2+\sigma^2}z\right)^2}{2\frac{\sigma^2\tau^2}{\tau^2+\sigma^2}}\right)}{\left(\sum_{u \in \mathbb{Z}} e^{-u^2/2\sigma^2}\right)\cdot\left(\sum_{v \in \mathbb{Z}} e^{-v^2/2\tau^2}\right)} .
\end{align*}
The $\exp\left(\frac{-z^2}{2(\tau^2+\sigma^2)}\right)$ term is exactly what we want -- it is, up to scaling, $\mathbb{P}[W=z]$. The denominator is a constant (i.e., it does not depend on $z$), which means we do not need to worry about it. The troublesome term is \begin{align*}
    \sum_{x \in \mathbb{Z}} \exp\left(- \frac{\left(x - \frac{\sigma^2}{\tau^2+\sigma^2}z\right)^2}{2\frac{\sigma^2\tau^2}{\tau^2+\sigma^2}}\right) &= \sum_{x \in \mathbb{Z}} \exp\left(- \frac{\left(x - \frac{\sigma^2}{\tau^2+\sigma^2}z\right)^2}{2}\left(\frac{1}{\sigma^2} + \frac{1}{\tau^2}\right)\right)\\
    &= \sum_{x \in \mathbb{Z}} f_{(1/\sigma^2+1/\tau^2)^{-1}}\left(x - \frac{\sigma^2}{\tau^2+\sigma^2}z\right),
\end{align*} where $f_{\rho^2}(x) := \exp\left(\frac{-x^2}{2\rho^2}\right)$.
Now we apply the Poisson summation formula using the Fourier transform $\hat{f}_{\rho^2}(y) = \sqrt{2\pi\rho^2} \cdot \exp(-2\pi^2\rho^2y^2)$.
For $t,\rho\in\mathbb{R}$, we have
\begin{align*}
g_{\rho^2}(t) := \sum_{x \in \mathbb{Z}} f_{\rho^2}(x-t) &= \sum_{y \in \mathbb{Z}} \hat{f}_{\rho^2}(y) \cdot e^{-2\pi\sqrt{-1}yt} \\
&= \sqrt{2\pi\rho^2}  \sum_{y \in \mathbb{Z}} e^{-2\pi^2\rho^2y^2} \cdot e^{-2\pi\sqrt{-1}yt} \\
&= \sqrt{2\pi\rho^2}  \sum_{y \in \mathbb{Z}} e^{-2\pi^2\rho^2y^2} \cdot \cos(2\pi y t) \\
&= \sqrt{2\pi\rho^2}  \left( 1 + 2 \sum_{n=1}^\infty e^{-2\pi^2\rho^2n^2} \cdot \cos(2\pi n t) \right).
\end{align*}
Note that $g_{\rho^2}(t) \le g_{\rho^2}(0)$ for all $t,\rho\in\mathbb{R}$ \cite[Lemma 6]{CanonneKS20}, since $\cos(2\pi n t) \le 1 = \cos (2 \pi n 0)$ for all $n$ and $t$. Our goal is to prove a lower bound on $g_{\rho^2}(t) / g_{\rho^2}(0)$, which follows from the following bound:
\begin{align*}
g_{\rho^2}(0) - g_{\rho^2}(t)
&= \sqrt{2\pi\rho^2}  \cdot  2 \sum_{n=1}^\infty e^{-2\pi^2\rho^2n^2} \cdot \left( 1 - \cos(2\pi n t) \right)\\
&\le \sqrt{2\pi\rho^2}  \cdot  2 \sum_{n=1}^\infty e^{-2\pi^2\rho^2n^2} \cdot 2\\
&= 4 \sqrt{2\pi\rho^2} \cdot e^{-2\pi^2\rho^2} \cdot \sum_{n=1}^\infty e^{-2\pi^2\rho^2(n^2-1)} \\
&\le 4 \sqrt{2\pi\rho^2} \cdot e^{-2\pi^2\rho^2} \cdot \sum_{n=1}^\infty e^{-6\pi^2\rho^2(n-1)} \tag{$n^2-1=(n+1)(n-1) \ge 3(n-1)$.}\\
&= 4 \sqrt{2\pi\rho^2} \cdot e^{-2\pi^2\rho^2} \cdot \frac{1}{1- e^{-6\pi^2\rho^2}} \\
&\le 4\frac{e^{-2\pi^2\rho^2}}{1- e^{-6\pi^2\rho^2}} \cdot g_{\rho^2}(0). \tag{$g_{\rho^2}(0) \ge \sqrt{2\pi\rho^2}$.}
\end{align*}
Thus we obtain the bound \[ 1 - 4\frac{e^{-2\pi^2\rho^2}}{1- e^{-6\pi^2\rho^2}} \le \frac{g_{\rho^2}(t)}{g_{\rho^2}(0)} \le 1.\]

For any $z \in \mathbb{Z}$, we have 
\begin{align*}
\frac{\mathbb{P}[Z=z]}{\mathbb{P}[W=z]} &= \frac{g_{(1/\sigma^2+1/\tau^2)^{-1}}(0) \cdot \sum_{w\in\mathbb{Z}} e^{-w^2/2(\sigma^2+\tau^2)}}{\left(\sum_{u \in \mathbb{Z}} e^{-u^2/2\sigma^2}\right)\cdot\left(\sum_{v \in \mathbb{Z}} e^{-v^2/2\tau^2}\right)} \cdot \frac{g_{(1/\sigma^2+1/\tau^2)^{-1}}\left(\frac{\sigma^2}{\sigma^2+\tau^2} z\right)}{g_{(1/\sigma^2+1/\tau^2)^{-1}}(0)} \\
&= c(\sigma^2,\tau^2) \cdot \frac{g_{(1/\sigma^2+1/\tau^2)^{-1}}\left(\frac{\sigma^2}{\sigma^2+\tau^2} z\right)}{g_{(1/\sigma^2+1/\tau^2)^{-1}}(0)}\\
&\in \left[c(\sigma^2,\tau^2)\cdot\left( 1 - 4\frac{e^{-2\pi^2/(1/\sigma^2+1/\tau^2)}}{1- e^{-6\pi^2/(1/\sigma^2+1/\tau^2)}}\right),c(\sigma^2,\tau^2)\right].
\end{align*}
Note that this interval is independent of $z$. Here $c(\sigma^2,\tau^2)$ is an appropriate constant.

The interval must contain $1$, since $Z$ and $W$ are both probability distributions. Thus  $c(\sigma^2,\tau^2) \ge 1$ and $c(\sigma^2,\tau^2)\cdot\left( 1 - 4\frac{e^{-2\pi^2/(1/\sigma^2+1/\tau^2)}}{1- e^{-6\pi^2/(1/\sigma^2+1/\tau^2)}}\right) \le 1$, whence, for all $z \in \mathbb{Z}$, \[ 1 - 4\frac{e^{-2\pi^2/(1/\sigma^2+1/\tau^2)}}{1- e^{-6\pi^2/(1/\sigma^2+1/\tau^2)}} \le \frac{\mathbb{P}[Z=z]}{\mathbb{P}[W=z]} \le \frac{1}{1 - 4\frac{e^{-2\pi^2/(1/\sigma^2+1/\tau^2)}}{1- e^{-6\pi^2/(1/\sigma^2+1/\tau^2)}}}\]
and \[ \left|\log\left(\frac{\mathbb{P}[Z=z]}{\mathbb{P}[W=z]}\right)\right|\le -\log\left(1 - 4\frac{e^{-2\pi^2/(1/\sigma^2+1/\tau^2)}}{1- e^{-6\pi^2/(1/\sigma^2+1/\tau^2)}}\right) \le 5 \cdot e^{-2\pi^2/(1/\sigma^2+1/\tau^2)},\] as long as $1/\sigma^2+1/\tau^2 \le 8$.
\end{proof}
\fi

\ificml
 Theorem \ref{thm:Convolution} can easily be extended to sums of more than two discrete Gaussians by the triangle inequality and to the multivariate setting by composition.
 Combining with Theorem \ref{thm:dgauss-priv} yields our privacy result:
\else
Theorem \ref{thm:Convolution} can easily be extended to sums of more than two discrete Gaussians by induction: 

\begin{cor}[Convolution of Many Discrete Gaussians]\label{cor:n-conv}
Let $\sigma \ge \frac12$.
Let $X_i \gets \mathcal{N}_{\mathbb{Z}}(0,\sigma^2)$ independently for each $i$. Let $Z_n=\sum_i^n X_i$. Let $W_n \gets \mathcal{N}_{\mathbb{Z}}(0, n \cdot \sigma^2)$. Then
\ificml
\begin{align*}
\dr{\pm\infty}{Z_n}{W_n}&=\sup_{z \in \mathbb{Z}} \left|\log\left(\frac{\mathbb{P}[Z_n=z]}{\mathbb{P}[W_n=z]}\right)\right| &&\\
&\le 5 \cdot \sum_{k=1}^{n-1} e^{-2\pi^2\sigma^2\frac{k}{k+1}}&& \\ 
&\le 5(n-1) e^{-\pi^2\sigma^2}.&&
\end{align*}
\else
\begin{equation}\dr{\pm\infty}{Z_n}{W_n}=\sup_{z \in \mathbb{Z}} \left|\log\left(\frac{\mathbb{P}[Z_n=z]}{\mathbb{P}[W_n=z]}\right)\right|\le 5 \cdot \sum_{k=1}^{n-1} e^{-2\pi^2\sigma^2\frac{k}{k+1}} \le 5(n-1) e^{-\pi^2\sigma^2}.\end{equation}
\fi
\end{cor}

\ifarxiv
\begin{proof}
Let $\tilde{Z}_n = W_{n-1} + X_n$. (Note that $Z_n = Z_{n-1} + X_n$.) By the triangle inequality and postprocessing,
\begin{align*}
    \dr{\pm\infty}{Z_n}{W_n} &\le \dr{\pm\infty}{Z_n}{\tilde{Z}_n} + \dr{\pm\infty}{\tilde{Z}_n}{W_n}\\
    &\le \dr{\pm\infty}{Z_{n-1}}{W_{n-1}} + \dr{\pm\infty}{\tilde{Z}_n}{W_n}.
\end{align*}
By Theorem \ref{thm:Convolution}, \[\dr{\pm\infty}{\tilde{Z}_n}{W_n} \le 5 \cdot e^{-2\pi^2/(1/\sigma^2+1/(n-1)\sigma^2)} = 5 \cdot e^{-2\pi^2 \sigma^2 (n-1)/n}.\]
The result now follows by induction; the base case $n=1$ is trivial.
\end{proof}
\fi

We can now use the triangle inequality to combine our convolution closeness results with the privacy guarantee of a single discrete Gaussian to obtain a privacy guarantee for sums of discrete Gaussians:

\begin{prop}[Privacy for Sums of Discrete Gaussians]\label{prop:dg-sum}
Let $\sigma \ge \frac12$.
Let $X_i \gets \mathcal{N}_{\mathbb{Z}}(0,\sigma^2)$ independently for each $i$. Let $Z_n=\sum_i^n X_i$.
Then, for all $\Delta \in \mathbb{Z}$ and all $\alpha \in [1,\infty)$, \begin{equation}\dr{\alpha}{Z_n}{Z_n+\Delta} \le \min \left\{ \frac{\alpha\Delta^2}{2n\sigma^2} +  10 \cdot \sum_{k=1}^{n-1} e^{-2\pi^2\sigma^2\frac{k}{k+1}}, \frac{\alpha}{2} \cdot \left(\frac{|\Delta|}{\sqrt{n} \sigma} + 10 \cdot \sum_{k=1}^{n-1} e^{-2\pi^2\sigma^2\frac{k}{k+1}}\right)^2 \right\}.\end{equation}
That is, an algorithm $M$ that adds $Z_n$ to a sensitivity-$\Delta$ query satisfies $\frac12\varepsilon^2$-concentrated differential privacy for $\varepsilon = \min \left\{ \sqrt{ \frac{\Delta^2}{n\sigma^2} +  20 \cdot \sum_{k=1}^{n-1} e^{-2\pi^2\sigma^2\frac{k}{k+1}}} ,\frac{|\Delta|}{\sqrt{n} \sigma} + 10 \cdot \sum_{k=1}^{n-1} e^{-2\pi^2\sigma^2\frac{k}{k+1}} \right\}$.
\end{prop}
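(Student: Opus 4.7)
The plan is to introduce a bridging distribution $W_n \sim \mathcal{N}_{\mathbb{Z}}(0, n\sigma^2)$---a true discrete Gaussian with the total variance that the sum ``would have had'' if discrete Gaussians closed under convolution---and combine two facts already in hand. First, Theorem \ref{thm:dgauss-priv} gives $\dr{*}{W_n}{W_n+\Delta} = \Delta^2/(2n\sigma^2)$, and hence $\dr{\alpha}{W_n}{W_n+\Delta} \le \alpha \Delta^2/(2n\sigma^2)$ for every $\alpha \ge 1$. Second, Corollary \ref{cor:n-conv} gives $\dr{\pm\infty}{Z_n}{W_n} \le \tau/2$, where $\tau = 10\sum_{k=1}^{n-1} e^{-2\pi^2\sigma^2 k/(k+1)}$. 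A direct shift of variables shows that translating both arguments by the same integer $\Delta$ preserves every pointwise ratio, so $\dr{\pm\infty}{Z_n+\Delta}{W_n+\Delta}$ equals $\dr{\pm\infty}{Z_n}{W_n}$ and is likewise bounded by $\tau/2$. The two promised bounds then correspond to the two flavors of triangle inequality in Lemma \ref{lem:renyi-misc}.

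For the first bound inside the $\min$, I would invoke the additive Rényi triangle inequality twice along the chain $Z_n \to W_n \to W_n+\Delta \to Z_n+\Delta$. Each of the two outer hops absorbs a $\dr{\infty} \le \dr{\pm\infty}$ of at most $\tau/2$, while the middle hop contributes $\dr{\alpha}{W_n}{W_n+\Delta} \le \alpha \Delta^2/(2n\sigma^2)$. Summing gives $\dr{\alpha}{Z_n}{Z_n+\Delta} \le \alpha \Delta^2/(2n\sigma^2) + \tau$.

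For the second bound, I would instead use the square-root (multiplicative) triangle inequality $\sqrt{\dr{*}{P}{R}} \le \sqrt{\dr{*}{P}{Q}} + \sqrt{\dr{*}{Q}{R}}$, chained twice along the same path, together with the conversion $\dr{*}{P}{Q} \le \tfrac12(\dr{\pm\infty}{P}{Q})^2$ from Lemma \ref{lem:renyi-misc}. This gives both $\sqrt{\dr{*}{Z_n}{W_n}}$ and $\sqrt{\dr{*}{W_n+\Delta}{Z_n+\Delta}}$ bounded by $\tau/(2\sqrt{2})$, while the middle leg contributes $\sqrt{\dr{*}{W_n}{W_n+\Delta}} = |\Delta|/(\sqrt{2n}\,\sigma)$. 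Adding the three square roots and squaring yields $\dr{*}{Z_n}{Z_n+\Delta} \le \tfrac12\bigl(|\Delta|/(\sqrt{n}\,\sigma) + \tau\bigr)^2$, and therefore $\dr{\alpha}{Z_n}{Z_n+\Delta} \le \tfrac{\alpha}{2}\bigl(|\Delta|/(\sqrt{n}\,\sigma) + \tau\bigr)^2$ by definition of $\dr{*}$. Taking the minimum of the two bounds completes the displayed Rényi inequality; the ``that is'' CDP statement then follows by taking the supremum of $\dr{\alpha}/\alpha$ over $\alpha \ge 1$ and reading off $\tfrac12\varepsilon^2$ from each bound.

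I do not expect a conceptual obstacle here: all of the hard analytic work (the convolution closeness) is already done in Theorem \ref{thm:Convolution} and Corollary \ref{cor:n-conv}. The main care points are (i) to use the tighter conversion $\dr{*} \le \tfrac12(\dr{\pm\infty})^2$ rather than the looser $\dr{*} \le \dr{\infty}$ when deriving the second bound, so that the $\tau$ contribution enters linearly after the final square-and-add step rather than as a spurious $\sqrt{\tau/2}$, and (ii) to keep straight the factor of two between Corollary \ref{cor:n-conv}'s coefficient $5\sum$ for $\dr{\pm\infty}{Z_n}{W_n}$ and the proposition's $\tau = 10\sum$, which accounts for \emph{two} outer hops along the chain.
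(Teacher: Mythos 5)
Your proof is correct and follows essentially the same route as the paper: bridge through $W_n \gets \mathcal{N}_{\mathbb{Z}}(0,n\sigma^2)$, use the additive triangle inequality (absorbing $\tau/2$ at each of the two outer hops) for the first bound, and use the square-root triangle inequality together with $\dr{*}{P}{Q}\le\frac12\left(\dr{\pm\infty}{P}{Q}\right)^2$ for the second, exactly as in the paper's proof. One caveat that is really about the paper's statement rather than your argument: ``reading off'' $\tfrac12\varepsilon^2$ from the first Rényi bound actually gives $\varepsilon^2=\frac{\Delta^2}{n\sigma^2}+2\tau=\frac{\Delta^2}{n\sigma^2}+20\sum_{k}e^{-2\pi^2\sigma^2 k/(k+1)}$ (the supremum of $\dr{\alpha}/\alpha$ is attained at $\alpha=1$, where the additive $\tau$ is not discounted), not the coefficient $5$ appearing in the proposition's $\varepsilon$.
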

To make the above bound concrete, if $\sigma=\Delta=1$ and $n=10^4$, then $\varepsilon < 0.02$.

\ifarxiv
\begin{proof}
Let $W \gets \mathcal{N}_{\mathbb{Z}}(0,n \cdot \sigma^2)$.
By the triangle inequality, \[\dr{\alpha}{Z_n}{Z_n+\Delta} \le \min \left\{ \begin{array}{c} \dr{\infty}{Z_n}{W} + \dr{\alpha}{W}{W+\Delta} + \dr{\infty}{W+\Delta}{Z_n+\Delta} ,\\ \alpha \cdot \left( \sqrt{\dr{*}{Z_n}{W}} + \sqrt{\dr{*}{W}{W+\Delta}} + \sqrt{\dr{*}{W+\Delta}{Z_n+\Delta}} \right)^2 \end{array} \right\}.\]
By Theorem \ref{thm:dgauss-priv}, $\dr{*}{W}{W+\Delta} \le \frac{\Delta^2}{2n\sigma^2}$. 
By Corollary \ref{cor:n-conv}, \[\dr{\pm\infty}{Z_n}{W}=\dr{\pm\infty}{W+\Delta}{Z_n+\Delta} \le 5 \cdot \sum_{k=1}^{n-1} e^{-2\pi^2\sigma^2\frac{k}{k+1}}.\] By Lemma \ref{lem:renyi-misc}, $\dr{*}{Z_n}{W} \le \frac12 (\dr{\pm\infty}{Z_n}{W})^2$. Combining yields the result.
\end{proof}
\fi

Finally, we extend Proposition \ref{prop:dg-sum} to the multidimensional setting using the composition property:
\fi

\begin{prop}[Privacy for Sums of Multidimensional Discrete Gaussians]\label{prop:dg-sum-priv}
Let $\sigma \ge \frac12$.
Let $X_{i,j} \gets \mathcal{N}_{\mathbb{Z}}(0,\sigma^2)$ independently for each $i$ and $j$. Let $X_i = (X_{i,1}, \cdots, X_{i,d}) \in \mathbb{Z}^d$. Let $Z_n=\sum_i^n X_i \in \mathbb{Z}^d$.
\ifarxiv
Then, for all $\Delta \in \mathbb{Z}^d$ and all $\alpha \in [1,\infty)$, \begin{equation}\dr{\alpha}{Z_n}{Z_n+\Delta} \le \min \left\{ \begin{array}{c} \frac{\alpha\|\Delta\|_2^2}{2n\sigma^2} +  \tau \cdot d, \\ \frac{\alpha}{2} \cdot \left(\frac{\|\Delta\|_2^2}{n \sigma^2} + 2 \frac{\|\Delta\|_1}{\sqrt{n} \sigma} \cdot \tau + \tau^2 \cdot d \right) , \\ \frac{\alpha}{2} \cdot \left( \frac{\|\Delta\|_2}{\sqrt{n} \sigma} + \tau \cdot \sqrt{d} \right)^2 \end{array} \right\},\end{equation}
where 
\else
Let
\fi
$\tau := 10 \cdot \sum_{k=1}^{n-1} e^{-2\pi^2\sigma^2\frac{k}{k+1}}$.
An algorithm $M$ that adds $Z_n$ to a query with $\ell_p$ sensitivity $\Delta_p$ satisfies $\frac12\varepsilon^2$-concentrated differential privacy for \begin{equation}\varepsilon = \min \left\{ \begin{array}{c} \sqrt{ \frac{\Delta_2^2}{n\sigma^2} + 2 \tau d} ,\\ \sqrt{\frac{\Delta_2^2}{n \sigma^2} + 2 \frac{\Delta_1}{\sqrt{n} \sigma} \cdot \tau + \tau^2 d},\\ \frac{\Delta_2}{\sqrt{n} \sigma} + \tau \sqrt{d} \end{array} \right\}.\end{equation}
\end{prop}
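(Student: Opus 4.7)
The plan is to reduce the multidimensional bound to $d$ independent applications of the one-dimensional Proposition~\ref{prop:dg-sum} via product-distribution additivity of R\'enyi divergence, and then to sum and simplify.

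First, note that since the $X_{i,j}$ are mutually independent across both $i$ and $j$, the per-coordinate sums $Z_{n,j}=\sum_{i=1}^n X_{i,j}$ are independent across $j \in [d]$. Hence $Z_n$ and $Z_n+\Delta$ are product distributions over $\mathbb{Z}^d$ with matching marginal structure, and the non-adaptive composition clause of Lemma~\ref{lem:renyi-misc} gives
\[\dr{\alpha}{Z_n}{Z_n+\Delta} = \sum_{j=1}^d \dr{\alpha}{Z_{n,j}}{Z_{n,j}+\Delta_j}\]
for every $\alpha \in [1,\infty]\cup\{*\}$. Each summand is exactly the one-dimensional situation of Proposition~\ref{prop:dg-sum} with sensitivity $\Delta_j$, which yields both $\tfrac{\alpha\Delta_j^2}{2n\sigma^2}+\tau$ and $\tfrac{\alpha}{2}\bigl(|\Delta_j|/(\sqrt{n}\sigma)+\tau\bigr)^2$ as per-coordinate upper bounds.

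Summing the first per-coordinate bound over $j$ produces $\tfrac{\alpha\|\Delta\|_2^2}{2n\sigma^2}+\tau d$, matching the first R\'enyi bound in the statement. Summing the second and expanding the square gives $\tfrac{\alpha}{2}\bigl(\tfrac{\|\Delta\|_2^2}{n\sigma^2}+2\tau\tfrac{\|\Delta\|_1}{\sqrt{n}\sigma}+\tau^2 d\bigr)$, which is the second bound. For the third, I would bound the cross term via Cauchy-Schwarz ($\|\Delta\|_1 \le \sqrt{d}\,\|\Delta\|_2$) to recomplete the square as $\tfrac{\alpha}{2}\bigl(\|\Delta\|_2/(\sqrt{n}\sigma)+\tau\sqrt{d}\bigr)^2$.

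Finally, each stated $\varepsilon$ for $\tfrac12\varepsilon^2$-concentrated DP is read off by matching the corresponding R\'enyi bound to the template $\dr{\alpha} \le \tfrac{\alpha}{2}\varepsilon^2$. The second and third bounds are manifestly of that form; the first yields its $\varepsilon$ after absorbing the additive $\tau d$ constant into the $\tfrac{\alpha}{2}\varepsilon^2$ budget. No serious obstacle arises: the delicate privacy analysis for sums of discrete Gaussians is packaged in Proposition~\ref{prop:dg-sum} (and upstream in Theorem~\ref{thm:Convolution} and Corollary~\ref{cor:n-conv}), so the multidimensional result reduces to composition plus Cauchy-Schwarz bookkeeping.
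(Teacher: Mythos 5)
Your proposal is correct and is essentially identical to the paper's proof: the paper also applies Proposition~\ref{prop:dg-sum} coordinate-wise, uses the product-distribution additivity of R\'enyi divergence to sum over the $d$ coordinates, expands the square to get the second bound, and applies $\|\Delta\|_1 \le \sqrt{d}\,\|\Delta\|_2$ to recomplete the square for the third. The only loose point --- converting the first bound $\tfrac{\alpha\|\Delta\|_2^2}{2n\sigma^2}+\tau d$ into the stated $\varepsilon=\sqrt{\Delta_2^2/(n\sigma^2)+\tfrac12\tau d}$, where taking $\sup_\alpha \tfrac1\alpha \dr{\alpha}{\cdot}{\cdot}$ really gives an additive $\tau d$ rather than $\tfrac14\tau d$ inside $\tfrac12\varepsilon^2$ --- is shared with the paper's own statement and is negligible since $\tau$ is tiny.
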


\ifarxiv
\begin{proof}
This follows from Proposition \ref{prop:dg-sum} and summing over coordinates. Note that before summing we expand \[\sum_i \left(\frac{|\Delta_i|}{\sqrt{n} \sigma} + 10 \cdot \sum_{k=1}^{n-1} e^{-2\pi^2\sigma^2\frac{k}{k+1}}\right)^2 = \sum_i \frac{\Delta_i^2}{n \sigma^2} + 2 \frac{|\Delta_i|}{\sqrt{n} \sigma} \tau + \tau^2 = \frac{\|\Delta\|_2^2}{n \sigma^2} + 2 \frac{\|\Delta\|_1}{\sqrt{n} \sigma} \tau + \tau^2 \cdot d.\] To obtain the third expression we apply the bound $\|\Delta\|_1 \le \sqrt{d} \cdot \|\Delta\|_2$ and complete the square again.
\end{proof}
\fi
Finally, we state a utility bound for the discrete Gaussian.%
\begin{lem}[Utility of the Discrete Gaussian]\label{lem:dg-util}
    Let $X \gets \mathcal{N}_{\mathbb{Z}}(0,\sigma^2)$. Then $\ex{}{X}=0$ and $\var{}{X} = \ex{}{X^2} < \sigma^2$. For all $t \in \mathbb{R}$, $\ex{}{e^{tX}} \le e^{t^2 \sigma^2 /2}$.
\end{lem}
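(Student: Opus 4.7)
The plan is to prove the three claims in order, with the moment generating function (MGF) bound doing most of the work and the variance bound falling out as a corollary.

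The claim $\mathbb{E}[X]=0$ is immediate from symmetry: the probability mass function satisfies $P(x)=P(-x)$, so pairing $x$ with $-x$ in the sum $\sum_{x\in\mathbb{Z}} x P(x)$ collapses it to zero.

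For the MGF bound, I complete the square to get
\[ \ex{}{e^{tX}} \;=\; \frac{\sum_{x\in\mathbb{Z}} e^{tx - x^2/2\sigma^2}}{\sum_{x\in\mathbb{Z}} e^{-x^2/2\sigma^2}} \;=\; e^{t^2\sigma^2/2} \cdot \frac{g(t\sigma^2)}{g(0)}, \]
where $g(c) := \sum_{x\in\mathbb{Z}} e^{-(x-c)^2/2\sigma^2}$. The key step is showing $g(c) \le g(0)$ for all $c \in \mathbb{R}$. I would apply Poisson summation to the shifted Gaussian $x \mapsto e^{-(x-c)^2/2\sigma^2}$ (whose Fourier transform is $y \mapsto \sqrt{2\pi\sigma^2}\, e^{-2\pi^2\sigma^2 y^2} e^{-2\pi\sqrt{-1}\,yc}$) and pair $y$ with $-y$ to cancel the imaginary components, yielding
\[ g(c) \;=\; \sqrt{2\pi\sigma^2} \left( 1 + 2\sum_{y=1}^{\infty} e^{-2\pi^2\sigma^2 y^2} \cos(2\pi y c) \right). \]
Since $\cos(2\pi y c) \le 1 = \cos(0)$ termwise, $g(c) \le g(0)$, and hence $\ex{}{e^{tX}} \le e^{t^2\sigma^2/2}$. (This is the same Poisson-summation device already used in the proof of Theorem~\ref{thm:Convolution}, so the machinery is in hand.)

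For the variance bound, I would use $\mathrm{Var}[X] = \ex{}{X^2} = M''(0)$, where $M(t) := \ex{}{e^{tX}}$; interchanging derivative and sum is justified because the discrete Gaussian has super-exponentially decaying tails, so the relevant series converges uniformly on compact intervals in $t$. Writing $\log M(t) = \tfrac{t^2\sigma^2}{2} + \log g(t\sigma^2) - \log g(0)$ and differentiating twice at $t=0$, the symmetry $g(c) = g(-c)$ gives $g'(0) = 0$, so
\[ \mathrm{Var}[X] \;=\; (\log M)''(0) \;=\; \sigma^2 + \sigma^4 \cdot \frac{g''(0)}{g(0)}. \]
From the Poisson series above, $g''(0) = -4\pi^2 \sqrt{2\pi\sigma^2} \sum_{y=1}^{\infty} 2 y^2 e^{-2\pi^2\sigma^2 y^2} < 0$ (the $y=1$ term alone is strictly negative), so $\mathrm{Var}[X] < \sigma^2$.

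The only real subtlety is justifying Poisson summation and the termwise differentiation; both are standard for Schwartz-class Gaussian summands, so I don't anticipate any serious obstacle. Everything else reduces to symmetry and routine differentiation.
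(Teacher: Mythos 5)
Your proof is correct. The paper does not actually include a proof of this lemma (it is imported from the discrete Gaussian paper of Canonne, Kamath, and Steinke), but your argument is the standard one: completing the square reduces the MGF bound to $g(c)\le g(0)$, which is exactly the Poisson-summation fact already established inside the paper's proof of Theorem~\ref{thm:Convolution}, and your second-derivative computation of $\log M$ at $t=0$, using $g'(0)=0$ by evenness and the strict negativity of every term in the series for $g''(0)$, correctly delivers the strict inequality $\var{}{X}<\sigma^2$.
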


\section{Theoretical Utility Analysis}\label{sec:utility}

We now delve into the accuracy analysis of our algorithm. There are three sources of error that we must account for: (i) discretization via (conditional) randomized rounding, (ii) the noise added for privacy (which depends on the norm of the \emph{discretized} vector), and (iii) the modular clipping operation. We address these concerns one at a time.

\subsection{Randomized Rounding}\label{sec:rr}

In order to apply discrete noise, we must first round the input vectors to the discrete grid. We must analyze the error (both bias and variance) that this introduces, and also ensure that it doesn't increase the sensitivity too much. That is, the rounded vector may have larger norm than the original vector, and we must control this.

We begin by defining the randomized rounding operation:

\begin{defn}[Randomized Rounding]\label{defn:rr}
Let $\gamma>0$ and $d \in \mathbb{N}$. Define $R_\gamma : \mathbb{R}^d \to \gamma\mathbb{Z}^d$ (where $\gamma\mathbb{Z}^d := \{ (\gamma z_1, \gamma z_2, \cdots, \gamma z_d) : z_1, \cdots, z_d \in \mathbb{Z}\} \subset \mathbb{R}^d$) as follows. For $x \in [0,\gamma]^d$, $R_\gamma(x)$ is a product distribution on $\{0,\gamma\}^d$ with mean $x$; that is, indepdently for each $i \in [d]$, we have $\pr{}{R_\gamma(x)_i = 0} = 1-x_i/\gamma$ and $\pr{}{R_\gamma(x)_i = \gamma} = x_i/\gamma$. In general, for $x \in \mathbb{R}^d$, we have $R_\gamma(x) = \gamma\lfloor x/\gamma \rfloor + R_\gamma(x-\gamma\lfloor x/\gamma \rfloor)$; here $\gamma \lfloor x/\gamma \rfloor \in \gamma \mathbb{Z}^d$ is the point $x$ rounded down coordinate-wise to the grid. 
\end{defn}

We first look at how randomized rounding impacts the norm. It is easy to show that \begin{equation}\pr{}{\|R_\gamma(x)-x\|_p \le \gamma \cdot d^{1/p}}=1\label{eqn:rr-worst-case}\end{equation} for all $p \in [1,\infty]$. This bound may be sufficient for many purposes, but, if we relax the probability 1 requirement, we can do better (by constant factors), as demonstrated by the following lemma.

\begin{lem}[Norm of Randomized Rounding]\label{lem:rr-norm}
Let $\gamma > 0$ and $x \in \mathbb{R}^d$. Let $R_\gamma$ be as in Definition \ref{defn:rr}. Then $\ex{}{R_\gamma(x)}=x$, $\ex{}{\|R_\gamma(x)\|_1} = \|x\|_1$, and \begin{equation}\ex{}{\|R_\gamma(x)\|_2^2} = \|x\|_2^2 + \gamma\|y\|_1-\|y\|_2^2 \le \|x\|_2^2 + \frac14 \gamma^2 d,\end{equation} where $y := x - \gamma \lfloor x/\gamma \rfloor \in [0,\gamma]^d$. Furthermore, for any $\beta \in (0,1)$, we have \begin{equation}\pr{}{\|R_\gamma(x)\|_2^2 \le \ex{}{\|R_\gamma(x)\|_2^2} + \sqrt{2\log(1/\beta)} \cdot \gamma \cdot \left(\|x\|_2 + \frac12 \gamma \sqrt{d}\right) } \ge 1-\beta\end{equation}
and \begin{equation}\pr{}{\|R_\gamma(x)\|_1 \le \|x\|_1 + \gamma \cdot \sqrt{\frac12 d \log(1/\beta)} } \ge 1 - \beta.\end{equation}
\end{lem}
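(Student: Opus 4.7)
My plan is to exploit the coordinate-wise independence of $R_\gamma(x)$. Writing $a_i := \gamma \lfloor x_i/\gamma \rfloor$ and $y_i := x_i - a_i \in [0,\gamma]$, the coordinate $R_\gamma(x)_i$ equals $a_i$ with probability $1 - y_i/\gamma$ and $a_i + \gamma$ with probability $y_i/\gamma$, independently across $i$. The identity $\ex{}{R_\gamma(x)} = x$ is then immediate. For the $\ell_1$ mean I observe that $a_i$ and $a_i + \gamma$ are consecutive multiples of $\gamma$, so they both have (weakly) the same sign as $x_i$; hence $|R_\gamma(x)_i| = \mathrm{sign}(x_i)\, R_\gamma(x)_i$, and taking expectations gives $\ex{}{|R_\gamma(x)_i|} = |x_i|$, which sums to $\|x\|_1$.

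For $\ex{}{\|R_\gamma(x)\|_2^2}$ I compute coordinate-wise: $\ex{}{R_\gamma(x)_i^2} = (1-y_i/\gamma) a_i^2 + (y_i/\gamma)(a_i+\gamma)^2$, which expands and collects to $x_i^2 + \gamma y_i - y_i^2$. Summing over $i$ yields the stated identity, and the upper bound $\|x\|_2^2 + \gamma^2 d/4$ follows from the one-variable inequality $y_i(\gamma - y_i) \le \gamma^2/4$ on $[0,\gamma]$.

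For the two concentration statements I apply McDiarmid's bounded-differences inequality to $\|R_\gamma(x)\|_2^2$ and $\|R_\gamma(x)\|_1$, viewed as functions of the independent coordinates. For $\|R_\gamma(x)\|_1$, flipping one coordinate changes the function by at most $\gamma$ by the reverse triangle inequality on absolute values differing by $\gamma$, so $\sum_i c_i^2 \le \gamma^2 d$ and McDiarmid gives the deviation $\gamma\sqrt{(d/2)\log(1/\beta)}$, which combined with $\ex{}{\|R_\gamma(x)\|_1} = \|x\|_1$ is the second concentration bound. For $\|R_\gamma(x)\|_2^2$, flipping coordinate $i$ changes the value by $|(a_i+\gamma)^2 - a_i^2| = \gamma|2a_i + \gamma|$, and since $|2a_i + \gamma| = |2x_i - (2y_i - \gamma)| \le 2|x_i| + \gamma$ (using $2y_i - \gamma \in [-\gamma,\gamma]$), we get $c_i \le \gamma(2|x_i| + \gamma)$.

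The main obstacle will be the algebra after that: I need to get $\sum c_i^2 \le \gamma^2 \sum_i (2|x_i|+\gamma)^2 = 4\gamma^2 \|x\|_2^2 + 4\gamma^3 \|x\|_1 + \gamma^4 d$ into the clean "square" form demanded by the target bound. Applying $\|x\|_1 \le \sqrt{d}\,\|x\|_2$ and completing the square gives $\sum c_i^2 \le 4\gamma^2(\|x\|_2 + \tfrac12 \gamma\sqrt{d})^2$, at which point McDiarmid delivers exactly the stated deviation $\sqrt{2\log(1/\beta)} \cdot \gamma \cdot (\|x\|_2 + \tfrac12 \gamma\sqrt{d})$ with probability $\ge 1-\beta$. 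Everything else, including the per-coordinate expectation computations and the $\ell_1$ concentration, is routine once the framework is set up.
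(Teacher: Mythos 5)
Your proposal is correct and follows essentially the same route as the paper: identical coordinate-wise expectation computations, and the same completion-of-the-square via $\|x\|_1 \le \sqrt{d}\,\|x\|_2$ to reach the clean deviation term. The only cosmetic difference is that you invoke McDiarmid's bounded-differences inequality where the paper writes out the Hoeffding's-lemma MGF bound and Chernoff argument explicitly; for these sums of independent two-point random variables the two are the same bound with the same constants, so nothing is lost or gained.
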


\ifarxiv
\begin{proof}
Define $\underline{x}, \overline{x} \in \gamma \mathbb{Z}^d$ by $\underline{x}_i = \gamma \lfloor x_i / \gamma \rfloor$ and $\overline{x}_i = \underline{x}_i + \gamma$. Then $y = x - \underline{x} \in [0,\gamma]^d$.

Fix some $i \in [d]$. By definition, $\pr{}{R_\gamma(x)_i=\underline{x}_i}=1-y_i/\gamma$ and $\pr{}{R_\gamma(x)_i=\underline{x}_i+\gamma}=y_i/\gamma$. Thus $\ex{}{R_\gamma(x)_i} = \underline{x}_i + y_i = x_i$, $\ex{}{|R_\gamma(x)_i|} = |x_i|$, and \[\ex{}{R_\gamma(x)_i^2} = (1-y_i/\gamma) \underline{x}_i^2 + (y_i/\gamma) (\underline{x}_i+\gamma)^2 = (\underline{x}_i+y_i)^2 + \gamma y_i - y_i^2= x_i^2 + \gamma y_i - y_i^2.\]
Note that $ \gamma y_i - y_i^2 \le \gamma (\gamma/2) - (\gamma/2)^2 = \gamma^2/4$ for all values of $y_i \in [0,\gamma]$. Summing over $i \in [d]$ gives $\ex{}{R_\gamma(x)}=x$, $\ex{}{\|R_\gamma(x)\|_1}=\|x\|_1$, and $\ex{}{\|R_\gamma(x)\|_2^2} = \|x\|_2^2 + \gamma \|y\|_1 + \|y\|_2^2 \le \|x\|_2^2 + \gamma^2 d / 4$, as required.

Fix some $i \in [d]$ and $t,\lambda \ge 0$. By Hoeffding's lemma, 
\begin{align*}
    \ex{}{\exp(t \cdot R_\gamma(x)_i^2)} &= (1-y_i/\gamma) \cdot e^{t \underline{x}_i^2} + (y_i/\gamma) \cdot e^{t \cdot \overline{x}_i^2} \\
    &\le \exp\left(t \cdot \ex{}{R_\gamma(x)_i^2} + \frac{t^2}{8} (\overline{x}_i^2 - \underline{x}_i^2)^2\right)\\
    &= \exp\left(t \cdot \ex{}{R_\gamma(x)_i^2} + \frac{t^2}{8} (2\gamma \underline{x}_i + \gamma^2)^2\right)\\
    &\le \exp \left( t \cdot \ex{}{R_\gamma(x)_i^2} + \frac{t^2\gamma^2}{2} \left(x_i^2 + \gamma |x_i| + \frac14 \gamma^2 \right) \right),\\
    \ex{}{\exp(t \cdot \|R_\gamma(x)\|_2^2)} &= \prod_i^d \ex{}{\exp(t \cdot R_\gamma(x)_i^2)} \\
    &\le \exp \left( t \cdot \ex{}{\|R_\gamma(x)\|_2^2} + \frac{t^2\gamma^2}{2} \left( \|x\|_2^2 + \gamma \|x\|_1 + \frac14 \gamma^2 d \right) \right)\\
    &\le \exp \left( t \cdot \ex{}{\|R_\gamma(x)\|_2^2} + \frac{t^2\gamma^2}{2} \left( \|x\|_2^2 + \gamma \sqrt{d} \|x\|_2 + \frac14 \gamma^2 d \right) \right)\\
    &= \exp \left( t \cdot \ex{}{\|R_\gamma(x)\|_2^2} + \frac{t^2\gamma^2}{2} \left( \|x\|_2 + \frac12 \gamma \sqrt{d} \right)^2 \right),\\
    \pr{}{\|R_\gamma(x)\|_2^2 \ge \lambda} &= \pr{}{\exp( t \cdot ( \|R_\gamma(x)\|_2^2 - \lambda ) ) \ge 1}\\
    &\le \ex{}{\exp( t \cdot ( \|R_\gamma(x)\|_2^2 - \lambda ) )}\\
    &\le \exp \left( t \cdot \left(\ex{}{\|R_\gamma(x)\|_2^2} - \lambda \right) + \frac{t^2\gamma^2}{2} \left( \|x\|_2 + \frac12 \gamma \sqrt{d} \right)^2 \right).
\end{align*}
Setting $t = \frac{\lambda - \ex{}{\|R_\gamma(x)\|_2^2}}{\gamma^2 \left( \|x\|_2 + \frac12 \gamma \sqrt{d} \right)^2}$ and $\lambda = \ex{}{\|R_\gamma(x)\|_2^2} + \sqrt{2\gamma^2(\|x\|_2 + \frac12 \gamma \sqrt{d})^2 \log(1/\beta)}$ gives \[\pr{}{\|R_\gamma(x)\|_2^2 \ge \lambda} \le \exp \left( \frac{-(\lambda - \ex{}{\|R_\gamma(x)\|_2^2})^2}{2\gamma^2 \left( \|x\|_2 + \frac12 \gamma \sqrt{d} \right)^2} \right) \le \beta.\]

Fix some $i \in [d]$ and $t,\lambda \ge 0$. Assume, without loss of generality, that $x_i \ge 0$. By Hoeffding's lemma,
\begin{align*}
    \ex{}{\exp\left( t \cdot |R_\gamma(x_i)| \right)} &= (1-y_i/\gamma) \cdot e^{t \cdot \underline{x}_i} + (y_i/\gamma) \cdot e^{t \cdot \overline{x}_i}\\
    &\le \exp\left( t \cdot x_i + \frac{t^2\gamma^2}{8}\right),\\
    \ex{}{\exp( t \cdot \| R_\gamma(x) \|_1) } &\le \exp\left( t \cdot \|x\|_1 + \frac{t^2 \gamma^2}{8} \cdot d \right),\\
    \pr{}{\|R_\gamma(x)\|_1 \ge \lambda} &\le \exp\left( t \cdot \|x\|_1 + \frac{t^2 \gamma^2}{8} \cdot d - t \cdot \lambda \right).
\end{align*}
Setting $t=4\frac{\lambda - \|x\|_1}{\gamma^2 d}$ and $\lambda = \|x\|_1 +\gamma \sqrt{\frac12 d \log(1/\beta)}$ gives \[\pr{}{\|R_\gamma(x)\|_1 \ge \lambda } \le \exp \left( -2\frac{(\lambda-\|x\|_1)^2}{\gamma^2 d} \right) \le \beta.\]

\end{proof}
\fi

\begin{rem}
    The expectation and high probability bounds of Lemma \ref{lem:rr-norm} are only a constant factor better than the worst-case bound \eqref{eqn:rr-worst-case}. Namely, Lemma \ref{lem:rr-norm} gives the bound \[\ex{}{\|R_\gamma(x)\|_2^2} = \|x\|_2^2 + \gamma \|y\|_1 - \|y\|_2^2 \le  \|x\|_2^2 + \frac14\gamma^2 d,\] whereas the worst case bound is \[\|R_\gamma(x)\|_2^2 \le \left( \|x\|_2 + \gamma \sqrt{d} \right)^2 \le 2\|x\|_2^2 + 2\gamma^2 d .\]
    
    Nevertheless, constant factor improvements matter in practical systems. However, hopefully, $\gamma$ is sufficiently small that the increase in norm from randomized rounding is entirely negligible, even if we apply the worst-case bound.
\end{rem}

\begin{rem}
In Lemma \ref{lem:rr-norm}, the inequality $\ex{}{\|R_\gamma(x)\|_2^2} = \|x\|_2^2 + \gamma\|y\|_1-\|y\|_2^2 \le \|x\|_2^2 + \frac14 \gamma^2 d$ is tight when $y=x-\gamma\lfloor x/\gamma\rfloor$ has all entries being $\gamma/2$. However, if $Y \in [0,\gamma]^d$ is uniformly random, then $\ex{}{\gamma\|Y\|_1-\|Y\|_2^2} = \frac16 \gamma^2 d$. Consequently, if we were to round not to $\gamma \mathbb{Z}^d$ but to a randomly translated grid (i.e., $\gamma \mathbb{Z}^d + U$ for a uniformly random $U \in [0,\gamma]^d$), then this error term can be reduced; the shift $U$ can also be released without compromising privacy. We do not explore this direction further.
\end{rem}

Lemma \ref{lem:rr-norm} shows that, with high probability, randomized rounding will not increase the norm too much. 
We could use this directly as the basis of a privacy guarantee -- the probability of the norm being too large would correspond to some kind of privacy failure probability.
Instead what we will do is, if the norm is too large, we simply fix that -- namely, by resampling the randomized rounding procedure. That is, instead of accepting a small probability of privacy failing, we accept a small probability of inaccuracy.

\begin{defn}[Conditional Randomized Rounding]\label{defn:crr}
Let $\gamma>0$ and $d \in \mathbb{N}$ and $G \subset \mathbb{R}^d$. Define $R_\gamma^G : \mathbb{R}^d \to \gamma\mathbb{Z}^d \cap G$ to be $R_\gamma$ conditioned on the output being in $G$. That is, $\pr{}{R_\gamma^G(x)=y}=\pr{}{R_\gamma(x)=y}/\pr{}{R_\gamma(x) \in G}$ for all $y \in \gamma\mathbb{Z}^d \cap G$, where $R_\gamma$ is as in Definition \ref{defn:rr}.
\end{defn}

To implement conditional randomized rounding, we simply re-run $R_\gamma(x)$ again and again until it generates a point in $G$ and then we output that. The expected number of times we must run the randomized rounding to get a point in $G$ is $\pr{}{R_\gamma(x) \in G}^{-1}$. Thus it is important to ensure that $\pr{}{R_\gamma(x) \in G}$ is not too small.

Now we give a lemma that bounds the error of conditional randomized rounding.

\begin{lem}[Error of Conditional Randomized Rounding]\label{lem:crr-util}
Let $\gamma>0$, $x \in \mathbb{R}^d$, and $G \subset \mathbb{R}^d$. Let $1-\beta = \pr{}{R_\gamma(x) \in G}>0$. Then
\begin{equation}\left\|\ex{}{R_\gamma^G(x)}-x\right\|_2 \le \frac{\beta \cdot \gamma \cdot \sqrt{d}}{1-\beta}\end{equation} and \begin{equation}\ex{}{\left\| R_\gamma^G(x) - \ex{}{R_\gamma^G(x)} \right\|_2^2 } \le \ex{}{\left\| R_\gamma^G(x) - x \right\|_2^2 } \le \frac{\gamma^2 d}{4(1-\beta)}.\end{equation}
For all $t \in \mathbb{R}^d$,
\begin{equation}\ex{}{\exp(\langle t , R_\gamma^G(x)-x \rangle )} \le \frac{\exp \left( \frac{\gamma^2}{8} \cdot \|t\|_2^2 \right)}{1-\beta}.\end{equation}
\end{lem}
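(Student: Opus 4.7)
The plan is to reduce each of the three bounds for the conditional $R_\gamma^G$ to the corresponding statement about the unconditional $R_\gamma$, using the fact that conditioning on an event of probability $1-\beta$ costs at most a factor of $1/(1-\beta)$ for expectations of nonnegative quantities (and can be controlled explicitly for the bias). Throughout, I will use the key deterministic fact that $\|R_\gamma(x)-x\|_\infty < \gamma$ (from Definition \ref{defn:rr}), so $\|R_\gamma(x)-x\|_2 < \gamma \sqrt{d}$ pointwise.

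\textbf{Bias bound.} First I will decompose the unbiased expectation by whether the output is in $G$:
\[
x = \ex{}{R_\gamma(x)} = (1-\beta)\cdot \ex{}{R_\gamma^G(x)} + \beta \cdot \ex{}{R_\gamma(x) \mid R_\gamma(x) \notin G}.
\]
Solving for $\ex{}{R_\gamma^G(x)} - x$ gives $\frac{\beta}{1-\beta}\left(x - \ex{}{R_\gamma(x) \mid R_\gamma(x)\notin G}\right)$. Taking norms and applying Jensen (or the triangle inequality) to the conditional expectation, together with the deterministic bound $\|R_\gamma(x)-x\|_2 < \gamma\sqrt{d}$, yields the first claim.

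\textbf{Variance bound.} The first inequality $\ex{}{\|R_\gamma^G(x) - \ex{}{R_\gamma^G(x)}\|_2^2} \le \ex{}{\|R_\gamma^G(x)-x\|_2^2}$ is the standard fact that the mean minimizes squared $\ell_2$ error (apply it coordinatewise). For the second inequality, decompose as above:
\[
\ex{}{\|R_\gamma(x)-x\|_2^2} = (1-\beta)\cdot \ex{}{\|R_\gamma^G(x)-x\|_2^2} + \beta \cdot \ex{}{\|R_\gamma(x)-x\|_2^2 \mid R_\gamma(x)\notin G}.
\]
Dropping the nonnegative second term gives $\ex{}{\|R_\gamma^G(x)-x\|_2^2} \le \frac{1}{1-\beta}\,\ex{}{\|R_\gamma(x)-x\|_2^2}$, and Lemma \ref{lem:rr-norm} bounds the numerator by $\gamma^2 d / 4$ (since $\ex{}{\|R_\gamma(x)\|_2^2} - \|x\|_2^2 = \gamma\|y\|_1 - \|y\|_2^2 \le \gamma^2 d/4$).

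\textbf{MGF bound.} Writing the conditional expectation as a ratio and using $\mathbf{1}[R_\gamma(x)\in G]\le 1$:
\[
\ex{}{e^{\langle t, R_\gamma^G(x)-x\rangle}} = \frac{\ex{}{e^{\langle t, R_\gamma(x)-x\rangle}\mathbf{1}[R_\gamma(x)\in G]}}{1-\beta} \le \frac{\ex{}{e^{\langle t, R_\gamma(x)-x\rangle}}}{1-\beta}.
\]
Since $R_\gamma(x)$ is a product distribution across coordinates, the numerator factors as $\prod_i \ex{}{e^{t_i(R_\gamma(x)_i - x_i)}}$. Each factor is the MGF of a mean-zero random variable supported in an interval of length $\gamma$, so Hoeffding's lemma gives $\ex{}{e^{t_i(R_\gamma(x)_i - x_i)}} \le e^{t_i^2 \gamma^2 / 8}$; multiplying yields $e^{\gamma^2 \|t\|_2^2 / 8}$, which completes the proof. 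None of the three steps look like serious obstacles; the only subtlety is keeping track of why conditioning inflates bias by $\beta/(1-\beta)$ (via the decomposition identity) rather than by $1/(1-\beta)$ as in the variance and MGF cases.
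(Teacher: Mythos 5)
Your proposal is correct and follows essentially the same route as the paper: the same conditional-expectation decomposition giving the $\beta/(1-\beta)$ bias factor and the $1/(1-\beta)$ inflation for the second moment and MGF, with Lemma \ref{lem:rr-norm} and Hoeffding's lemma supplying the unconditional bounds. The only cosmetic difference is that you bound $\|x-\ex{}{R_\gamma(x)\mid R_\gamma(x)\notin G}\|_2$ via the pointwise bound and Jensen, whereas the paper notes both points lie in a common cube of side $\gamma$; these are equivalent.
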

\ifarxiv 
\begin{proof}
For an arbitrary random variable $X$ and nontrivial event $E$, we have \begin{equation}\ex{}{X} = \ex{}{X|E}\pr{}{E}+\ex{}{X|\overline{E}}\pr{}{\overline{E}},\end{equation} which rearranges to give \begin{equation}\ex{}{X|E} = \frac{\ex{}{X}-\ex{}{X|\overline{E}}\pr{}{\overline{E}}}{\pr{}{E}}\end{equation} and \begin{equation}\ex{}{X|E}-\ex{}{X} = \frac{\pr{}{\overline{E}}}{\pr{}{E}}\left(\ex{}{X}-\ex{}{X|\overline{E}}\right).\end{equation}
Thus 
\begin{align*}
    \left\|\ex{}{R_\gamma^G(x)}-x\right\|_2 &= \left\|\ex{}{R_\gamma(x)| R_\gamma(x) \in G}-\ex{}{R_\gamma(x)}\right\|_2 \\
    &= \frac{\pr{}{R_\gamma(x) \notin G}}{\pr{}{R_\gamma(x) \in G}}\left\|\ex{}{R_\gamma(x)}-\ex{}{R_\gamma(x)| R_\gamma(x) \notin G}\right\|_2\\
    &= \frac{\beta}{1-\beta} \| x - x^* \|_2,
\end{align*}
where $x^* \in \gamma \lfloor x / \gamma \rfloor + [0,\gamma]^d$ and, hence, $\|x-x^*\|_2 \le \gamma \cdot \sqrt{d}$.

Next, we have
\begin{align*}
    \ex{}{\left\| R_\gamma^G(x) - x \right\|_2^2} &= \frac{\ex{}{\left\| R_\gamma(x) - x \right\|_2^2} - \ex{}{\left\| R_\gamma(x) - x \right\|_2^2| R_\gamma(x) \notin G} \pr{}{R_\gamma(x) \notin G}}{\pr{}{R_\gamma(x) \in G}}\\
    &\le \frac{\ex{}{\left\| R_\gamma(x) - x \right\|_2^2}}{\pr{}{R_\gamma(x) \in G}}\\
    &= \frac{\gamma\|y\|_1-\|y\|_2^2}{1-\beta} \tag{Lemma \ref{lem:rr-norm}}\\
    &\le \frac{\gamma^2 d}{4(1-\beta)}.
\end{align*}
Note that the apply the bias variance decomposition: Since $\ex{}{R_\gamma(x)}=x$, we have $\ex{}{\left\| R_\gamma(x) \right\|_2^2} = \ex{}{\left\| R_\gamma(x) - x \right\|_2^2} + \|x\|_2^2$. Similarly, \[\ex{}{\left\| R_\gamma^G(x) - x \right\|_2^2} = \ex{}{\left\| R_\gamma^G(x) - \ex{}{R_\gamma^G(x)} \right\|_2^2} + \left\| \ex{}{R_\gamma^G(x)} - x \right\|_2^2 \ge \ex{}{\left\| R_\gamma^G(x) - \ex{}{R_\gamma^G(x)} \right\|_2^2}.\]

By Hoeffding's lemma, since $R_\gamma(x) \in \gamma \lfloor x/\gamma \rfloor + \{0,\gamma\}^d$ and is a product distribution with mean $x$, we have \begin{equation}\forall t \in \mathbb{R}^d ~~~~~ \ex{}{\exp(\langle t , R_\gamma(x)-x \rangle )} \le \exp \left( \frac{\gamma^2}{8} \cdot \|t\|_2^2 \right). \end{equation}
Thus \[\forall t \in \mathbb{R}^d ~~~~~ \ex{}{\exp(\langle t , R_\gamma^G(x)-x \rangle )} \le \frac{\ex{}{\exp(\langle t , R_\gamma(x)-x \rangle )}}{\pr{}{R_\gamma(x) \in G}} \le \frac{\exp \left( \frac{\gamma^2}{8} \cdot \|t\|_2^2 \right)}{1-\beta}. \]
\end{proof}
\fi

We summarize the results of this section. First we give a proposition for a single instance of randomized rounding (this combines Lemma \ref{lem:rr-norm} and \ref{lem:crr-util}).

\begin{prop}[Properties of Randomized Rounding]
Let $\beta \in [0,1)$, $\gamma > 0$, and $x \in \mathbb{R}^d$. Let \begin{equation}\Delta_2^2 := \min \left\{ \begin{array}{c} \|x\|_2^2 + \frac14 \gamma^2 d + \sqrt{2\log(1/\beta)} \cdot \gamma \cdot \left(\|x\|_2 + \frac12 \gamma \sqrt{d}\right) ,\\ \left( \|x\|_2 + \gamma \sqrt{d} \right)^2 \end{array} \right\}\end{equation} and $G := \left\{ y \in \mathbb{R}^d : \|y\|_2^2 \le \Delta_2^2 \right\}$. Let $R_\gamma(x)$ and $R_\gamma^G(x)$ be as in Definitions \ref{defn:rr} and \ref{defn:crr}.

Then $\pr{}{R_\gamma(x) \in G} \ge 1-\beta$ and, consequently, the following hold.
\begin{align}
    \left\|\ex{}{R_\gamma^G(x)}-x\right\|_2 &\le \frac{\beta \cdot \gamma \cdot \sqrt{d}}{1-\beta}.\\
    \ex{}{\left\| R_\gamma^G(x) - \ex{}{R_\gamma^G(x)} \right\|_2^2 } &\le \frac{\gamma^2 d}{4(1-\beta)}.\\
    \forall t \in \mathbb{R}^d ~~~~~ \ex{}{\exp(\langle t , R_\gamma^G(x)-x \rangle )} &\le \frac{\exp \left( \frac{\gamma^2}{8} \cdot \|t\|_2^2 \right)}{1-\beta}.\\
    \pr{}{\|R_\gamma^G(x)\|_2^2 \le \Delta_2^2} &= 1 = \pr{}{R_\gamma^G(x) \in \gamma \mathbb{Z}^d}.
\end{align}
\end{prop}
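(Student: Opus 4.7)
The proposition packages together what was already established in Lemmas \ref{lem:rr-norm} and \ref{lem:crr-util}, so the plan is essentially to verify the hypothesis of Lemma \ref{lem:crr-util} for the specific set $G$ defined here, and then invoke that lemma for the three expectation/moment-generating bounds.

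First I would establish the tail bound $\pr{}{R_\gamma(x) \in G} \ge 1-\beta$, which is the only genuinely new content. Since $\Delta_2^2$ is defined as a minimum, it suffices to show that $\|R_\gamma(x)\|_2^2 \le \Delta_2^2$ holds with probability at least $1-\beta$ under either of the two candidate values inside the $\min$. For the second candidate, $(\|x\|_2 + \gamma\sqrt d)^2$, the deterministic bound \eqref{eqn:rr-worst-case} combined with the triangle inequality gives $\|R_\gamma(x)\|_2 \le \|x\|_2 + \gamma\sqrt d$ with probability $1$, which is even stronger than needed. For the first candidate, I invoke Lemma \ref{lem:rr-norm}: its expectation bound gives $\ex{}{\|R_\gamma(x)\|_2^2} \le \|x\|_2^2 + \tfrac14 \gamma^2 d$, and its concentration bound gives that with probability at least $1-\beta$,
\[\|R_\gamma(x)\|_2^2 \le \ex{}{\|R_\gamma(x)\|_2^2} + \sqrt{2\log(1/\beta)}\cdot\gamma\cdot\left(\|x\|_2 + \tfrac12 \gamma\sqrt d\right).\]
Chaining these two inequalities yields membership in $G$ with probability at least $1-\beta$ for that candidate as well.

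Once the tail bound is in hand, the first three conclusions follow immediately by applying Lemma \ref{lem:crr-util} with this $G$ and this $\beta$: the bias bound uses $\|x - x^*\|_2 \le \gamma\sqrt d$ for any $x^* \in \gamma\lfloor x/\gamma\rfloor + [0,\gamma]^d$; the variance bound uses the bias--variance decomposition and the expectation bound of Lemma \ref{lem:rr-norm} to cap $\gamma\|y\|_1 - \|y\|_2^2$ by $\gamma^2 d/4$; and the MGF bound uses Hoeffding's lemma coordinatewise on the product distribution $R_\gamma(x) \in \gamma\lfloor x/\gamma\rfloor + \{0,\gamma\}^d$, followed by dividing by $\pr{}{R_\gamma(x)\in G} \ge 1-\beta$ to pass from $R_\gamma$ to $R_\gamma^G$.

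Finally, the last line is essentially by construction: $R_\gamma^G(x)$ is supported on $\gamma\mathbb{Z}^d \cap G$ by Definition \ref{defn:crr}, so both $\|R_\gamma^G(x)\|_2^2 \le \Delta_2^2$ and $R_\gamma^G(x) \in \gamma\mathbb{Z}^d$ hold with probability $1$. The only mild subtlety worth flagging is the degenerate case $\beta = 0$: here the first candidate for $\Delta_2^2$ is $+\infty$, so the $\min$ is attained by the deterministic second candidate, the conditioning event has probability $1$, $R_\gamma^G = R_\gamma$, and all bounds reduce to the unconditional statements of Lemma \ref{lem:rr-norm}; this is worth one sentence to confirm there is no issue.
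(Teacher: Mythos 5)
Your proof is correct and follows exactly the route the paper intends: the paper gives no standalone proof of this proposition, saying only that it ``combines Lemma \ref{lem:rr-norm} and \ref{lem:crr-util}'', and your argument supplies precisely that combination --- the tail bound verified separately for each candidate in the $\min$ (deterministic for the second, via the expectation and concentration bounds of Lemma \ref{lem:rr-norm} for the first), followed by an invocation of Lemma \ref{lem:crr-util}. Your explicit treatment of the $\beta=0$ degenerate case is a welcome extra level of care beyond what the paper records.
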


Now we give a proposition for sums of randomized roundings.

\begin{prop}[Randomized Rounding \& Sums]\label{prop:rr-sum}
Let $\beta \in [0,1)$, $\gamma > 0$, and $x_1, \cdots, x_n \in \mathbb{R}^d$. Suppose $ \|x_i\|_2 \le c$ for all $i \in [n]$. Let \begin{equation}\Delta_2^2 := \min \left\{ \begin{array}{c} c^2 + \frac14 \gamma^2 d + \sqrt{2\log(1/\beta)} \cdot \gamma \cdot \left(c + \frac12 \gamma \sqrt{d}\right) ,\\ \left( c + \gamma \sqrt{d} \right)^2 \end{array} \right\}\end{equation} and $G := \left\{ y \in \mathbb{R}^d : \|y\|_2^2 \le \Delta_2^2 \right\}$. Let $R_\gamma^G$ be as in Definition \ref{defn:crr}.

Then the following hold.
\begin{align}
    \left\|\ex{}{\sum_i^n R_\gamma^G(x_i)}-\sum_i^n x_i\right\|_2 &\le \frac{\beta \cdot \gamma \cdot \sqrt{d} \cdot n}{1-\beta}.\\
    \ex{}{\left\| \sum_i^n R_\gamma^G(x_i) - \ex{}{\sum_i^n R_\gamma^G(x_i)} \right\|_2^2 } &\le \frac{\gamma^2 \cdot d \cdot n}{4(1-\beta)}.\\
    \ex{}{\left\| \sum_i^n R_\gamma^G(x_i) - \sum_i^n x_i \right\|_2^2 } &\le \frac{\gamma^2 \cdot d \cdot n}{4(1-\beta)} + \left( \frac{\beta}{1-\beta} \gamma \sqrt{d} n \right)^2.\label{eq:rr-sum-sqerr}\\
    \forall t \in \mathbb{R}^d ~~~~~ \ex{}{\exp\left(\left\langle t , \sum_i^n R_\gamma^G(x_i) - \sum_i^n x_i \right\rangle \right)} &\le \frac{\exp \left( \frac{\gamma^2}{8} \cdot \|t\|_2^2 \cdot n \right)}{(1-\beta)^{n}}.\\
    \pr{}{\forall i ~~ \|R_\gamma^G(x_i)\|_2^2 \le \Delta_2^2} &= 1 = \pr{}{\forall i ~~ R_\gamma^G(x_i) \in \gamma \mathbb{Z}^d}.
\end{align}
\end{prop}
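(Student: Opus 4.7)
The plan is to lift the single-vector bounds of the preceding proposition (which aggregate Lemmas \ref{lem:rr-norm} and \ref{lem:crr-util}) to sums by exploiting independence and linearity. The first step is to observe that for every $i \in [n]$, the hypothesis $\|x_i\|_2 \le c$ makes the sum-level $\Delta_2^2$ defined here at least as large as the per-vector quantity produced by the single-vector proposition applied to $x_i$. Hence the good event $G$ is the same $G$ each $R_\gamma^G(x_i)$ is conditioned on, and the conclusions of the preceding proposition apply simultaneously to each summand. In particular, $\pr{}{R_\gamma(x_i) \in G} \ge 1-\beta$ for every $i$, which is what we need for each conditioning step to be valid.

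For the bias bound, I use linearity of expectation together with the triangle inequality: $\|\ex{}{\sum_i R_\gamma^G(x_i)} - \sum_i x_i\|_2 \le \sum_i \|\ex{}{R_\gamma^G(x_i)} - x_i\|_2 \le n \cdot \beta \gamma \sqrt{d}/(1-\beta)$. For the variance bound, the key is that the $R_\gamma^G(x_i)$ are mutually independent (since $R_\gamma^G$ is applied to each $x_i$ independently), so the variance of the sum equals the sum of the coordinate-wise variances, yielding $\sum_i \ex{}{\|R_\gamma^G(x_i) - \ex{}{R_\gamma^G(x_i)}\|_2^2} \le n \gamma^2 d / (4(1-\beta))$. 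The MSE bound \eqref{eq:rr-sum-sqerr} then follows from the standard bias-variance decomposition $\ex{}{\|S - \mu\|_2^2} = \ex{}{\|S - \ex{}{S}\|_2^2} + \|\ex{}{S} - \mu\|_2^2$ applied to $S = \sum_i R_\gamma^G(x_i)$ and $\mu = \sum_i x_i$.

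For the MGF bound, independence again does the work: $\ex{}{\exp(\langle t, \sum_i R_\gamma^G(x_i) - \sum_i x_i\rangle)} = \prod_i \ex{}{\exp(\langle t, R_\gamma^G(x_i) - x_i \rangle)}$, and each factor is bounded by the single-vector MGF bound $\exp(\gamma^2 \|t\|_2^2/8)/(1-\beta)$, giving the claimed $\exp(n \gamma^2 \|t\|_2^2/8)/(1-\beta)^n$. Finally, the almost-sure statements are immediate from the definition of conditional randomized rounding: $R_\gamma^G(x_i)$ takes values in $\gamma \mathbb{Z}^d \cap G$ with probability one, and $G$ is precisely the set of vectors of squared norm at most $\Delta_2^2$.

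There is no real obstacle here; everything reduces to the single-vector proposition plus independence of the samples $R_\gamma^G(x_i)$. The only mild care required is making sure the sum-level $\Delta_2^2$ dominates the per-vector $\Delta_2^2$ (which it does, monotonically in $\|x_i\|_2$), so that the single-vector bounds can be invoked without modification.
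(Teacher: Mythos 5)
Your proposal is correct and matches the paper's (implicit) argument: the paper states this proposition as an immediate consequence of the single-vector proposition, obtained exactly as you describe by monotonicity of $\Delta_2^2$ in $\|x_i\|_2$ (so each conditioning probability is at least $1-\beta$), linearity and the triangle inequality for the bias, independence for the variance and MGF products, and the bias--variance decomposition for the mean squared error.
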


\begin{rem}
    Proposition \ref{prop:rr-sum} provides some guidance on how to set the parameter $\beta$. We have the mean squared error bound \eqref{eq:rr-sum-sqerr} \[\ex{}{\left\| \sum_i^n R_\gamma^G(x_i) - \sum_i^n x_i \right\|_2^2 } \le \frac{\gamma^2 \cdot d \cdot n \cdot (1 - \beta + 4 \beta^2 n)}{4(1-\beta)^2}.\]
    If we set $\beta \approx 1/\sqrt{n}$, then the bias and variance terms in the bound are of the same order. 
    Setting $\beta$ too small would needlessly increase the sensitivity $\Delta_2$. And we see that there is little value in setting $\beta \ll 1/\sqrt{n}$. So the theory suggests setting $\beta \approx 1/\sqrt{n}$.
    
    However, we emphasize that this is a worst-case upper bound on the error and it is likely that, in practice, the error would likely be considerably less. Thus it is justifiable to set $\beta$ to be considerably larger -- e.g., $\beta = e^{-1/2}$ -- and simply hope for the best in terms of accuracy.
\end{rem}

\begin{rem}
    Proposition \ref{prop:rr-sum} covers the error introduced by discretization. Obviously, reducing the granularity $\gamma$ will reduce the discretization error. However, this comes at a cost in communication, so the choice of this parameter will need to be carefully made.
\end{rem}

In Section \ref{sec:dg} we have covered the noise that is injected to preserve privacy and in Section \ref{sec:rr} we have covered the error introduced by discretizing the data.
Now we state a result that combines these.

\begin{prop}[Randomized Rounding + Discrete Gaussian]\label{prop:dg-rr-sum}
Let $\beta \in [0,1)$, $\sigma^2 \ge \frac12 \gamma > 0$, and $c>0$.
Let 
\begin{align}
    \Delta_2^2 &:= \min \left\{ \begin{array}{c} c^2 + \frac14 \gamma^2 d + \sqrt{2\log(1/\beta)} \cdot \gamma \cdot \left(c + \frac12 \gamma \sqrt{d}\right) ,\\ \left( c + \gamma \sqrt{d} \right)^2 \end{array} \right\},\\
    G &:= \left\{ y \in \mathbb{R}^d : \|y\|_2^2 \le \Delta_2^2 \right\},\\
    \tau &:= 10 \cdot \sum_{k=1}^{n-1} e^{-2\pi^2\frac{\sigma^2}{\gamma^2} \cdot \frac{k}{k+1}},\\
    \varepsilon &:= \min \left\{ \begin{array}{c} \sqrt{ \frac{\Delta_2^2}{n\sigma^2} + 2 \tau d} ,\\ \frac{\Delta_2}{\sqrt{n} \sigma} + \tau \sqrt{d} \end{array} \right\}.
\end{align} 
Let $R_\gamma^G$ be as in Definition \ref{defn:crr}. Define a randomized algorithm $A : (\mathbb{R}^d)^n \to \gamma \mathbb{Z}^d$ by\footnote{Note that $\min\left\{1, \frac{c}{\|x_i\|_2} \right\} \cdot x_i$ is simply $x_i$ with its norm clipped to $c$.} 
\begin{equation}
    A(x) = \sum_i^n R_\gamma^G\left(\min\left\{1, \frac{c}{\|x_i\|_2} \right\} \cdot x_i\right) + \gamma \cdot Y_i,
\end{equation}
where $Y_1, \cdots, Y_n \in \mathbb{Z}^d$ are independent random vectors with each entry drawn independently from $\mathcal{N}_{\mathbb{Z}}(0,\sigma^2/\gamma^2)$.

Then $A$ satisfies $\frac12 \varepsilon^2$-concentrated differential privacy.\footnote{Not that this is with respect to the addition or removal of an element, not replacement. To keep $n$ fixed, we would need addition/removal to be defined to simply zero-out the relevant vectors.} 

Let $x_1, \cdots, x_n \in \mathbb{R}^d$ with $ \|x_i\|_2 \le c$ for all $i \in [n]$.
Then the following hold.
\begin{align}
    \left\|\ex{}{A(x)}-\sum_i^n x_i\right\|_2 &\le \frac{\beta \cdot \gamma \cdot \sqrt{d} \cdot n}{1-\beta}.\\
    \ex{}{\left\| A(x) - \ex{}{A(x)} \right\|_2^2 } &\le \frac{\gamma^2 \cdot d \cdot n}{4(1-\beta)} + n \cdot d \cdot \sigma^2.\\
    \ex{}{\left\| A(x) - \sum_i^n x_i \right\|_2^2 } &\le \frac{\gamma^2 \cdot d \cdot n}{4(1-\beta)} + \left( \frac{\beta}{1-\beta} \gamma \sqrt{d} n \right)^2 +  n \cdot d \cdot \sigma^2\\
    \forall t \in \mathbb{R}^d ~~~ \ex{}{\exp\left(\left\langle t , A(x) - \sum_i^n x_i \right\rangle \right)} &\le \frac{\exp \left( \left(\frac{\gamma^2}{8} + \frac{\sigma^2}{2} \right) \cdot \|t\|_2^2 \cdot n \right)}{(1-\beta)^{n}}.
\end{align}
\end{prop}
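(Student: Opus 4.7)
The plan is to combine Proposition \ref{prop:rr-sum} (properties of summed conditional randomized rounding) with Proposition \ref{prop:dg-sum-priv} (privacy of summed discrete Gaussians) and Lemma \ref{lem:dg-util} (utility of a single discrete Gaussian). The algorithm $A$ is a sum of $n$ independent contributions, each of which is a conditionally-rounded clipped input plus a scaled discrete Gaussian. Because the rounding and the noise are independent per client, and summation commutes with both, I can analyze the rounding contribution and the noise contribution separately and then combine them.

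\textbf{Privacy.} I would first rescale by $1/\gamma$: the quantity $A(x)/\gamma$ is a sum over $i$ of $R_\gamma^G(\cdot)/\gamma + Y_i$, where each $R_\gamma^G(\cdot)/\gamma \in \mathbb{Z}^d$ (this is the probability-$1$ claim in Proposition \ref{prop:rr-sum}) and each $Y_{i,j} \sim \mathcal{N}_{\mathbb{Z}}(0,(\sigma/\gamma)^2)$. Since $R_\gamma^G$ outputs a vector of $\ell_2$ norm at most $\Delta_2$, its $1/\gamma$-rescaling has $\ell_2$ norm at most $\Delta_2/\gamma$, so the addition or removal of a single client changes the deterministic part of the sum by an integer vector of $\ell_2$ norm at most $\Delta_2/\gamma$. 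Applying Proposition \ref{prop:dg-sum-priv} with scale parameter $\sigma/\gamma$, sensitivity $\Delta_2/\gamma$, and the same $\tau$ (since $\tau$ depends only on the ratio $\sigma/\gamma$), the $\gamma$'s cancel inside the square roots and one recovers exactly the $\varepsilon$ stated in the proposition; post-processing by multiplication by $\gamma$ preserves CDP.

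\textbf{Bias and variance.} The noise term $\sum_i \gamma Y_i$ is mean zero by Lemma \ref{lem:dg-util}, so the bias equals the bias of $\sum_i R_\gamma^G(\min\{1,c/\|x_i\|_2\}\cdot x_i)$; since clipping does not change vectors of norm $\le c$, this is exactly the bias bound of Proposition \ref{prop:rr-sum}. For the variance, independence between the rounding and the noise (and across clients) means variances add: the rounding contribution is bounded by $\gamma^2 d n / 4(1-\beta)$ from Proposition \ref{prop:rr-sum}, and the noise contribution is $\sum_i \sum_j \mathrm{Var}(\gamma Y_{i,j}) \le n d \sigma^2$ by Lemma \ref{lem:dg-util} (which bounds the variance of $\mathcal{N}_{\mathbb{Z}}(0,\sigma^2/\gamma^2)$ by $\sigma^2/\gamma^2$, and then we rescale by $\gamma^2$). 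The mean-squared-error bound follows from the bias-variance decomposition $\mathbb{E}\|A(x) - \sum_i x_i\|_2^2 = \mathbb{E}\|A(x) - \mathbb{E}[A(x)]\|_2^2 + \|\mathbb{E}[A(x)]-\sum_i x_i\|_2^2$ and substituting the two bounds above.

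\textbf{MGF.} By independence, $\mathbb{E}[\exp(\langle t, A(x) - \sum_i x_i\rangle)]$ factors into the product of the MGF of the rounding deviation and the MGF of $\sum_i \gamma Y_i$. The first factor is bounded by $\exp(\gamma^2 \|t\|_2^2 n /8)/(1-\beta)^n$ from Proposition \ref{prop:rr-sum}; the second factor, by Lemma \ref{lem:dg-util} applied coordinate-wise (each $\gamma Y_{i,j}$ contributes $\exp(t_j^2 \sigma^2/2)$), equals at most $\exp(n\sigma^2 \|t\|_2^2/2)$. Multiplying yields the stated bound.

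The only step that requires any real care is the privacy reduction: one must verify that post-processing (multiplication by $\gamma$ and the fact that the server only sees the sum rather than individual summands) does not compromise the CDP guarantee, and that the sensitivity after conditional rounding really is bounded by $\Delta_2$ almost surely (the deterministic-norm-bound clause in Proposition \ref{prop:rr-sum} is essential here, since an expectation bound on the norm would not suffice for worst-case sensitivity). Everything else is bookkeeping with independence and the bias-variance decomposition.
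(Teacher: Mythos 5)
Your proposal is correct and matches the paper's intended argument: the paper states this proposition without an explicit proof, presenting it as the direct combination of Proposition \ref{prop:rr-sum}, Proposition \ref{prop:dg-sum-priv} (applied at scale $\sigma/\gamma$ with sensitivity $\Delta_2/\gamma$ after rescaling, exactly as you do), and Lemma \ref{lem:dg-util}, which is precisely the decomposition you carry out. Your closing remark correctly identifies the one point needing care --- that the almost-sure norm bound from conditional rounding is what makes the worst-case sensitivity argument valid --- so nothing is missing.
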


\subsection{Flattening}

It is possible that the inputs $x_i$ and the sum $\bar{x} = \sum_i^n x_i$ are very heavily concentrated on one coordinate. This is a bad case, as the modular clipping will create a very large error, unless we use a very large modulus.
To avoid this problem we will ``flatten'' the inputs as a pre-processing step (which is inverted by the server at the end of the protocol).

Specifically, our goal is to pre-process the inputs $x_1, \cdots, x_n$ so that $\sum_i x_i \in [a,b]^d$ and then at the end we can undo the pre-processing to obtain the original value. Here $[a,b]$ is the range where modular arithmetic does not cause errors.

To flatten the vectors we will transform the data points by multiplying them by a (random) matrix $U \in \mathbb{R}^{d \times d}$; at the end we multiply by $U^{-1}$ to undo this operation. We will take $U$ to be a unitary matrix so that $U^{-1} = U^T$.\footnote{We take $U$ to be a square matrix, but it is in general possible to also increase the dimension during this pre-processing step. We also could extend beyond unitary matrices to invertible (and well-conditioned) matrices. } 

\begin{rem}
    The flattening matrix $U$ is \emph{shared randomness} -- that is, the server and all the clients must have access to this matrix. Fortunately, the differential privacy guarantee does not depend on this randomness remaining hidden; thus $U$ can be published, and we do not need to worry about the privacy adversary having access to it. 
\end{rem}

The specific property that we want the flattening matrix $U$ to satisfy is that, for all $x \in \mathbb{R}^d$ and all $i \in [d]$, the value $(Ux)_i \in \mathbb{R}$ has a subgaussian distribution (determined by the randomness of $U$) with variance proxy $O(\|x\|_2^2/{d})$.

There are many possibilities for this flattening transformation. A natural option is for $U$ to be a random unitary matrix or rotation matrix. This would attain the desired property:
\begin{lem}\label{lem:rotation}
    Let $x \in \mathbb{R}^d$ be fixed.
    Let $U \in \mathbb{R}^{d \times d}$ be a uniformly random unitary matrix or rotation matrix, so that $Ux$ is a uniformly random vector with norm $\|x\|_2$.
    Then $\ex{}{e^{t(Ux)_i}} \le e^{t^2 \|x\|_2^2/2d}$ for all $t \in \mathbb{R}$ and all $i \in [d]$.
\end{lem}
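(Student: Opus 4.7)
The plan is to reduce the statement to a subgaussian bound on one coordinate of a uniformly random unit vector, and then establish that bound by comparing its even moments term-by-term with those of a Gaussian of variance $1/d$.

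First I would exploit symmetry. By rotational invariance of Haar measure on the orthogonal (or rotation) group, the distribution of $Ux$ depends on $x$ only through $r := \|x\|_2$; indeed, if we set $v := Ux/r$, then $v$ is uniformly distributed on the unit sphere $S^{d-1}$. The distribution of $v$ is moreover invariant under coordinate permutations, so all $v_i$ are equidistributed. It therefore suffices to prove that the first coordinate $v_1$ of a uniformly random unit vector satisfies $\ex{}{e^{s v_1}} \le e^{s^2/(2d)}$ for every $s\in\mathbb{R}$; the claim follows on setting $s=tr$.

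Next I would compute the moments of $v_1$. Since $v_1$ has a distribution symmetric about $0$, all odd moments vanish and the Taylor expansion of its MGF collapses to $\ex{}{e^{sv_1}}=\sum_{k\ge 0}\frac{s^{2k}}{(2k)!}\,\ex{}{v_1^{2k}}$. Writing $v=g/\|g\|_2$ for $g\sim \mathcal{N}(0,I_d)$ shows that $v_1^2\sim\mathrm{Beta}(1/2,(d-1)/2)$, and a short Gamma-function computation gives
\begin{equation*}
\ex{}{v_1^{2k}} \;=\; \frac{(2k-1)!!}{d(d+2)(d+4)\cdots(d+2k-2)}.
\end{equation*}

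Finally I would compare this with the moments of $g_\star\sim\mathcal{N}(0,1/d)$, namely $\ex{}{g_\star^{2k}}=(2k-1)!!/d^k$. The ratio $\ex{}{v_1^{2k}}/\ex{}{g_\star^{2k}}=\prod_{j=0}^{k-1} d/(d+2j)$ is at most $1$ for every $k\ge 0$, so $\ex{}{v_1^{2k}}\le\ex{}{g_\star^{2k}}$. Summing the Taylor series termwise then yields $\ex{}{e^{s v_1}} \le \ex{}{e^{s g_\star}} = e^{s^2/(2d)}$, which is what we wanted.

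No step is particularly deep: the only mildly technical piece is the moment identity for $v_1^{2k}$, and even that is a routine Beta-function calculation. The reason this route is attractive (rather than, say, invoking the log-Sobolev inequality on the sphere, which would give the slightly weaker variance proxy $1/(d-1)$) is that the moment comparison is sharp enough to recover the optimal constant $1/d$ matching the Gaussian analogue.
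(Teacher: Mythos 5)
Your proof is correct. It rests on the same underlying fact as the paper's argument --- that a single coordinate of a uniformly random unit vector is (an affine image of) a Beta random variable which is subgaussian with the optimal variance proxy $1/d$ --- but you actually \emph{prove} that subgaussian bound, whereas the paper disposes of it in one line by writing $((Ux)_i/\|x\|_2+1)/2 \sim \mathsf{Beta}((d-1)/2,(d-1)/2)$ and citing a known subgaussianity result for symmetric Beta distributions (whose optimal proxy $\tfrac{1}{4(2\alpha+1)}$ gives exactly the $1/d$; as printed, the paper's intermediate bound $e^{t^2/8}$ appears to be missing the factor of $d$ in the exponent's denominator, which your computation makes explicit and correct). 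Your route --- reduce by symmetry to $v_1$ with $v_1^2\sim\mathrm{Beta}(1/2,(d-1)/2)$, verify $\mathbb{E}[v_1^{2k}]=\frac{(2k-1)!!}{d(d+2)\cdots(d+2k-2)}\le \frac{(2k-1)!!}{d^k}=\mathbb{E}[g_\star^{2k}]$ for $g_\star\sim\mathcal{N}(0,1/d)$, and sum the (nonnegative-term) Taylor series --- is a clean, self-contained replacement for the external citation, and it recovers the sharp constant. The only cost is a few extra lines of Gamma-function bookkeeping; the only thing to keep in mind is that the uniformity of $v=Ux/\|x\|_2$ on the sphere is exactly the hypothesis the lemma builds in, so nothing further is needed there.
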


\ifarxiv 
\begin{proof}
Let $Y=((Ux)_i/\|x\|_2+1)/2 \in [0,1]$. Then $Y$ follows a $\mathsf{Beta}((d-1)/2,(d-1)/2)$ distribution \cite{SE-unit-vector-beta}. 
Thus $\ex{}{e^{t(Y-1/2)}} \le e^{t^2/8}$ for all $t \in \mathbb{R}$ \cite{Beta-subgaussian}. The result follows.
\end{proof}
\fi
Unfortunately, a random rotation or unitary matrix has several downsides: Generating a random unitary or rotation matrix is itself a non-trivial task \cite{Sample-unitary}. Even storing such a matrix in memory and performing the required matrix-vector multiplication could be prohibitive -- it would take $\Theta(d^2)$ time and space where we seek algorithms that run in $\tilde{O}(d)$ time and space.

Instead our approach is to first randomize the signs of the entries of $x$ and then multiply by a matrix with small entries. This attains the desired guarantee:
\begin{lem}\label{lem:flatHD}
    Let $H \in [-\sqrt{\rho/d},\sqrt{\rho/d}]^{d \times d}$ be a fixed unitary matrix.
    Let $D \in \{0,-1,+1\}^{d \times d}$ be a diagonal matrix with random signs on the diagonal.
    Fix $x \in \mathbb{R}^d$ and $i \in [d]$. Let $Y=(HDx)_i \in \mathbb{R}$. Then $\ex{}{e^{tY}} \le e^{t^2\|x\|_2^2 \rho /2d}$ for all $t \in \mathbb{R}$.
\end{lem}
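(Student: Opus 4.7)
The plan is to exploit the structure that $Y$ is a sum of independent, bounded, mean-zero random variables (one for each Rademacher sign on the diagonal of $D$) and then apply Hoeffding's lemma termwise, finishing with a crude coordinate bound on the entries of $H$.

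First I would expand coordinates. Writing $\xi_j \in \{-1,+1\}$ for the independent uniform signs on the diagonal of $D$, we have
\[
Y \;=\; (HDx)_i \;=\; \sum_{j=1}^d H_{ij}\,\xi_j\,x_j \;=\; \sum_{j=1}^d \xi_j \,a_j, \qquad a_j := H_{ij}\, x_j.
\]
Each summand $\xi_j a_j$ is supported on $\{-a_j,+a_j\}$, has mean zero, and the $\xi_j$'s are mutually independent.

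Next I would apply Hoeffding's lemma to each factor. For a random variable supported on $[-|a_j|,|a_j|]$ with mean $0$, Hoeffding gives $\mathbb{E}[e^{t\xi_j a_j}] \le \exp(t^2 a_j^2 / 2)$. Using independence to factorise the MGF,
\[
\mathbb{E}\!\left[e^{tY}\right] \;=\; \prod_{j=1}^d \mathbb{E}\!\left[e^{t \xi_j a_j}\right] \;\le\; \exp\!\left(\tfrac{t^2}{2} \sum_{j=1}^d H_{ij}^2\, x_j^2\right).
\]

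Finally I would use the entrywise bound on $H$. Since $|H_{ij}| \le \sqrt{\rho/d}$ for all $i,j$,
\[
\sum_{j=1}^d H_{ij}^2\, x_j^2 \;\le\; \tfrac{\rho}{d}\sum_{j=1}^d x_j^2 \;=\; \tfrac{\rho}{d}\,\|x\|_2^2,
\]
and substituting this into the previous display gives the claimed bound $\mathbb{E}[e^{tY}] \le \exp(t^2\,\rho\,\|x\|_2^2/(2d))$.

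There is no real obstacle here; the only thing worth remarking on is that the argument only uses the entrywise $\ell_\infty$ bound on $H$ together with independence of the sign flips in $D$, so unitarity of $H$ is not actually needed for the MGF bound itself (it is of course needed elsewhere in the protocol to invert the transformation at the server). The Walsh--Hadamard choice from Algorithm~\ref{alg:client} satisfies $\rho = 1$, which recovers the intended variance proxy $\|x\|_2^2/d$ that matches the random-rotation benchmark of Lemma~\ref{lem:rotation} up to constants.
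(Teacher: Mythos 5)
Your proof is correct and follows essentially the same route as the paper's: expand $Y=\sum_j H_{i,j}D_{j,j}x_j$, factor the MGF by independence, apply Hoeffding's lemma to each Rademacher term, and bound the entries of $H$ by $\sqrt{\rho/d}$. The only (harmless) cosmetic difference is that you retain the intermediate quantity $\sum_j H_{i,j}^2 x_j^2$ before invoking the entrywise bound, whereas the paper bounds each factor directly.
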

\begin{proof}
We have $Y=(HDx)_i = \sum_j^d H_{i,j} D_{j,j} x_j$ and, by independence, $\ex{}{e^{tY}} = \prod_j^d \ex{}{e^{t H_{i,j} D_{j,j} x_j}}$. Since $H_{i,j} \in [-\sqrt{\rho/d}, \sqrt{\rho/d}]$ and $D_{j,j} \in \{-1,+1\}$ is uniformly random, we have $\ex{}{e^{t H_{i,j} D_{j,j} x_j}} \le e^{t^2 (\sqrt{\rho/d})^2 x_j^2 / 2}$ by Hoeffding's lemma.
\end{proof}

Generating the required random signs is easy. (We can use a pseudorandom generator and a small shared random seed.) We just need a unitary matrix $H \in [-\sqrt{\rho/d},\sqrt{\rho/d}]^{d \times d}$ with $\rho$ as small as possible.\footnote{A lower bound of $\rho \ge 1$ applies as $H$ is unitary -- $H^TH=I$, so $1 = (H^TH)_{1,1} = \sum_i^d H_{i,1}^2 \le d (\sqrt{\rho/d})^2 = \rho$.} And we want $H$ to be easy to work with -- specifically, we want efficient algorithms to compute the matrix-vector product $Hx$ and its inverse $H^T x$ for arbitrary $x \in \mathbb{R}^d$.

Walsh-Hadamard matrices are ideal for $H$ (after scaling appropriately). They attain the optimal $\rho = 1$ and the fast Walsh-Hadamard transform can compute the matrix-vector products in $O(d \log d)$ operations. This is what we use in our experiments. Formally, the Walsh-Hadamard matrices are defined recursively as follows: 
\begin{equation}
    H_{2^0} = (1), ~~~~~~~~~~ \forall k \ge 0 ~~~~~ H_{2^{k+1}} = \frac{1}{\sqrt{2}} \left( \begin{array}{cc} H_{2^k} & H_{2^k} \\ H_{2^k} & -H_{2^k} \end{array} \right) \in \left\{\frac{-1}{\sqrt{2^{k+1}}} , \frac{1}{\sqrt{2^{k+1}}}\right\}^{2^{k+1} \times 2^{k+1}}.\label{eqn:hadamard}
\end{equation}

The only downside of Walsh-Hadamard matrices is that they require the dimension $d$ to be a power of $2$. We can pad the input vectors with zeros to ensure this. However, in the worst case, padding may nearly double the dimension $d$, which correspondingly slows down our algorithm. (E.g., if $d=2^k+1$, then we must pad to dimension $d=2^{k+1}$.)

To avoid or reduce padding, there are several solutions: 
\begin{itemize}
    \item The first, and simplest, solution is to use the discrete cosine transform as the matrix $H_d' \in \mathbb{R}^{d \times d}$, which is defined by
    \begin{equation}
        \forall d \in \mathbb{N} ~~ \forall i, j \in [d] ~~~~~~~~~~ H_d'(i,j) = \sqrt{\frac{2}{d}} \cdot \cos\left(\frac{\pi}{d}\cdot i \cdot (j-\frac12)\right).
    \end{equation}
    Such matrices exist for any dimension $d$ and the required matrix-vector products can still be computed in $O(d \log d)$ time. However, they attain a slightly suboptimal subgaussian flatness parameter of $\rho = 2$.

    \item The second solution is to use more general Hadamard matrices. It is conjectured that for all $d \in \{1,2\}\cup \{4\ell : \ell \in \mathbb{N}\}$ there exist unitary matrices $H_d$ in $\{-1/\sqrt{d},1/\sqrt{d}\}^{d \times d}$. This conjecture would allow us to do very little padding (adding at most three extra coordinates to reach the next multiple of $4$), but we do not have a proof of this conjecture, much less efficient algorithms for computing with these matrices. 
    
    Fortunately, there are explicit constructions of Hadamard matrices of many sizes which also allow efficient matrix-vector computations. By considering sizes other than powers of $2$, we can significantly reduce the required amount of padding.
    
    For example, we can generalize the Kronecker product construction \eqref{eqn:hadamard} to dimension $d=12 \cdot 2^k$ for integers $k \ge 0$:
    \begin{scriptsize}
    \begin{equation}
        \!\! H_{12 \cdot 2^k} \! = \! \frac{1}{\sqrt{12}} \! \left( \begin{array}{cccccccccccc}
            H_{2^k} & -H_{2^k} & -H_{2^k} & -H_{2^k} & -H_{2^k} & -H_{2^k} & -H_{2^k} & -H_{2^k} & -H_{2^k} & -H_{2^k} & -H_{2^k} & -H_{2^k} \\
            H_{2^k} & H_{2^k} & -H_{2^k} & H_{2^k} & -H_{2^k} & -H_{2^k} & -H_{2^k} & H_{2^k} & H_{2^k} & H_{2^k} & -H_{2^k} & H_{2^k} \\
            H_{2^k} & H_{2^k} & H_{2^k} & -H_{2^k} & H_{2^k} & -H_{2^k} & -H_{2^k} & -H_{2^k} & H_{2^k} & H_{2^k} & H_{2^k} & -H_{2^k} \\
            H_{2^k} & -H_{2^k} & H_{2^k} & H_{2^k} & -H_{2^k} & H_{2^k} & -H_{2^k} & -H_{2^k} & -H_{2^k} & H_{2^k} & H_{2^k} & H_{2^k} \\
            H_{2^k} & H_{2^k} & -H_{2^k} & H_{2^k} & H_{2^k} & -H_{2^k} & H_{2^k} & -H_{2^k} & -H_{2^k} & -H_{2^k} & H_{2^k} & H_{2^k} \\
            H_{2^k} & H_{2^k} & H_{2^k} & -H_{2^k} & H_{2^k} & H_{2^k} & -H_{2^k} & H_{2^k} & -H_{2^k} & -H_{2^k} & -H_{2^k} & H_{2^k} \\
            H_{2^k} & H_{2^k} & H_{2^k} & H_{2^k} & -H_{2^k} & H_{2^k} & H_{2^k} & -H_{2^k} & H_{2^k} & -H_{2^k} & -H_{2^k} & -H_{2^k} \\
            H_{2^k} & -H_{2^k} & H_{2^k} & H_{2^k} & H_{2^k} & -H_{2^k} & H_{2^k} & H_{2^k} & -H_{2^k} & H_{2^k} & -H_{2^k} & -H_{2^k} \\
            H_{2^k} & -H_{2^k} & -H_{2^k} & H_{2^k} & H_{2^k} & H_{2^k} & -H_{2^k} & H_{2^k} & H_{2^k} & -H_{2^k} & H_{2^k} & -H_{2^k} \\
            H_{2^k} & -H_{2^k} & -H_{2^k} & -H_{2^k} & H_{2^k} & H_{2^k} & H_{2^k} & -H_{2^k} & H_{2^k} & H_{2^k} & -H_{2^k} & H_{2^k} \\
            H_{2^k} & H_{2^k} & -H_{2^k} & -H_{2^k} & -H_{2^k} & H_{2^k} & H_{2^k} & H_{2^k} & -H_{2^k} & H_{2^k} & H_{2^k} & -H_{2^k} \\
            H_{2^k} & -H_{2^k} & H_{2^k} & -H_{2^k} & -H_{2^k} & -H_{2^k} & H_{2^k} & H_{2^k} & H_{2^k} & -H_{2^k} & H_{2^k} & H_{2^k} 
        \end{array} \right).
    \end{equation}
    \end{scriptsize}
    The addition of this construction alone is sufficient to reduce the worst case for padding from a factor of $2$ to a factor of $1.5$ -- now $2^k+1$ can be padded to $12 \cdot 2^{k-3} = 1.5 \cdot 2^k$. The other desirable properties of the Hadamard matrices are also retained.

    \item A third solution is to move from the reals to complex numbers and use the discrete Fourier transform. Our real vector of length $d$ can be encoded as a complex vector of length $d/2$ (two real entries become the real and imaginary components of one complex entry). Instead of $D$ being a diagonal matrix with random signs, the diagonal entries are $e^{\sqrt{-1}\cdot\theta}$ for a uniformly random $\theta \in [0,2\pi)$. (In fact, it suffices to have $\theta \in \{0,\pi/2,\pi,3\pi/2\}$ uniform. This only requires one bit of shared randomness per coordinate.\footnote{Note that $\theta \in \{0,\pi/2,\pi,3\pi/2\}$ corresponds to $e^{i\theta} \in \{1,i,-1,-i\}$ and, in Equation \ref{eqn:Dtheta}, to $R_\theta \in \left\{ \left(\begin{array}{cc}1&0\\0&1\end{array}\right) , \left(\begin{array}{cc}0&-1\\1&0\end{array}\right) , \left(\begin{array}{cc}-1&0\\0&-1\end{array}\right) , \left(\begin{array}{cc}0&1\\-1&0\end{array}\right) \right\}$.}) Then $H$ is the discrete Fourier transform matrix. This gives us a complex vector of length $d/2$ that can be decoded back to a real vector of length $d$. This transformation is unitary and linear and attains the optimal subgaussian flatness constant $\rho = 1$ (i.e., matches the guarantee of a random rotation or unitary matrix from Lemma \ref{lem:rotation}). The only requirement is that the dimension must be even -- i.e., we must pad at most one zero. %
    
    If we wish to avoid thinking about complex numbers, the complex numbers can be replaced with $2 \times 2$ rotation matrices. That is 
    \begin{equation}
        H_{2d}'' =  \frac{1}{\sqrt{d}} \left( \begin{array}{cccccc}
            W^0 & W^0 & W^0 & W^0 & \cdots & W^0 \\
            W^0 & W^1 & W^2 & W^3 & \cdots & W^d \\
            W^0 & W^2 & W^4 & W^6 & \cdots & W^{2d} \\
            W^0 & W^3 & W^6 & W^9 & \cdots & W^{3d} \\
            \vdots & \vdots & \vdots & \vdots & \ddots & \vdots \\
            W^0 & W^d & W^{2d} & W^{3d} & \cdots & W^{d^2}
        \end{array} \right) \in \mathbb{R}^{2d \times 2d},
        \label{eqn:Hfourier}
    \end{equation} 
    where $W = \left( \begin{array}{cc} \cos(2\pi/d) & -\sin(2\pi/d) \\ \sin(2\pi/d) & \cos(2\pi/d) \end{array} \right) \in \mathbb{R}^{2 \times 2}$, and
    \begin{equation}
        D_\Theta = \left( \begin{array}{ccccc}
            R_{\theta_1} & 0 & 0 & \cdots & 0 \\
            0 & R_{\theta_2} & 0 & \cdots & 0 \\
            0 & 0 & R_{\theta_3} & \cdots & 0 \\
            \vdots & \vdots & \vdots & \ddots & \vdots \\
            0 & 0 & 0 & \cdots & R_{\theta_d}
        \end{array} \right) \in \mathbb{R}^{2d \times 2d},
        \label{eqn:Dtheta}
    \end{equation}
    where $\Theta \in [0,2\pi)^d$ and $R_\theta = \left( \begin{array}{cc} \cos \theta & -\sin \theta \\ \sin \theta & \cos \theta \end{array} \right) \in \mathbb{R}^{2 \times 2}$.
    \begin{prop}
    Let $d \in \mathbb{N}$ be even and let $\Theta$ be uniformly random in either $[0,2\pi)^{d/2}$ or $\{0,\pi/2,\pi,3\pi/2\}^{d/2}$.
    Define $U = H_{d}'' D_\Theta \in \mathbb{R}^{d \times d}$ where $H_d''$ and $D_\Theta$ are given in Equations \ref{eqn:Hfourier} and \ref{eqn:Dtheta}.
    Then $U$ is unitary and $\ex{}{\exp(t(Ux)_i)} \le \exp(t^2 \|x\|_2^2 / 2d)$ for all $x \in \mathbb{R}^d$, all $i \in [d]$, and all $t \in \mathbb{R}$.
    \end{prop}
    \begin{proof}
    The fact that $U$ is unitary follows from the fact that both $H_d''$ and $D_\Theta$ are. Since $R_{\theta_i} = \left( \begin{array}{cc} \cos \theta_i & -\sin \theta_i \\ \sin \theta_i & \cos \theta_i \end{array} \right) \in \mathbb{R}^{2 \times 2}$ is unitary, this implies $D_\Theta$ is unitary. The matrix $H_{d}''$ is a block matrix with the block in row $i \in [d/2]$ and column $j \in [d/2]$ being $W^{(i-1)(j-1)}$ for $W = R_{4\pi/d}$. Then $H_d'' (H_d'')^T$ is also a block matrix. The block in row $i \in [d/2]$ and column $j \in [d/2]$ is
    \begin{align*}
        (H_d'' (H_d'')^T)_{2i-1:2i , 2j-1:2j} &= \frac2d \sum_{k=1}^{d/2} W^{(i-1)(k-1)} (W^{(k-1)(j-1)})^T \\
        &= \frac2d \sum_{k=1}^{d/2} W^{(i-1)(k-1)-(k-1)(j-1)} \\
        &= \frac2d \left\{ \begin{array}{cl} \frac{d}{2} I & \text{ if } i=j \\ 0 & \text{ if } i \ne j  \end{array}\right..
    \end{align*}
    This follows from the fact that $W$ is unitary (i.e., $W^T = W^{-1}$), the fact that $W^{d/2} = I$, and the fact that, for any $\ell\in\mathbb{Z}$ that is \emph{not} a multiple of $d/2$, $W^\ell-I$ is nonsingular and, hence, $\sum_{k=1}^{d/2} W^{\ell\cdot(k-1)} = 0$. 
    
    Fix $x \in \mathbb{R}^d$ and $i \in [d/2]$. Now, by the properties of rotation matrices, 
    \begin{align*}
        \left(\begin{array}{c} (Ux)_{2i-1} \\ (Ux)_{2i} \end{array}\right) &= \sqrt{\frac{2}{d}}\sum_{j=1}^{d/2} W^{(i-1)(j-1)} R_{\theta_j} \left(\begin{array}{c} x_{2i-1} \\ x_{2i} \end{array}\right) \\
        &= \sqrt{\frac{2}{d}}\sum_{j=1}^{d/2} R_{4\pi(i-1)(j-1)/d + \theta_j + \tilde\theta_i } \left(\begin{array}{c} \sqrt{x_{2i-1}^2 + x_{2i}^2} \\ 0 \end{array}\right),
    \end{align*}
    where $\tilde\theta_i \in [0,2\pi)$ is such that $\cos \tilde\theta_i = \frac{x_{2i-1}}{\sqrt{x_{2i-1}^2 + x_{2i}^2}}$ and $\sin \tilde\theta_i = \frac{x_{2i}}{\sqrt{x_{2i-1}^2 + x_{2i}^2}}$.
    For $j \in [d/2]$, define $\hat\theta_{i,j} =  4\pi(i-1)(j-1)/d + \tilde\theta_i $.
    Then $(Ux)_{2i-1} = \sqrt{\frac2d} \sum_{j=1}^{d/2} \sqrt{x_{2i-1}^2 + x_{2i}^2} \cdot \cos(\theta_j + \hat\theta_{i,j}) $ and $(Ux)_{2i} = \sqrt{\frac2d} \sum_{j=1}^{d/2} \sqrt{x_{2i-1}^2 + x_{2i}^2} \cdot \sin(\theta_j + \hat\theta_{i,j}) $.
    Fix $t \in \mathbb{R}$. If $\Theta \in [0,2\pi)^{d/2}$ follows a product distribution, then 
    \begin{equation}
        \ex{\Theta}{\exp(t(Ux)_{2i-1})} = \prod_{j=1}^{d/2} \ex{\theta_j}{\exp\left(t\sqrt{\frac2d (x_{2i-1}^2+x_{2i}^2)} \cos(\theta_j + \hat\theta_{i,j}) \right)} \label{eq:2i-1}
    \end{equation}
    and, similarly,
    \begin{equation}
        \ex{\Theta}{\exp(t(Ux)_{2i})} = \prod_{j=1}^{d/2} \ex{\theta_j}{\exp\left(t\sqrt{\frac2d (x_{2i-1}^2+x_{2i}^2)} \cos(\theta_j + \hat\theta_{i,j} -\pi/2)\right)},\label{eq:2i}
    \end{equation} as $\sin(\psi) = \cos(\psi-\pi/2)$ for all $\psi \in \mathbb{R}$.
    If we can show that $\ex{\theta_j}{\exp(\lambda \cdot \cos(\theta_j + \psi)} \le \exp(\lambda^2/4)$ for all $\lambda,\psi \in \mathbb{R}$ and all $j$, then we are done. Lemma \ref{lem:theta4} covers the case where $\theta_j$ is uniform on $\{0,\pi/2,\pi,3\pi/2\}$ and Lemma \ref{lem:theta-cts} covers the case where it is uniform on $[0,2\pi)$.
    Then the right sides of both Equations \ref{eq:2i-1} and \ref{eq:2i} become
    \begin{align*}
        &\le \prod_{j=1}^{d/2} {\exp\left(\left( t\sqrt{\frac2d (x_{2i-1}^2+x_{2i}^2)} \right)^2/4\right)}\\
        &= \exp\left(\sum_{j=1}^{d/2} t^2 \frac2d (x_{2i-1}^2+x_{2i}^2) \frac14\right)\\
        &= \exp(t^2\|x\|_2^2/2d).
    \end{align*}
    \end{proof}
    \begin{lem}\label{lem:theta4}
        Let $\theta \in \{0,\pi/2,\pi,3\pi/2\}$ be uniformly random. Let $\lambda, \psi \in \mathbb{R}$ be fixed. Then $\ex{\theta}{\exp(\lambda\cdot\cos(\theta+\psi)}\le\exp(\lambda^2/4)$.
    \end{lem}    
    \begin{proof}
        Let $x=\lambda\cos \psi$ and $y=\lambda\cos(\psi+\pi/2)$. Then $\lambda\cos(\psi+\pi)=-x$ and $\lambda\cos(\psi+3\pi/2)=-y$. Note $x^2+y^2=\lambda^2$. Thus 
        \begin{align*}
            \ex{\theta}{\exp(\lambda\cdot\cos(\theta+\psi)} &= \frac{e^x + e^{-x} + e^y + e^{-y} }{4}\\
            &= 1 + \sum_{k=1}^\infty \frac{x^{2k}+y^{2k}}{2 \cdot (2k)!}\\
            &\le 1 + \sum_{k=1}^\infty \frac{\lambda^{2k}}{4^k \cdot k!}\\
            &= \exp(\lambda^2/4).
        \end{align*}
        The inequality follows from the fact that $x^{2k}+y^{2k} \le (x^2 + y^2)^k = (\lambda^2\cos^2\psi + \lambda^2 \sin^2 \psi)^k=\lambda^{2k}$ and $2 \cdot (2k)! \ge 4^k \cdot k!$ for all integers $k \ge 1$. The last fact can be easily verified by induction: For $k=1$ both sides are equal to $4$. Moving from $k$ to $k+1$ multiplies the right side by $4(k+1)$ and the left side by $(2k+1)(2k+2)=4(k+1)(k+1/2) > 4(k+1)$.
    \end{proof}
    \begin{lem}\label{lem:theta-cts}
        Let $\theta \in [0,2\pi)$ be uniformly random. Let $\lambda, \psi \in \mathbb{R}$ be fixed. Then $\ex{\theta}{\exp(\lambda\cdot\cos(\theta+\psi)}\le\exp(\lambda^2/4)$.
    \end{lem}
    \begin{proof}
        Since $\theta$ is uniform on $[0,2\pi)$ and $\cos$ is a periodic function with period $2\pi$, the distribution of $\cos(\theta+\psi)$ is the same as that of $\cos \theta$, so we may ignore $\psi$. 
        Also the distribution is symmetric (i.e., the distribution of $-\cos \theta = \cos(\theta+\pi)$ is the same as that of $\cos \theta$).
        Thus $\ex{}{\cos^k \theta}=0$ for all odd $k$. 
        We also have $\ex{}{\cos^2 \theta} = \ex{}{\frac{1+\cos(2\theta)}{2}} = \frac12$.
        Integration by parts yields the recurrence $\ex{\theta}{\cos^k \theta} = \frac{k-1}{k} \ex{\theta}{\cos^{k-2} \theta}$ for $k \ge 2$. This yields $\ex{\theta}{\cos^{2k} \theta} = \frac{(2k-1)!}{2^{2k-1} \cdot k! \cdot (k-1)!}$ for all integers $k \ge 1$.
        Thus
        \begin{align*}
            \ex{\theta}{\exp(\lambda \cdot \cos \theta)} &= 1 + \sum_{k=1}^\infty \frac{\lambda^{2k}}{(2k)!} \ex{\theta}{\cos^{2k} \theta}\\
            &= 1 + \sum_{k=1}^\infty \frac{\lambda^{2k} \cdot (2k-1)!}{(2k)! \cdot 2^{2k-1} \cdot k! \cdot (k-1)!}\\
            &= 1 + \sum_{k=1}^\infty \frac{(\lambda^2/4)^k}{k!} \cdot \frac{1}{k!}\\
            &\le  1 + \sum_{k=1}^\infty \frac{(\lambda^2/4)^k}{k!}\\
            &= \exp(\lambda^2/4).
        \end{align*}
    \end{proof}
    
    We emphasize that the discrete fourier transform (i.e., matrix-vector multiplications with $H_{2d}''$ from Equation \ref{eqn:Hfourier}) can be computed in $O(d \log d)$ operations for \emph{any} $d$ -- not just powers of $2$. Although the exact efficiency (i.e., constants) depends on $d$ \cite{Wiki-FFT}.
\end{itemize}
\subsection{Modular Clipping}

In this section we cover third and final source of error -- modular arithmetic. This is introduced by the secure aggregation procedure.

We first define the modular clipping operation in a convenient form for real numbers.
\newcommand{\modular}[1]{M_{[#1]}}
\begin{defn}\label{defn:modular}
For $a<b$, define $\modular{a,b} : \mathbb{R} \to [a,b]$ by $M(x) = x + (b-a) \cdot n$ where $n \in \mathbb{Z}$ is chosen so that $x + (b-a) \cdot n \in [a,b]$. (Ties are broken arbitrarily.) We also define $\modular{a,b}(x) = (\modular{a,b}(x_1), \modular{a,b}(x_2), \cdots, \modular{a,b}(x_d)) \in [a,b]^d$ for $x \in \mathbb{R}^d$.
\end{defn}

Modular arithmetic is usually performed over $\mathbb{Z}_m$, which we equate with the set of integers $\{0,1,2,\cdots,m-1\}$. Our algorithm maps real values to this set of integers. However, for our analysis, it will be convenient to imagine the modular clipping operation taking place directly over the real numbers. (This is completely equivalent to the usual view, but has the advantage of allowing us to perform the analysis over a centered interval $[-r,r]$, rather than $[0,m]$.) Specifically, our algorithm can be thought of as performing modular arithmetic over $\{\gamma(1-m/2), \gamma(2-m/2), \cdots, -\gamma, 0, \gamma, \cdots, \gamma(m/2-1), \gamma(m/2)\}$ instead of $\mathbb{Z}_m$. This operation is denoted as $\modular{-m\gamma/2,m\gamma/2}$.

Note that definition \ref{defn:modular} does not specify whether $\modular{a,b}(a)=a$ or $\modular{a,b}(a)=b$ (and likewise for $\modular{a,b}(b)$). Thus our analysis does not depend on how this choice is made. 

A key property of the modular operation is that it is homomorphic:
\begin{equation}
    \forall a<b ~~\forall x, y \in \mathbb{R} ~~~~~ \modular{a,b}(x+y) = \modular{a,b}(\modular{a,b}(x)+\modular{a,b}(y)).
\end{equation}

Our goal is to analyze $\modular{a,b}(A(x))$, where $A$ is as in Proposition \ref{prop:dg-rr-sum}. The modular clipping arises from the secure aggregation step, which works over a finite group. Note that $A$ discretizes the values (although this is not crucial for this part of the analysis).

We want to ensure that $\modular{a,b}(A(x)) \approx x$. We have already established that $A(x) \approx x$ and our goal is now to analyze the modular clipping operation. If $A(x) \in [a,b]^d$, then $\modular{a,b}(A(x))=A(x)$ and we are in good shape; thus our analysis centers on ensuring that this is the case.

We will use the fact that the flattening operation, as well as the randomized rounding and noise addition, result in each coordinate being a centered subgaussian random variable. This allows us to bound the probability of straying outside $[a,b]$.

First we present a technical lemma.

\begin{lem}\label{lem:mod-exp}
    Let $r>0$ and $x \in \mathbb{R}$. Then \[|\modular{-r,r}(x)-x| \le 2r \cdot \left( \exp \left( \frac12 t \cdot\left( \frac{x}{r}-1\right) \right) + \exp \left( \frac12 t \cdot \left(\frac{-x}{r} - 1 \right) \right) \right)\] and \[(\modular{-r,r}(x)-x)^2 \le 4r^2 \cdot \left( \exp \left( t \cdot\left( \frac{x}{r}-1\right) \right) + \exp \left( t \cdot \left(\frac{-x}{r} - 1 \right) \right) \right)\] for all $t \ge \log 2$.
\end{lem}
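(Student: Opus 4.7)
The plan is to reduce the statement to a single elementary inequality about the integer wrap-count of $x/r$. The first observation is that both sides are symmetric under $x \mapsto -x$ (the left-hand side because $\modular{-r,r}(-x) = -\modular{-r,r}(x)$, the right-hand side because the two exponentials just swap roles) and each term on the right is nonnegative. So it suffices to prove each inequality assuming $x \ge 0$ and keeping only the first exponential term.

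With $x \ge 0$, I would write $x = 2kr + \theta r$ with $k \in \mathbb{Z}_{\ge 0}$ and $\theta \in [-1,1]$. This covers every nonnegative $x$, and at the boundary points $x \in (2\mathbb{Z}+1)r$ the two admissible choices of $k$ correspond exactly to the arbitrary tie-breaking allowed in Definition \ref{defn:modular}. With this parametrization $\modular{-r,r}(x) = \theta r$, so $|\modular{-r,r}(x) - x| = 2kr$. The first inequality then reduces to
\[
2kr \;\le\; 2r\,\exp\!\left(\tfrac{t}{2}(2k+\theta - 1)\right),
\]
and the second to $4k^2r^2 \le 4r^2\exp\!\bigl(t(2k+\theta - 1)\bigr)$. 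Both right-hand sides are increasing in $\theta$, so the worst case is $\theta = -1$, and both reduce to the same clean condition
\[
k \;\le\; \exp\!\bigl(t(k-1)\bigr) \qquad \text{for every integer } k \ge 0.
\]

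The final step is to verify this inequality whenever $t \ge \log 2$. The cases $k=0$ (right-hand side $\ge 1$) and $k=1$ (equality) are immediate. For $k \ge 2$, taking logs turns the inequality into $t \ge \log k/(k-1)$, and I would check that the function $k \mapsto \log k/(k-1)$ is decreasing on the integers $k \ge 2$, for example by differentiating the real extension and reducing to the standard bound $\log k > 1 - 1/k$ for $k > 1$. Its maximum over $k \ge 2$ is therefore attained at $k=2$ and equals $\log 2$, giving exactly the hypothesis of the lemma.

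There is no serious obstacle here; the substance of the lemma is really the monotonicity of $\log k/(k-1)$, which pins down the precise constant $\log 2$ and explains why the hypothesis is sharp (it is tight at $x = 3r$, where a single wrap costs $4r$ and the exponential must absorb exactly a factor of $2$). The only thing to be mildly careful about is the tie-breaking at $x \in (2\mathbb{Z}+1)r$, and the parametrization above absorbs it automatically by admitting two values of $k$ at those points.
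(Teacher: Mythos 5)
Your proof is correct and follows essentially the same route as the paper's: both bound the modular error by $2r$ times the wrap count and then dominate that count by the exponential, with the constant $\log 2$ pinned down by the $k=2$ case (the paper phrases this as $|\modular{-r,r}(x)-x|\le 2r\lfloor\frac{|x|+r}{2r}\rfloor$ together with $\lfloor u\rfloor\le e^{t(u-1)}$ for $t\ge\log 2$, proved by convexity, whereas you prove the equivalent integer inequality $k\le e^{t(k-1)}$ via monotonicity of $\log k/(k-1)$). Your parametrization also handles the tie-breaking cleanly, so nothing is missing.
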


\ifarxiv 
\begin{proof}
    We have $|\modular{-r,r}(x)-x| \le 2r \cdot \left\lfloor \frac{|x|+r}{2r} \right\rfloor$ for all $x \in \mathbb{R}$. 
    We also have $\lfloor x \rfloor \le e^{t\cdot(x-1)}$ for all $x \in \mathbb{R}$ and all $t \ge \log 2$. (For $x<1$, we have $\lfloor x \rfloor \le 0 \le e^{t \cdot (x-1)}$. For $1 \le x < 2$, we have $\lfloor x \rfloor = e^0 \le e^{t(x-1)}$. We have $e^{t(1-1)}=1$ and $e^{t(2-1)} \ge 2$; since $e^{t(x-1)}$ is convex, this implies $\lfloor x \rfloor \le x \le e^{t(x-1)}$ for $x\ge 2$.)
    Thus \[|\modular{-r,r}(x)-x| \le 2r \cdot \exp \left( t \cdot \frac{|x|-r}{2r} \right) \le 2r \cdot \left( \exp \left( t \cdot \frac{x-r}{2r} \right) + \exp \left( t \cdot \frac{-x-r}{2r} \right) \right)\] and, hence, 
    \begin{align*}
        (\modular{-r,r}(x)-x)^2 &\le 4r^2 \cdot \exp \left( t \cdot \frac{|x|-r}{r} \right)
        \le 4r^2 \cdot \exp \left( t \cdot \frac{x-r}{r} \right) + 4r^2 \cdot \exp \left( t \cdot \frac{-x-r}{r} \right)
    \end{align*}
    for all $t \ge \log 2$ and $x \in \mathbb{R}$, as required.
\end{proof}
\fi

Now we have our bound on the error of modular clipping:

\begin{prop}[Error of Modular Clipping]\label{prop:mod-err}
    Let $a<b$ and $\omega,\sigma>0$ satisfy $\sigma \le (b-a)/2$.
    Let $X \in \mathbb{R}$ satisfy $\ex{}{e^{tX}} \le \omega \cdot e^{t^2\sigma^2/2}$ for all $t \in \mathbb{R}$.
    Then \[\ex{}{\left| \modular{a,b}(X)-X \right|} \le (b-a) \cdot \omega \cdot e^{-(b-a)^2/8\sigma^2}\cdot \left( e^{\frac{a^2-b^2}{4\sigma^2}} + e^{\frac{b^2-a^2}{4\sigma^2}} \right)\] and \[\ex{}{\left( \modular{a,b}(X)-X \right)^2} \le (b-a)^2 \cdot \omega \cdot e^{-(b-a)^2/8\sigma^2}\cdot \left( e^{\frac{a^2-b^2}{4\sigma^2}} + e^{\frac{b^2-a^2}{4\sigma^2}} \right).\]
\end{prop}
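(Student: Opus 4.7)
My plan is to reduce to the centered interval $[-r,r]$ with $r=(b-a)/2$ via a shift, apply Lemma~\ref{lem:mod-exp} pointwise, and then take expectations using the subgaussian moment generating function assumption, choosing the free parameter $t$ optimally in the resulting exponent.

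First I would set $c=(a+b)/2$ and $Y = X-c$. The modular reduction enjoys the shift covariance
\[
    \modular{a,b}(X) - X \;=\; \modular{-r,r}(Y) - Y,
\]
since adding a constant to both the input and the endpoints just shifts the output by that constant. Moreover $Y$ inherits a subgaussian MGF bound:
\[
    \ex{}{e^{tY}} \;=\; e^{-tc}\ex{}{e^{tX}} \;\le\; \omega \cdot e^{t^2\sigma^2/2 - tc}\qquad\forall t\in\mathbb{R}.
\]
This reduction is the only conceptually nontrivial step; once we are on $[-r,r]$ we can feed $Y$ directly into Lemma~\ref{lem:mod-exp}.

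Next I would apply Lemma~\ref{lem:mod-exp} to $Y$, which for any $t \ge \log 2$ gives
\[
    |\modular{-r,r}(Y) - Y| \;\le\; 2r\Bigl( e^{\tfrac{t}{2}(Y/r - 1)} + e^{\tfrac{t}{2}(-Y/r - 1)}\Bigr),
\]
and similarly (with $t$ replacing $t/2$) for the squared error. Taking expectations and using the MGF bound on $Y$ twice (at $\pm t/(2r)$ for the first moment, at $\pm t/r$ for the second), we obtain
\[
    \ex{}{|\modular{a,b}(X)-X|} \;\le\; 2r\,\omega\, e^{\tfrac{t^2\sigma^2}{8r^2}-\tfrac{t}{2}}\Bigl(e^{tc/2r}+e^{-tc/2r}\Bigr).
\]
The main (mild) obstacle is just choosing $t$ to align the $\pm tc/(2r)$ factors with the target exponents $\pm(b^2-a^2)/(4\sigma^2) = \pm rc/\sigma^2$. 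This forces $t = 2r^2/\sigma^2$ in the first-moment bound (and $t = r^2/\sigma^2$ for the second-moment bound); a short calculation then shows that the remaining exponent simplifies to $-r^2/(2\sigma^2) = -(b-a)^2/(8\sigma^2)$, matching the stated bound exactly.

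Finally I would verify the hypothesis $t\ge\log 2$ needed for Lemma~\ref{lem:mod-exp}: since we assumed $\sigma \le (b-a)/2 = r$, we have $r/\sigma \ge 1$, so $t = r^2/\sigma^2 \ge 1 > \log 2$ (and a fortiori $t = 2r^2/\sigma^2 > \log 2$). Substituting the optimal $t$ into the two displayed inequalities, converting back using $2r = b-a$ and $2rc = (b-a)(a+b)/2 \cdot 2 /1 $, i.e.\ $tc/(2r) = rc/\sigma^2 = (b^2-a^2)/(4\sigma^2)$, yields the two stated bounds. No heavier tool than Lemma~\ref{lem:mod-exp} and the hypothesis on $X$ is needed.
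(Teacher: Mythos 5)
Your proposal is correct and follows essentially the same route as the paper's proof: center to $[-r,r]$ with $r=(b-a)/2$, apply Lemma~\ref{lem:mod-exp}, evaluate the MGF of the shifted variable at $\pm t/(2r)$ (resp.\ $\pm t/r$), and set $t=2r^2/\sigma^2$ (resp.\ $t=r^2/\sigma^2$), which is exactly the paper's $t=(b-a)^2/2\sigma^2$ (resp.\ $(b-a)^2/4\sigma^2$). Your explicit check that $\sigma\le r$ guarantees $t\ge\log 2$ is a detail the paper states without justification, so that is a small plus.
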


\ifarxiv 
\begin{proof}
    First we center: Let $c=(a+b)/2$ and $r=(b-a)/2$. Then $\modular{a,b}(x) = \modular{-r,r}(x-c)+c$ and  $|\modular{a,b}(x)-x|=|\modular{-r,r}(x-c)-(x-c)|$ for all $x \in \mathbb{R}$. Let $X' = X-c$.
    
    By Lemma \ref{lem:mod-exp}, for all $t \ge \log 2$,
    \begin{align*}
        \ex{}{\left| \modular{a,b}(X)-X \right|} &= \ex{}{\left| \modular{-r,r}(X')-X' \right|} \\
        &\le 2r \cdot \ex{}{ \exp \left( \frac12 t \cdot\left( \frac{X'}{r}-1\right) \right) + \exp \left( \frac12 t \cdot \left(\frac{-X'}{r} - 1 \right) \right) }\\
        &\le 2r \cdot \omega \cdot e^{t^2\sigma^2/8r^2-t/2} \cdot \left( e^{-tc/2r} + e^{tc/2r} \right)\\
        &= (b-a) \cdot \omega \cdot e^{t^2\sigma^2/2(b-a)^2-t/2} \cdot \left( e^{-\frac{t}{2}\cdot\frac{a+b}{b-a}} + e^{\frac{t}{2}\cdot\frac{a+b}{b-a}} \right).
    \end{align*}
    Set $t=(b-a)^2/2\sigma^2 \ge \log 2$ to obtain the first part of the result.
    
    By Lemma \ref{lem:mod-exp}, for all $t \ge \log 2$,
    \begin{align*}
        \ex{}{\left( \modular{a,b}(X)-X \right)^2} &= \ex{}{\left( \modular{-r,r}(X')-X' \right)^2} \\
        &\le 4r^2 \cdot \ex{}{ \exp \left( t \cdot\left( \frac{X'}{r}-1\right) \right) + \exp \left( t \cdot \left(\frac{-X'}{r} - 1 \right) \right) }\\
        &\le 4r^2 \cdot \omega \cdot e^{t^2\sigma^2/2r^2-t} \cdot \left( e^{-tc/r} + e^{tc/r} \right)\\
        &= (b-a)^2 \cdot \omega \cdot e^{2t^2\sigma^2/(b-a)^2-t} \cdot \left( e^{-t\frac{a+b}{b-a}} + e^{t\frac{a+b}{b-a}} \right).
    \end{align*}
    Set $t=(b-a)^2/4\sigma^2 \ge \log 2$ to obtain the second part of the result.
\end{proof}
\fi

\subsection{Putting Everything Together}

We have now analyzed the three sources of error -- randomized rounding, privacy-preserving noise, and modular arithmetic. It remains to combine these results. This yields our main result:

\begin{thm}[Main Theoretical Result]\label{thm:main-guarantee}
Let $\beta \in [0,1)$, $\sigma^2 \ge \frac12 \gamma > 0$, and $c>0$.
Let $n, d \in \mathbb{N}$ and $\rho \ge 1$. Let $U \in \mathbb{R}^{d \times d}$ be a random unitary matrix such that \[\forall x \in \mathbb{R}^d ~~ \forall i \in [d] ~~ \forall t \in \mathbb{R} ~~~~~ \ex{}{\exp(t (Ux)_i)} \le \exp(t^2 \rho \|x\|_2^2 / 2d).\]
Let 
\begin{align}
    \Delta_2^2 &:= \min \left\{ \begin{array}{c} c^2 + \frac14 \gamma^2 d + \sqrt{2\log(1/\beta)} \cdot \gamma \cdot \left(c + \frac12 \gamma \sqrt{d}\right) ,\\ \left( c + \gamma \sqrt{d} \right)^2 \end{array} \right\},\\
    G &:= \left\{ y \in \mathbb{R}^d : \|y\|_2^2 \le \Delta_2^2 \right\},\\
    \tau &:= 10 \cdot \sum_{k=1}^{n-1} e^{-2\pi^2\frac{\sigma^2}{\gamma^2} \cdot \frac{k}{k+1}},\\
    \varepsilon &:= \min \left\{ \begin{array}{c} \sqrt{ \frac{\Delta_2^2}{n\sigma^2} + 2 \tau d} ,\\ \frac{\Delta_2}{\sqrt{n} \sigma} + \tau \sqrt{d} \end{array} \right\}.
\end{align} 
Let $R_\gamma^G$ be as in Definition \ref{defn:crr}.
Let $r>0$ and let $\modular{-r,r}$ be as in Definition \ref{defn:modular}.
Define a randomized algorithm $\tilde A : (\mathbb{R}^d)^n \to \gamma \mathbb{Z}^d$ by
\begin{equation}
    \tilde A(x) = U^T \modular{-r,r}\left( \sum_i^n R_\gamma^G\left(\min\left\{1, \frac{c}{\|x_i\|_2} \right\} \cdot U x_i\right) + \gamma \cdot Y_i \right),
\end{equation}
where $Y_1, \cdots, Y_n \in \mathbb{Z}^d$ are independent random vectors with each entry drawn independently from $\mathcal{N}_{\mathbb{Z}}(0,\sigma^2/\gamma^2)$.

Then $\tilde A$ satisfies $\frac12 \varepsilon^2$-concentrated differential privacy.\footnote{Note that this is with respect to the addition or removal of an element, not replacement. To keep $n$ fixed, we would need addition/removal to be defined to simply zero-out the relevant vectors.} 

Let $x_1, \cdots, x_n \in \mathbb{R}^d$ with $ \|x_i\|_2 \le c$ for all $i \in [n]$. Let 
\begin{equation}
    \hat\sigma^2(x) := \frac{\rho}{d} \left\| \sum_i^n x_i \right\|_2^2 + \left( \frac{\gamma^2}{4} + \sigma^2 \right) \cdot n \le \frac{\rho}{d} c^2 n^2 + \left( \frac{\gamma^2}{4} + \sigma^2 \right) \cdot n
\end{equation}
If $\hat\sigma^2(x) \le r^2$, then 
\begin{equation}
    \ex{}{\left\|\tilde{A}(x) - \sum_i^n x_i\right\|_2^2} \le \frac{d \cdot n}{1-\beta} \cdot \left( \frac{2\sqrt{2} \cdot r \cdot e^{-r^2/4\hat\sigma^2(x)}}{\sqrt{n\cdot (1-\beta)^{n-1}}} + \sqrt{\frac{\gamma^2}{4} + \frac{\beta^2\gamma^2 n}{1-\beta} + (1-\beta)\sigma^2}\right)^2.\label{eqn:main-guarantee}
\end{equation}
\end{thm}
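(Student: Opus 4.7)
The plan is to prove the two parts of the theorem separately: the privacy claim reduces immediately to Proposition~\ref{prop:dg-rr-sum} via postprocessing, while the accuracy claim needs an error decomposition plus a coordinatewise subgaussian argument to invoke Proposition~\ref{prop:mod-err}. For privacy, I would observe that $\tilde A(x) = U^T \modular{-r,r}\bigl(A(Ux_1,\ldots,Ux_n)\bigr)$, where $A$ is the algorithm of Proposition~\ref{prop:dg-rr-sum}. Since $U$ is unitary we have $\|Ux_i\|_2 = \|x_i\|_2$, so the clipping threshold $c$ and consequently $\Delta_2$, $\tau$, and $\varepsilon$ are identical to those in that proposition. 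Conditioned on the (public) $U$, both the modular reduction and the multiplication by $U^T$ are deterministic, so the postprocessing property of zCDP (Lemma~\ref{lem:renyi-misc}) hands us the $\tfrac12 \varepsilon^2$-concentrated DP guarantee.

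For accuracy, let $S := \sum_i^n \bigl(R_\gamma^G(Ux_i) + \gamma Y_i\bigr)$ be the pre-clipping sum, so that $\tilde A(x) = U^T \modular{-r,r}(S)$. Unitarity of $U$ gives $\|\tilde A(x) - \sum_i x_i\|_2 = \|\modular{-r,r}(S) - U\sum_i x_i\|_2$, and applying Minkowski's inequality in $L^2(\mathbb{P})$ yields
\[
\sqrt{\ex{}{\|\tilde A(x) - \sum\nolimits_i x_i\|_2^2}} \le \sqrt{\ex{}{\|\modular{-r,r}(S) - S\|_2^2}} + \sqrt{\ex{}{\|S - U\sum\nolimits_i x_i\|_2^2}}.
\]
The second summand is handled directly by Proposition~\ref{prop:dg-rr-sum} applied to the rotated inputs (still of norm $\le c$); after factoring $\sqrt{dn/(1-\beta)}$ out of $\tfrac{\gamma^2 dn}{4(1-\beta)} + \bigl(\tfrac{\beta}{1-\beta}\gamma\sqrt{d}\,n\bigr)^2 + nd\sigma^2$, this matches the second term inside the parenthesis of \eqref{eqn:main-guarantee}.

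For the modular-clipping summand I apply Proposition~\ref{prop:mod-err} coordinatewise, which requires a centered subgaussian MGF bound on each $S_j$. Writing $S_j = (U\sum_i x_i)_j + Z_j$, the MGF inequality at the end of Proposition~\ref{prop:dg-rr-sum}, applied with $t = s\,e_j$, gives $\ex{}{e^{s Z_j} \mid U} \le (1-\beta)^{-n}\exp\bigl(s^2(\gamma^2/8 + \sigma^2/2)n\bigr)$ uniformly in $U$ and in the inputs. The hypothesis on $U$ in the theorem gives $\ex{U}{\exp(s (U\sum_i x_i)_j)} \le \exp\bigl(s^2 \rho \|\sum_i x_i\|_2^2/(2d)\bigr)$. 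Combining by outer expectation over $U$ and using that the inner bound is uniform in $U$, the two subgaussian exponents simply add to yield $\ex{}{e^{s S_j}} \le (1-\beta)^{-n} e^{s^2 \hat\sigma^2(x)/2}$. Since $\hat\sigma^2(x) \le r^2$ by hypothesis, Proposition~\ref{prop:mod-err} with $a=-r$, $b=r$ (so $a^2-b^2=0$ and the bracketed factor equals $2$) gives $\ex{}{(\modular{-r,r}(S_j)-S_j)^2} \le 8 r^2 (1-\beta)^{-n} e^{-r^2/(2\hat\sigma^2(x))}$; summing over $j \in [d]$, taking square roots, and extracting $\sqrt{dn/(1-\beta)}$ matches the first term inside the parenthesis of \eqref{eqn:main-guarantee}. \emph{The main obstacle} is precisely this MGF combination: it is essential that the rounding-plus-noise bound from Proposition~\ref{prop:dg-rr-sum} is uniform in the inputs so that conditioning on the shared-randomness $U$ is harmless and the two subgaussian exponents add to form $\hat\sigma^2(x)$; everything else --- the Minkowski step, matching constants, and turning $(1-\beta)^{-n}$ into $\sqrt{n(1-\beta)^{n-1}}$ after pulling out the global $dn/(1-\beta)$ prefactor --- is bookkeeping.
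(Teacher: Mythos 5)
Your proposal is correct and follows essentially the same route as the paper: privacy by postprocessing from Proposition~\ref{prop:dg-rr-sum}; accuracy by splitting into the rounding-plus-noise error (Proposition~\ref{prop:dg-rr-sum}) and the modular-clipping error, the latter handled coordinatewise via the combined subgaussian MGF bound with variance proxy $\hat\sigma^2(x)$ fed into Proposition~\ref{prop:mod-err}, then recombined as $(\sqrt{a}+\sqrt{b})^2$ (the paper phrases this via $\inf_{\lambda>0}(1+\lambda)a+(1+1/\lambda)b$, which is the same as your Minkowski step). Your explicit remark that the rounding-plus-noise MGF bound is uniform in $U$, so conditioning on the shared randomness is harmless, is a point the paper glosses over with ``by our assumption on $U$ (and independence)'' but is exactly the right justification.
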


There is a lot to unpack in Theorem \ref{thm:main-guarantee}. Let us work through the parameters:
\begin{itemize}
    \item $n$ is the number of individuals and $d$ is the dimension of the data.
    \item $c$ is the bound on $2$-norm of the individual data vectors.
    \item $\gamma$ is the granularity of the discretization -- we round data vectors to the integer grid $\gamma \mathbb{Z}^d$.
    \item $r$ is the range for the modular clipping -- our final sum ends up being clipped to $\gamma \mathbb{Z}^d \cap [-r,r]^d$. The secure aggregation does modular arithmetic over a group of size $m = 2r/\gamma$ (note $r$ should be a multiple of $\gamma$). This ratio determines the communication complexity.
    \item $\sigma^2$ is the variance of the individual discrete Gaussian noise that we add; the sum will have variance $n\sigma^2$. This determines the privacy; specifically $\varepsilon \approx \frac{c}{\sqrt{n}\sigma}$ and we attain $\frac12\varepsilon^2$-concentrated differential privacy.
    \item $\beta$ is a parameter that controls the conditional randomized rounding. $\beta=0$ yields unconditional randomized rounding, and larger $\beta$ entails more aggressive conditioning. It will be helpful to think of $\beta = \sqrt{\gamma/n}$; although, in practice, slightly larger $\beta$ may be preferable.
    \item $\rho$ measures how good the flattening matrix $U$ is (cf.~Lemma \ref{lem:flatHD}). Think of $\rho=1$ or at most $\rho\le2$.
    \item The other parameters -- $\Delta_2$, $G$, $\tau$, $\hat\sigma$ -- are not important, as they are determined by the previous parameters. $\Delta_2$ and $G$ determine how much the conditional randomized rounding can increase the norm (initially the norm is $c$). $\tau$ quantifies how far the sum of discrete Gaussians is from just a single discrete Gaussian and how this affects the differential privacy guarantee. The ratio $\hat\sigma/r$ measures how much error the modular clipping contributes. $\hat\sigma$ is determined by other parameters, but note that $\left\| \sum_i^n x_i \right\| \le \sum_i^n \|x_i\| \le cn$ may be a loose upper bound, in which case, the clipping error is less.
\end{itemize}
Now we look at the error bound. If we assume $\beta \le 1/\sqrt{n}$ and $\hat\sigma^2(x) \le r^2/4\log(r\sqrt{n}/\gamma^2)$, then the guarantee \eqref{eqn:main-guarantee} is simply \[\ex{}{\left\|\tilde{A}(x)-\sum_i^n x_i\right\|_2^2} \le O\left(dn\left(\sigma^2 + \gamma^2\right)\right). \]
The first term is roughly the cost of privacy -- $dn\sigma^2 \approx d \frac{c^2}{\varepsilon^2}$. The second term, $dn\gamma^2$, is the cost of randomized rounding and modular clipping. (We have assumed $\beta$ and $\hat\sigma$ are sufficiently small to avoid any additional terms.) 

\ifarxiv 
\begin{proof}[Proof of Theorem \ref{thm:main-guarantee}]
    The differential privacy guarantee follows from the postprocessing property of differential privacy and Proposition \ref{prop:dg-rr-sum} (which, in turn, applies Proposition \ref{prop:dg-sum-priv}).
    
    Now we turn to the utility analysis. Let $A$ be as in Proposition \ref{prop:dg-rr-sum}. Then $\tilde{A}(x) = U^T \modular{-r,r}(A(Ux))$, where $Ux = (Ux_1, Ux_2, \cdots, Ux_n)$.
    Proposition \ref{prop:dg-rr-sum} gives us the following guarantees.
    \begin{align*}
    \left\|\ex{}{A(Ux)}-U\sum_i^n x_i\right\|_2 &\le \frac{\beta \cdot \gamma \cdot \sqrt{d} \cdot n}{1-\beta}.\\
    \ex{}{\left\| A(Ux) - \ex{}{A(Ux)} \right\|_2^2 } &\le \frac{\gamma^2 \cdot d \cdot n}{4(1-\beta)} + n \cdot d \cdot \sigma^2.\\
    \ex{}{\left\| A(Ux) - U\sum_i^n x_i \right\|_2^2 } &\le \frac{\gamma^2 \cdot d \cdot n}{4(1-\beta)} + \left( \frac{\beta}{1-\beta} \gamma \sqrt{d} n \right)^2 +  n \cdot d \cdot \sigma^2\\
    \forall t \in \mathbb{R} ~ \forall j \in [d] ~~~ \ex{}{\exp\left( t \cdot \left( A(Ux) - U\sum_i^n x_i \right)_j \right)} &\le \frac{\exp \left( \left(\frac{\gamma^2}{8} + \frac{\sigma^2}{2} \right) \cdot t^2 \cdot n \right)}{(1-\beta)^n}.
    \end{align*}
    By our assumption on $U$ (and independence) we have \[\forall t \in \mathbb{R} ~ \forall j \in [d] ~~~ \ex{}{\exp\left( t \cdot \left( A(Ux) \right)_j \right)} \le \exp\left(\frac{t^2\rho}{2d} \left\| \sum_i^n x_i \right\|_2^2\right) \cdot \frac{\exp \left( \left(\frac{\gamma^2}{8} + \frac{\sigma^2}{2} \right) \cdot t^2 \cdot n \right)}{(1-\beta)^n}.\]
    Recall $\hat\sigma^2(x) = \frac{\rho}{d} \left\| \sum_i^n x_i \right\|_2^2 + \left( \frac{\gamma^2}{4} + \sigma^2 \right) \cdot n$.
    By Proposition \ref{prop:mod-err}, for all $j \in [d]$, \[\ex{}{\left| \modular{a,b}(A(Ux))_j-A(Ux)_j \right|} \le (b-a) \cdot \frac{1}{(1-\beta)^n} \cdot e^{-(b-a)^2/8\hat\sigma^2(x)}\cdot \left( e^{\frac{a^2-b^2}{4\hat\sigma^2}} + e^{\frac{b^2-a^2}{4\hat\sigma^2}} \right)\] and \[\ex{}{\left( \modular{a,b}(A(Ux))_j-A(Ux)_j \right)^2} \le (b-a)^2 \cdot \frac{1}{(1-\beta)^n} \cdot e^{-(b-a)^2/8\hat\sigma^2(x)}\cdot \left( e^{\frac{a^2-b^2}{4\hat\sigma^2}} + e^{\frac{b^2-a^2}{4\hat\sigma^2}} \right),\] where $a=-r$ and $b=r$ here.
    Summing over $j \in [d]$ yields 
    \begin{align*}
        \ex{}{\left\| \modular{-r,r}(A(Ux))-A(Ux) \right\|_2^2} \le 4r^2 \cdot \frac{d}{(1-\beta)^n} \cdot e^{-r^2/2\hat\sigma^2(x)}\cdot 2.
    \end{align*}
    For all $u,v \in \mathbb{R}$, we have $(u+v)^2 = \inf_{\lambda>0} (1+\lambda)u^2+(1+1/\lambda)v^2$. Thus
    \begin{align*}
        &\ex{}{\left\|\tilde{A}(x) - \sum_i^n x_i\right\|_2^2} \\
        &= \ex{}{\left\|\left(U^T \modular{-r,r}(A(Ux)) - U^T A(Ux)\right) + \left( U^T A(Ux) - U^T \sum_i^n Ux_i \right)\right\|_2^2}\\
        &= \ex{}{\left\|\left(\modular{-r,r}(A(Ux)) - A(Ux)\right) + \left( A(Ux) - \sum_i^n Ux_i \right)\right\|_2^2}\\
        &\le \inf_{\lambda>0} (1+\lambda)\ex{}{\left\|\modular{-r,r}(A(Ux)) - A(Ux)\right\|_2^2} + (1+1/\lambda)\ex{}{\left\| A(Ux) - \sum_i^n Ux_i \right\|_2^2}\\
        &\le \inf_{\lambda>0} (1+\lambda) \cdot 4r^2 \cdot \frac{d}{(1-\beta)^n} \cdot e^{-r^2/2\hat\sigma^2(x)}\cdot 2 + (1+1/\lambda) \cdot \left( \frac{\gamma^2 \cdot d \cdot n}{4(1-\beta)} + \left( \frac{\beta}{1-\beta} \gamma \sqrt{d} n \right)^2 +  n \cdot d \cdot \sigma^2 \right) \\
        &= \left(\sqrt{8r^2 \cdot \frac{d}{(1-\beta)^n} \cdot e^{-r^2/2\hat\sigma^2(x)}} + \sqrt{\frac{\gamma^2 \cdot d \cdot n}{4(1-\beta)} + \left( \frac{\beta}{1-\beta} \gamma \sqrt{d} n \right)^2 +  n \cdot d \cdot \sigma^2} \right)^2\\
        &= \frac{d \cdot n}{1-\beta} \cdot \left( \frac{2\sqrt{2} \cdot r \cdot e^{-r^2/4\hat\sigma^2(x)}}{\sqrt{n (1-\beta)^{n-1}}} + \sqrt{\frac{\gamma^2}{4} + \frac{\beta^2\gamma^2 n}{1-\beta} + (1-\beta)\sigma^2}\right)^2.
    \end{align*}
\end{proof}
\fi

We gave an asymptotic version of Theorem \ref{thm:main-guarantee} in the introduction.

\ifarxiv 
Finally, we analyse how to set the parameters to obtain this bound. Note that we do not attempt to optimize constants here at all.
\begin{proof}[Proof of Theorem \ref{thm:intro-accuracy}.]
Theorem \ref{thm:main-guarantee} gives the following parameters.
\begin{align*}
    \Delta_2^2 &:= \min \left\{ \begin{array}{c} c^2 + \frac14 \gamma^2 d + \sqrt{2\log(1/\beta)} \cdot \gamma \cdot \left(c + \frac12 \gamma \sqrt{d}\right) ,\\ \left( c + \gamma \sqrt{d} \right)^2 \end{array} \right\},\\
    \tau &:= 10 \cdot \sum_{k=1}^{n-1} e^{-2\pi^2\frac{\sigma^2}{\gamma^2} \cdot \frac{k}{k+1}},\\
    \varepsilon &:= \min \left\{ \begin{array}{c} \sqrt{ \frac{\Delta_2^2}{n\sigma^2} + 2 \tau d} ,\\ \frac{\Delta_2}{\sqrt{n} \sigma} + \tau \sqrt{d} \end{array} \right\},\\
    \hat\sigma^2(x) &:= \frac{\rho}{d} \left\| \sum_i^n x_i \right\|_2^2 + \left( \frac{\gamma^2}{4} + \sigma^2 \right) \cdot n \\
    &\le \frac{\rho}{d} c^2 n^2 + \left( \frac{\gamma^2}{4} + \sigma^2 \right) \cdot n,\\
    \ex{}{\left\|\tilde{A}(x) - \sum_i^n x_i\right\|_2^2} &\le \frac{d \cdot n}{1-\beta} \cdot \left( \frac{2\sqrt{2} \cdot r \cdot e^{-r^2/4\hat\sigma^2(x)}}{\sqrt{n (1-\beta)^{n-1}}} + \sqrt{\frac{\gamma^2}{4} + \frac{\beta^2\gamma^32n}{1-\beta} + (1-\beta)\sigma^2}\right)^2.
\end{align*}
Note that $r=\frac12 \gamma m$.
All we must do is verify that setting the parameters as specified in Theorem \ref{thm:intro-accuracy} yields $\frac12\varepsilon^2$-concentrated DP and the desired accuracy.
First,
\begin{align*}
    \varepsilon^2 &\le \frac{\Delta_2^2}{n\sigma^2} + 2 \tau d\\
    &\le \frac{(c+\gamma\sqrt{d})^2}{n\sigma^2} + 20 nd e^{-\pi^2 (\sigma/\gamma)^2}\\
    &\le  \frac{2c^2}{n \sigma^2} + \frac{2d}{n(\sigma/\gamma)^2} + 20nd e^{-\pi^2 (\sigma/\gamma)^2}.
\end{align*}
Thus the privacy requirement is satisfied as long as $\sigma \ge 2c/\varepsilon\sqrt{n}$ and $(\sigma/\gamma)^2 \ge 8d/\varepsilon^2n$,  and $20nd e^{\pi^2 (\sigma/\gamma)^2} \le \varepsilon^2/4$. So we can set \[ \sigma = \max \left\{ \frac{2c}{\varepsilon \sqrt{n}} , \frac{\gamma \sqrt{8d}}{\varepsilon \sqrt{n}}, \frac{\gamma}{\pi^2} \log \left( \frac{80nd}{\varepsilon^2} \right) \right\} = \tilde{\Theta}\left(\frac{c}{\varepsilon\sqrt{n}}+\sqrt{\frac{d}{n}}\cdot\frac{\gamma}{\varepsilon}\right).\]

We set $\beta = \min\{ 1/n , 1/2 \} = \Theta\left(\frac 1 n\right)$.

Next
\begin{align*}
    \hat\sigma^2 &\le \frac{\rho}{d} c^2 n^2 + \left( \frac{\gamma^2}{4} + \sigma^2 \right) \cdot n,\\
    &\le \frac{c^2n^2}{d} + \gamma^2 n + \sigma^2 n\\
    &\le O\left( \frac{c^2n^2}{d} + \gamma^2 n + \frac{c^2}{\varepsilon^2} + \frac{\gamma^2 d}{\varepsilon^2} + \gamma^2 n \log^2 \left( \frac{nd}{\varepsilon^2} \right)\right)\\
    &\le O\left( \frac{c^2n^2}{d}  + \frac{c^2}{\varepsilon^2} \right) + \gamma^2 \cdot O\left(n + \frac{d}{\varepsilon^2} +  n \log^2 \left( \frac{nd}{\varepsilon^2} \right)\right).
\end{align*}

Now we work out the asymptotics of the accuracy guarantee:
\begin{align*}
    &\!\ex{}{\left\|\tilde{A}(x) - \sum_i^n x_i\right\|_2^2} \\
    &\le \frac{d \cdot n}{1-\beta} \cdot \left( \frac{2\sqrt{2} \cdot r \cdot e^{-r^2/4\hat\sigma^2(x)}}{\sqrt{n(1-\beta)^{n-1}}} + \sqrt{\frac{\gamma^2}{4} + \frac{\beta^2\gamma^2 n}{1-\beta} + (1-\beta)\sigma^2}\right)^2\\
    &\le O \left( nd \left( \frac{r e^{-r^2/4\hat\sigma^2}}{\sqrt{n}} + \sqrt{\gamma^2 + \gamma^2 + \sigma^2}\right)^2 \right)\\
    &\le O \left( nd \left( \frac{r^2}{n} e^{-r^2/2\hat\sigma^2} + \gamma^2 + \sigma^2 \right)\right)\\
    &\le O \left( nd \left( \frac{\gamma^2m^2}{n} \exp\left(\frac{-\gamma^2m^2}{8\hat\sigma^2}\right) + \gamma^2 + \frac{c^2}{\varepsilon^2 n} + \frac{\gamma^2 d}{\varepsilon^2 n} + \gamma^2 \log^2 \left( \frac{nd}{\varepsilon^2} \right) \right)\right)\\
    &= O \left(\frac{c^2 d}{\varepsilon^2} + \gamma^2 nd \left( \frac{m^2}{n}\exp\left(\frac{-\gamma^2m^2}{8\hat\sigma^2}\right) + 1 +    \frac{d}{\varepsilon^2 n} + \log^2 \left( \frac{nd}{\varepsilon^2} \right) \right) \right).
\end{align*}
Now we wish to set $\gamma$ so that $ \frac{m^2}{n}\exp\left(\frac{-\gamma^2m^2}{8\hat\sigma^2}\right) \le 1$ -- i.e., $\gamma \ge \frac{\hat\sigma}{m} \sqrt{8 \log(1+m^2/n)}$. However, simply setting $\gamma = \frac{\hat\sigma^*}{m} \sqrt{8 \log(1+m^2/n)}$, where $\hat\sigma^*$ is our upper bound on $\hat\sigma$, is cyclic, because our bound on $\hat\sigma$ depends on $\gamma$. 
Fortunately, we can resolve this as long as the coefficient in this cycle is $\le 1 - \Omega(1)$. That coefficient is $O\left(n + \frac{d}{\varepsilon^2} +  n \log^2 \left( \frac{nd}{\varepsilon^2} \right)\right) \cdot \frac{\log(1+m^2/n)}{m^2}$.
A sufficient condition for this is \[m^2 \ge O\left(n + \frac{d}{\varepsilon^2} +  n \log^2 \left( \frac{nd}{\varepsilon^2} \right)\right) \cdot {\log(1+m^2/n)} = \tilde{O}\left(n + d/\varepsilon^2\right).\]
Thus we can set \[\gamma^2 = O\left( \frac{c^2n^2}{d}  + \frac{c^2}{\varepsilon^2} \right) \cdot \frac{\log(1+m^2/n)}{m^2} \] and we obtain
\begin{align*}
    &\!\ex{}{\left\|\tilde{A}(x) - \sum_i^n x_i\right\|_2^2} \\
    &\le  O \left(\frac{c^2 d}{\varepsilon^2} + \gamma^2 nd \left( 1  + 1 +    \frac{d}{\varepsilon^2n} + \log^2 \left( \frac{nd}{\varepsilon^2} \right) \right) \right)\\
    &\le  O \left(\frac{c^2 d}{\varepsilon^2} + \left( \frac{c^2n^2}{d}  + \frac{c^2}{\varepsilon^2} \right) \cdot \frac{\log(1+m^2/n)}{m^2} nd \left( 1  +   \frac{d}{\varepsilon^2n} + \log^2 \left( \frac{nd}{\varepsilon^2} \right) \right) \right)\\
    &\le  O \left(\frac{c^2 d}{\varepsilon^2} + \frac{c^2 d}{\varepsilon^2} \left( \frac{\varepsilon^2 n^2}{d}  + 1 \right) \cdot \frac{\log(1+m^2/n)}{m^2} \left( n  +   \frac{d}{\varepsilon^2} + n \log^2 \left( \frac{nd}{\varepsilon^2} \right) \right) \right)\\
    &\le  O \left(\frac{c^2 d}{\varepsilon^2} \left( 1 + \frac{\log(1+m^2/n)}{m^2} \cdot  \left( n^2 + \frac{d}{\varepsilon^2} + \left( \frac{\varepsilon^2 n^3}{d} + n \right)\log^2 \left( \frac{nd}{\varepsilon^2} \right) \right) \right)\right).
\end{align*}
Thus, if \[m^2 \ge O\left({\log(1+m^2/n)} \cdot  \left( n^2 + \frac{d}{\varepsilon^2} + \left( \frac{\varepsilon^2 n^3}{d} + n \right)\log^2 \left( \frac{nd}{\varepsilon^2} \right) \right)\right) = \tilde{O}\left(n^2 + \frac{\varepsilon^2n^3}{d} + \frac{d}{\varepsilon^2} \right),\] then the mean squared error is $O(c^2d/\varepsilon^2)$, as required.
\end{proof}
\fi

\section{Experiments} \label{sec:experiments}
\begin{figure*}[t]
    \centering
    \includegraphics[width=\linewidth]{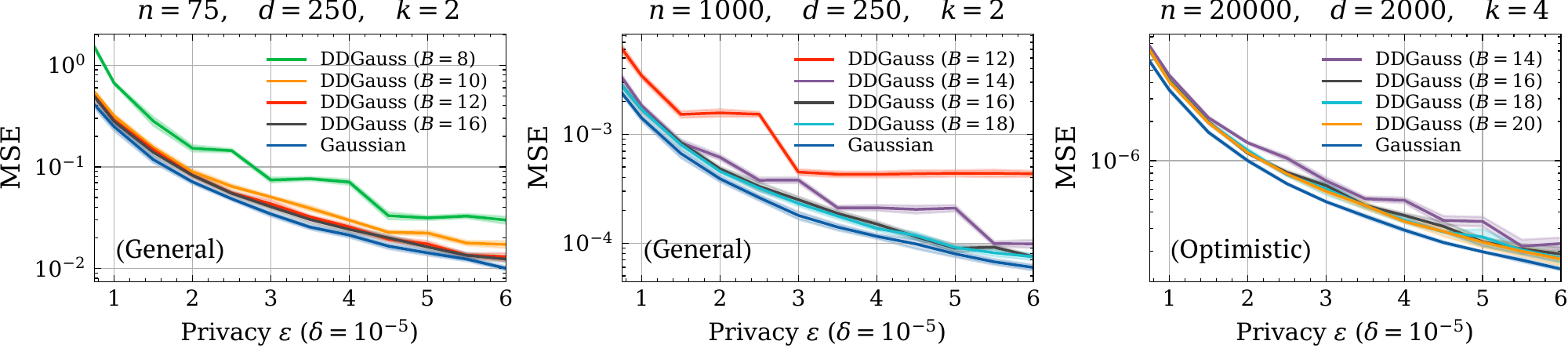}
    \vspace{-2em}
    \caption{Distributed mean estimation with the distributed discrete Gaussian. $n$: number of clients. $d$: vector dimension. $k$: number of stddevs of $\sum_i^n  \tilde{x}_i + y_i$ to bound. $B$: per-coordinate bit-width. General/Optimistic: assumes $\left\| \sum_i^n x_i \right\|_2 \le cn$ or $\le c\sqrt n$ for choosing $\gamma$.}
    \label{fig:main-dme-results}
\end{figure*}

We empirically evaluate the distributed discrete Gaussian mechanism (DDGauss) on two tasks: distributed mean estimation (DME) and federated learning (FL). 
Our goal is to demonstrate that the utility of DDGauss matches that of the continuous Gaussian mechanism under the same privacy guarantees when given sufficient communication budget.
For both tasks, the top-level parameters include the number of participating clients $n$, the $\ell_2$ norm bound for the client vectors $c$, the dimension $d$, the privacy budget $\varepsilon$, and the bit-width $B$ which determines the modulo field size $m = 2^B$. For FL, we also consider the number of rounds $T$ and the total number of clients $N$ from which we randomly sample $n$ clients in each round. 
We fix the conditional rounding bias to $\beta = e^{-1/2}$ unless otherwise stated.

To select the granularity parameter $\gamma$, we carefully balance the errors from randomized rounding and modular clipping.
From the earlier sections, we know that each entry of $\sum_i^n \tilde{x}_i + y_i$ is subgaussian with known constants. Thus, for a fixed $B$, we can choose $\gamma$ to ensure that the modular clipping range includes $k$ standard deviations of $\sum_i^n \tilde{x}_i + y_i$.
Specifically, the heuristic is to select $\gamma$ such that $2k\hat\sigma$ is bounded within the modulo field size $2^B$ where $\hat\sigma^2 =  \frac{c^2 n^2} {d} + \left( \frac{\gamma^2}{4} + \sigma^2 \right) \cdot n$ (Theorem~\ref{thm:main-guarantee}).
Here, $k$ captures the trade-off between the errors from quantization and modular clipping and should be application-dependent. A small $k$ leads to a small $\gamma$ and thus less error from rounding but more error from modular clipping; a large $k$ means modular clipping happens rarely but at a cost of more rounding error.

\subsection{Distributed Mean Estimation} 

In this experiment, $n$ clients each hold a $d$-dimensional vector $x_i$ uniformly randomly sampled from the $\ell_2$ sphere with radius $c=10$. 
We compute the ground truth mean vector $\bar x = \frac{1}{n} \sum_i^n x_i$ as well as the differentially private mean estimates $\widehat x$ across different mechanisms and communication/privacy budgets. We use the analytic Gaussian mechanism~\cite{balle2018improving} as the strong baseline.
Figure~\ref{fig:main-dme-results} shows the mean MSE $\|\bar x - \widehat x\|^2_2 / d$ with 95\% confidence interval over 10 random dataset initializations.\footnote{The kinks on the low bit-width curves are due to the TensorFlow implementation of the discrete Gaussian sampler taking integer noise scales; to preserve privacy, noise scales are rounded up as $\lceil\sigma / \gamma\rceil$ in all experiments.}
The first two plots assume a general norm bound $\left\| \sum_i^n x_i \right\|_2 \le cn$ when choosing $\gamma$ (generally applicable to FL applications), while the third plot assumes an optimistic bound $\left\| \sum_i^n x_i \right\|_2 \le c\sqrt n$ as $x_i$'s are sampled uniformly randomly on the $\ell_2$ sphere. 
Note that the bit-width $B$ applies to each coordinate of the quantized and noisy aggregate.
Results indicate that DDGauss achieves a good communication-utility trade-off and matches the Gaussian with sufficient bit-widths. 

In Figure~\ref{fig:dme-results-extended}, we additionally investigate the trade-off between quantization errors and modular clipping errors by trying different values of $k$. Here, we use the optimistic norm bound on the vector sum as the general norm bound could be loose (thus $\gamma$ would be chosen conservatively such that modular wrap-around rarely happens). At $k=2$, the effect of modular clipping is now evident (the gap between DDGauss and Gaussian). With increasingly larger $k$ (larger $\gamma$), we incur more quantization errors (thus worse low bit-width performance) but less modular wrapping and can close the utility gap to Gaussian at high bit-widths.
\begin{figure*}[t]
    \centering
    \includegraphics[width=\linewidth]{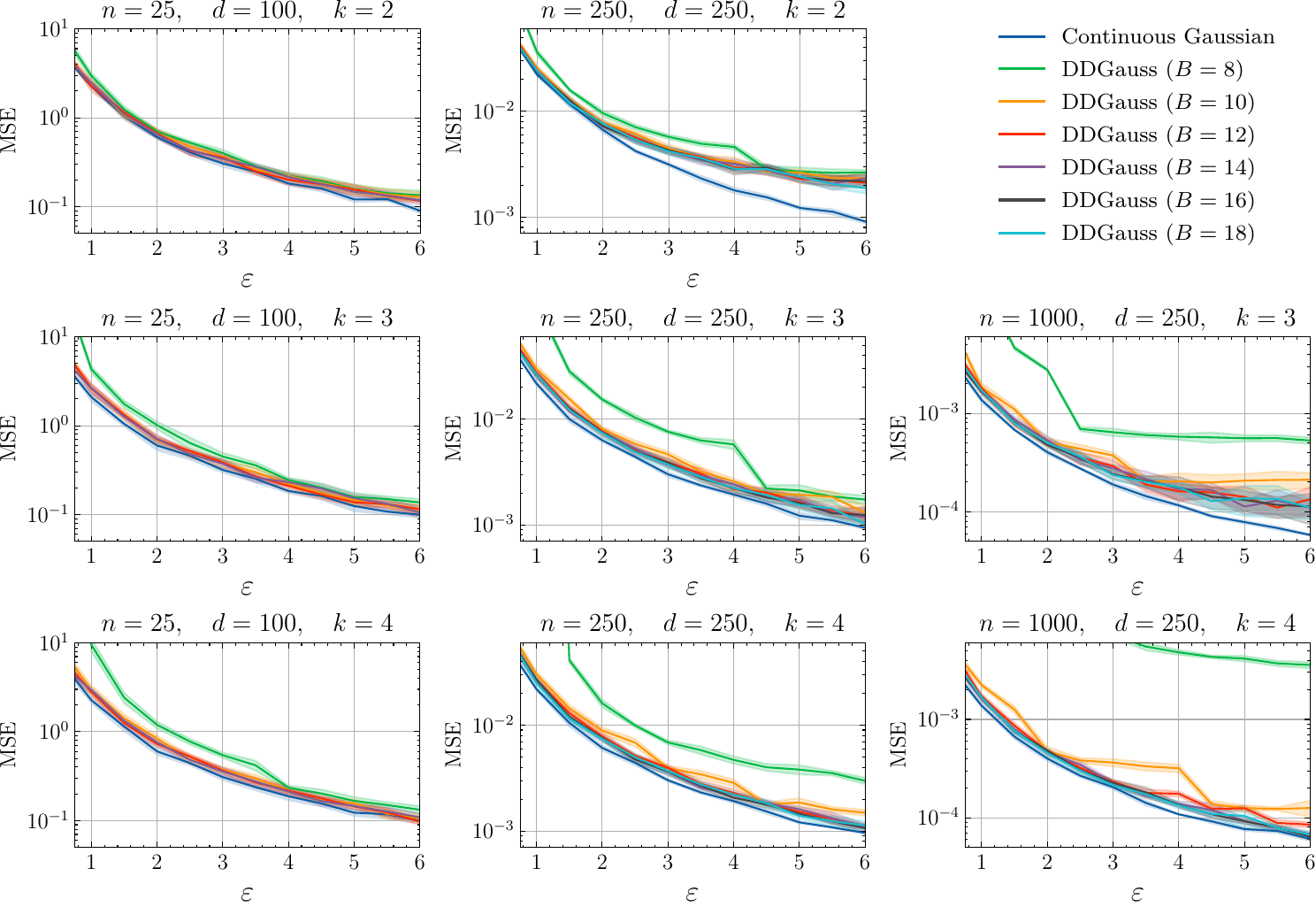}
    \vspace{-1em}
    \caption{Distributed mean estimation with the distributed discrete Gaussian, assuming an optimistic bound $\left\| \sum_i^n x_i \right\|_2 \le c \sqrt n$ as vectors are sampled uniformly randomly from a sphere. The first, second, and third row uses $k=2$, $k=3$, and $k=4$, respectively. $\delta=10^{-5}$. }
    \label{fig:dme-results-extended}
\end{figure*}

\subsection{Federated Learning}\label{sec:experiments-fl}

We evaluate on three public FL benchmarks: Federated EMNIST~\cite{cohen2017emnist, caldas2018leaf}, Stack Overflow Tag Prediction (SO-TP,~\cite{tffauthors2019}), and Stack Overflow Next Word Prediction (SO-NWP,~\cite{tffauthors2019}).

\subsubsection{Datasets}
Federated EMNIST is an image classification dataset containing 671,585 training hand-written digit/letter images over 62 classes grouped into $N=3400$ clients by their writer.
Stack Overflow is a large-scale text dataset based on the question answering site Stack Overflow. It contains over $10^8$ training sentences extracted from the site grouped by the $N = 342477$ users, and each sentence has associated metadata such as tags.
The task of SO-TP involves predicting the tags of a given sentence, while the task of SO-NWP involves predicting the next words given the preceding words in a sentence. For more details on the datasets and tasks, we refer the reader to~\cite{reddi2020adaptive}. 
We note that these datasets differ from those commonly used in related work (e.g. MNIST~\cite{lecun2010mnist} and CIFAR-10~\cite{cifar10}) in that they are substantially larger, more challenging, and involve \textit{user-level} (instead of example-level) DP with real-world client heterogeneity and label/size imbalance.
Obtaining a small $\varepsilon$ on EMNIST is also harder due to the relatively large sampling rate $q=n/N$ needed for stable convergence under noising.

\subsubsection{Models}
For EMNIST, We train a small convolutional net with two 3$\times$3 conv layers with 32/64 channels followed by two fully connected layers with 128/62 output units; a 2$\times$2 max pooling layer and two dropout layers with drop rate 0.25/0.5 are added after the first 3 trainable layers, respectively. The total number of parameters is $d=1018174$, which is slightly under $2^{20}$ to avoid excessive zero padding for the Walsh-Hadamard transform.
For SO-TP, we follow~\cite{reddi2020adaptive} and train a simple logistic regression model for tag prediction. The vocabulary size is limited to $10000$ for word tokens and 500 for tags, and each sentence is represented as a bag-of-words vector. The resulting model size is $d = 5000500$, which incurs a significant amount of zero padding. For SO-NWP, we use the LSTM architecture defined in~\cite{reddi2020adaptive} directly, which has a model size of $d = 4050748$ parameters (slightly under $2^{22}$).

\begin{figure}[!t]
    \centering
    \includegraphics[width=0.35\linewidth]{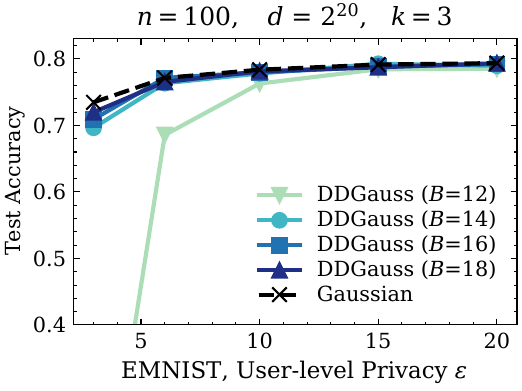}
    \caption{Summary of test accuracies (averaged over last 100 rounds) on EMNIST at $k=3$ across different values of $\varepsilon$ and $B$. $d$ is the padded model size. $\delta = 1/N$.}
    \label{fig:supp-emnist-summary}
    \vspace{1em}
    \includegraphics[width=\linewidth]{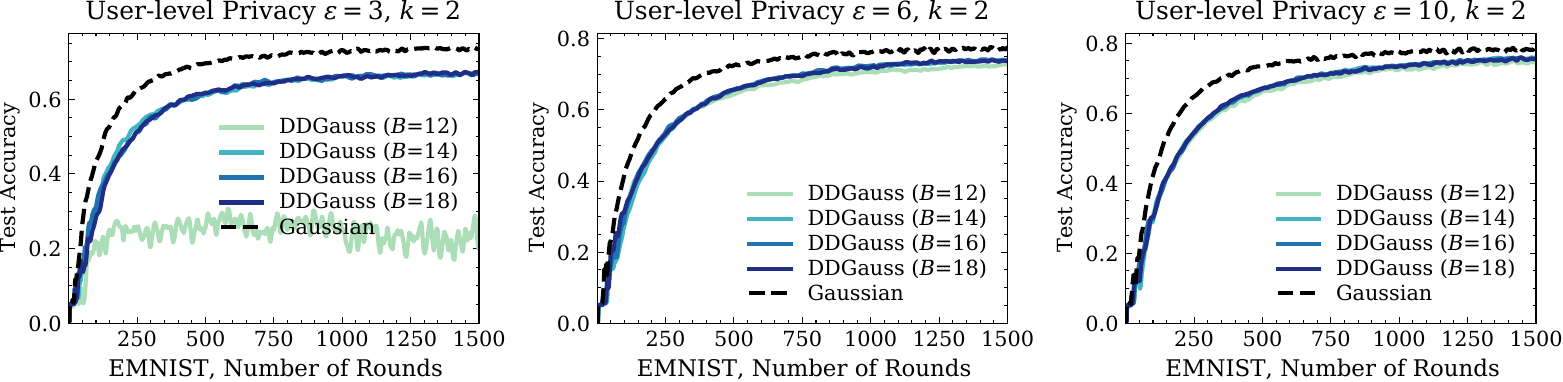} \\
    \vspace{1em}
    \includegraphics[width=\linewidth]{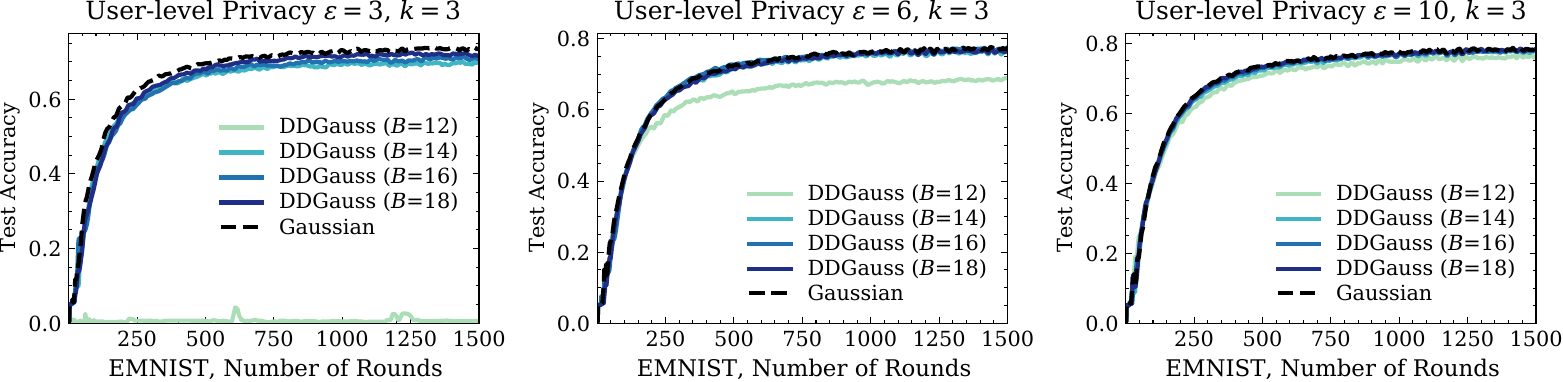} \\
    \vspace{1em}
    \includegraphics[width=\linewidth]{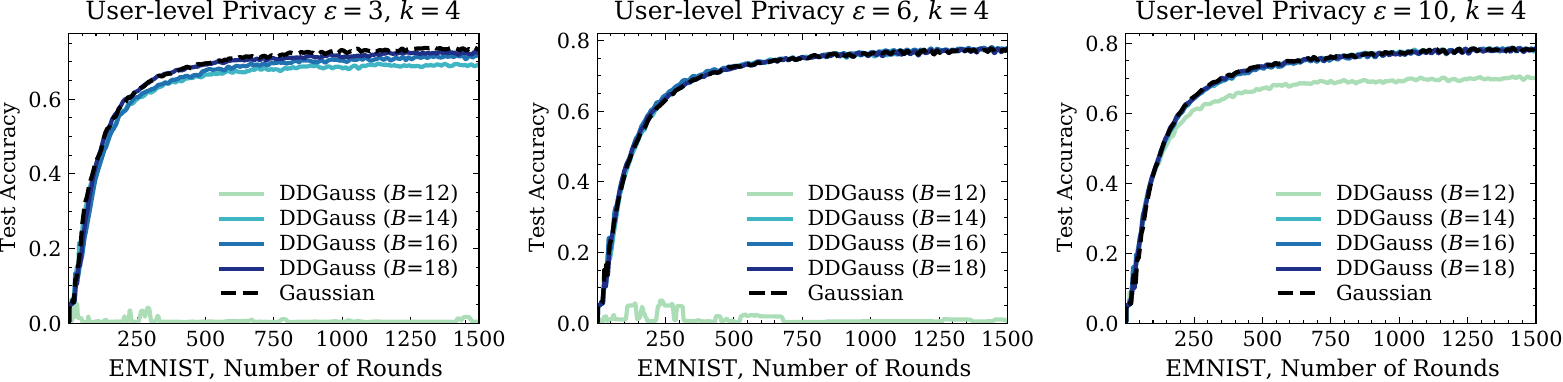}
    \vspace{-1em}
    \caption{Test accuracies during training for EMNIST across different $\varepsilon$, $B$, and $k$. $\delta = 1/N$.}
    \label{fig:supp-emnist-rounds}
\end{figure}

\subsubsection{Setup}

For all benchmarks, we used the standard dataset split provided by TensorFlow. For EMNIST, the dataset is split into training and test set and performance is reported on the test set. For Stack Overflow (SO-TP and SO-NWP), the dataset is split into training, validation, and test sets. Validation accuracies and test accuracies are reported on the validation and test sets respectively.
Note that using the dataset splits from TensorFlow is standard practice as in previous work (e.g.~\cite{reddi2020adaptive,mcmahan2018learning,al-shedivat2021federated}) and it allows our results to be comparable in similar settings. Note also that validation techniques such as $k$-fold validation can incur additional privacy costs.

We adopt most hyperparameters from previous work~\cite{reddi2020adaptive, andrew2019differentially, kairouz2021practical}.
For all tasks, we train with federated averaging with server momentum of 0.9~\cite{mcmahan2017communication,hsu2019measuring}. In each round, we uniformly sample $n=100$ clients for EMNIST and SO-NWP following~\cite{andrew2019differentially} and $n=60$ clients for SO-TP due to memory limit. We train 1 epoch over clients' local datasets. Each client's model updates are weighted uniformly (instead of by their number of samples) to maintain privacy. Clients are sampled without replacement within each round, and with replacement across rounds.
For EMNIST, SO-TP, and SO-NWP respectively, we set the number of rounds $T$ to 1500, 1500, and 1600, $c$ to 0.03, 2.0, and 0.3, client learning rate $\eta_c$ to 0.032, 316, and 0.5, and client batch size to 20, 100, and 16. Server LR $\eta_s$ is set to 1 for EMNIST and 0.56 for SO-TP; for SO-NWP, $\eta_s$ is selected from a small grid \{0.3, 1\} and the best performance (according to validation accuracy) is reported. Tuning is limited to $c$ (to tradeoff between the bias from clipping and the noise from privacy) and $\eta_s$ (to match the selected $c$ and $n$). For SO-NWP, we limit the max number of examples per client to 256.

The reported privacy guarantees $\varepsilon$ rely on privacy amplification via sampling \cite{kasiviswanathan2008ldp, bassily2014private, abadi2016deep}, which is necessary to obtain reasonable privacy-accuracy tradeoffs in differentially private deep learning. This assumes that the identities of the users sampled in every round are hidden from the adversary. This does not hold for the entity initiating connection with the clients (typically the server running the FL protocol) but is applicable to the analysts that have requested the model. 
We adopt the tight amplification bound from~\cite{mironov2019r} for the Gaussian baseline and use the generic upper bound from~\cite{zhu2019poission} for DDGauss (we do not explore a precise analysis in this work). The generic amplification upper bound could lead to more noise being added for DDGauss to achieve the same privacy as Gaussian.

\begin{figure}[!t]
    \centering
    \includegraphics[width=0.32\linewidth]{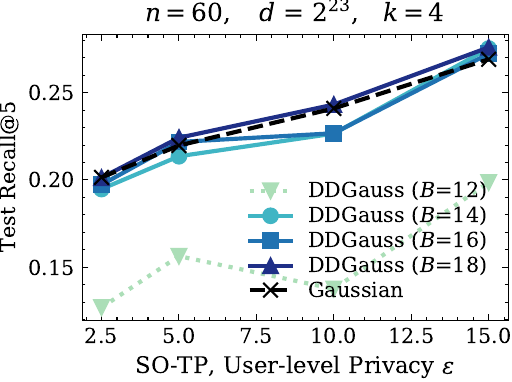}
    \includegraphics[width=0.32\linewidth]{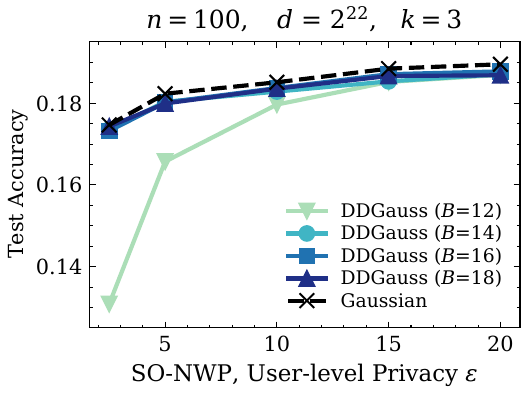}
    \includegraphics[width=0.32\linewidth]{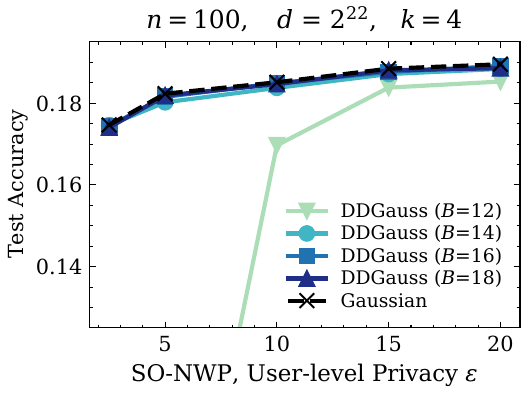}
    \caption{Summary of test performance on Stack Overflow. \textbf{Left:} Tag Prediction with logistic regression. Note that all runs of $B=12$ except at $\varepsilon=15$ did not converge. \textbf{Middle and Right:} Next Word Prediction with $k=3$ and $k=4$, respectively. $d$ is the padded model size. $\delta=1/N$ for SO-TP and $\delta = 10^{-6}$ for SO-NWP. }
    \label{fig:supp-so-summary}
    \vspace{1em}
    \centering
    \includegraphics[width=\linewidth]{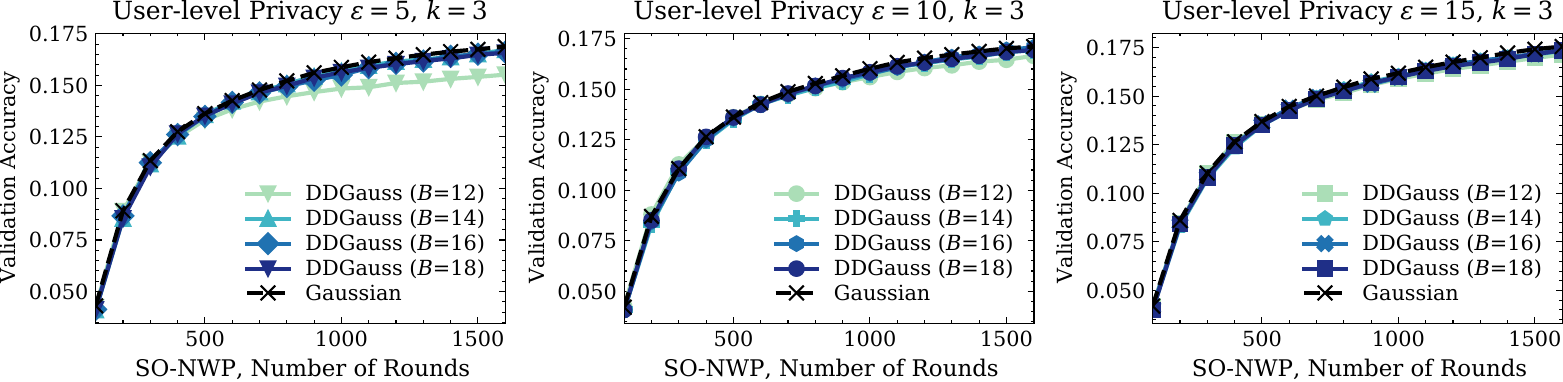} \\
    \vspace{1em}
    \includegraphics[width=\linewidth]{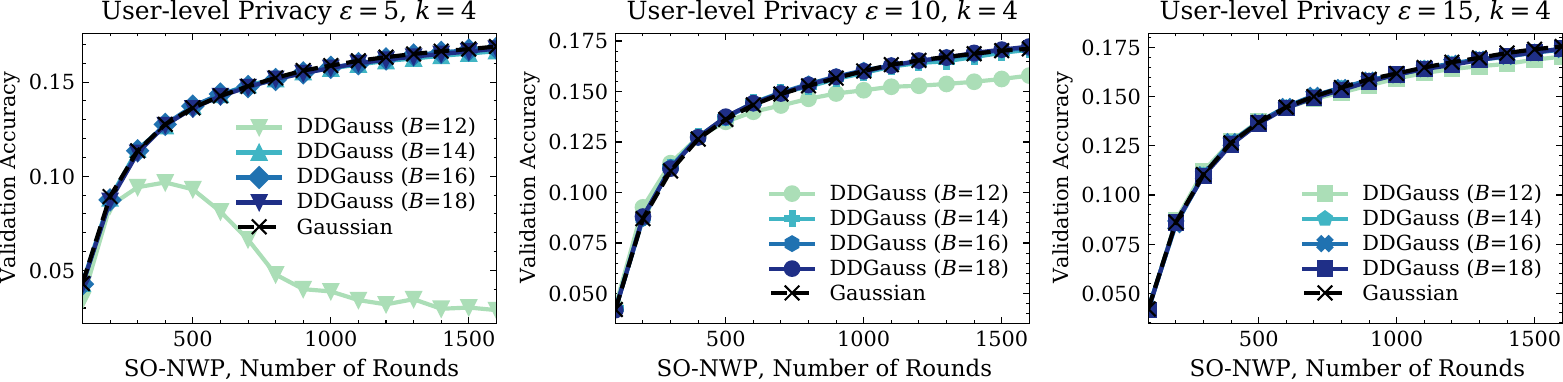}
    \vspace{-1em}
    \caption{Validation accuracies during training for SO-NWP across different $\varepsilon$, $B$, and $k$.}
    \label{fig:supp-sonwp-rounds}
\end{figure}

\subsubsection{Results}

For EMNIST, Figure~\ref{fig:supp-emnist-summary} summarizes the test accuracies across different values of $\varepsilon$ and $B$ for $k=3$, and Figure~\ref{fig:supp-emnist-rounds} shows the accuracies during training.
For Stack Overflow, Figure~\ref{fig:supp-so-summary} summarizes the test performance on SO-TP (recall@5) and SO-NWP (accuracy), and Figure~\ref{fig:supp-sonwp-rounds} shows the validation accuracies on SO-NWP. 

Overall, with more communication bits ($B$) and privacy budget ($\varepsilon$), DDGauss
achieves a better utility both relative to the Gaussian baseline and in absolute performance, and it can match the continuous Gaussian as long as $B$ is sufficient.

In particular, we can again observe the trade-off between quantization and modular clipping from Figure~\ref{fig:supp-emnist-rounds} and~\ref{fig:supp-so-summary}: a small $k$ can be sub-optimal for learning as the cost of modular wrap-around is more pronounced than quantization errors; using a larger $k$ allows DDGauss to match the Gaussian baseline at the expense of worse low bit-width performance (as $\gamma$ is larger).

Note also that for EMNIST (Figure~\ref{fig:supp-emnist-rounds}), there is a slight performance gap between Gaussian and DDGauss in the extreme setting with $\varepsilon=3$ and $k=4$. We believe this minor mismatch, on top of the errors from rounding and modular clipping, is due to the use of the generic upper bound for privacy amplification via subsampling as discussed earlier in this section.

\subsection{Additional Results}

\begin{figure}[t]
    \centering
    \includegraphics[width=0.5\linewidth]{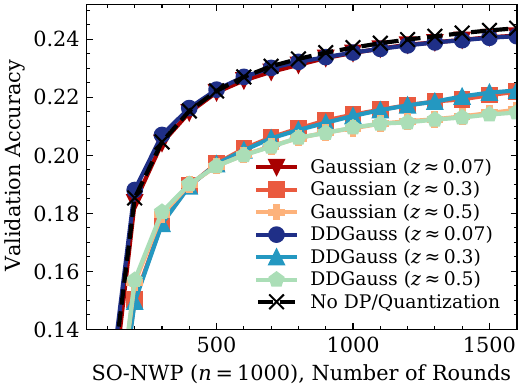}
    \vspace{-0.5em}
    \caption{Validation accuracies on SO-NWP (averaged every 100 rounds) with $n=1000$ and $B=18$. $z$ is the approximate noise multiplier.\textsuperscript{\ref{footnote:noise-mult}} The no privacy/quantization baseline and $z\approx 0.07$ runs uses $\eta_s =3$ while others use $\eta_s = 1$.
    }
    \label{fig:sonwp-n1000}
\end{figure}
\begin{figure}[t]
    \centering
    \includegraphics[width=\linewidth]{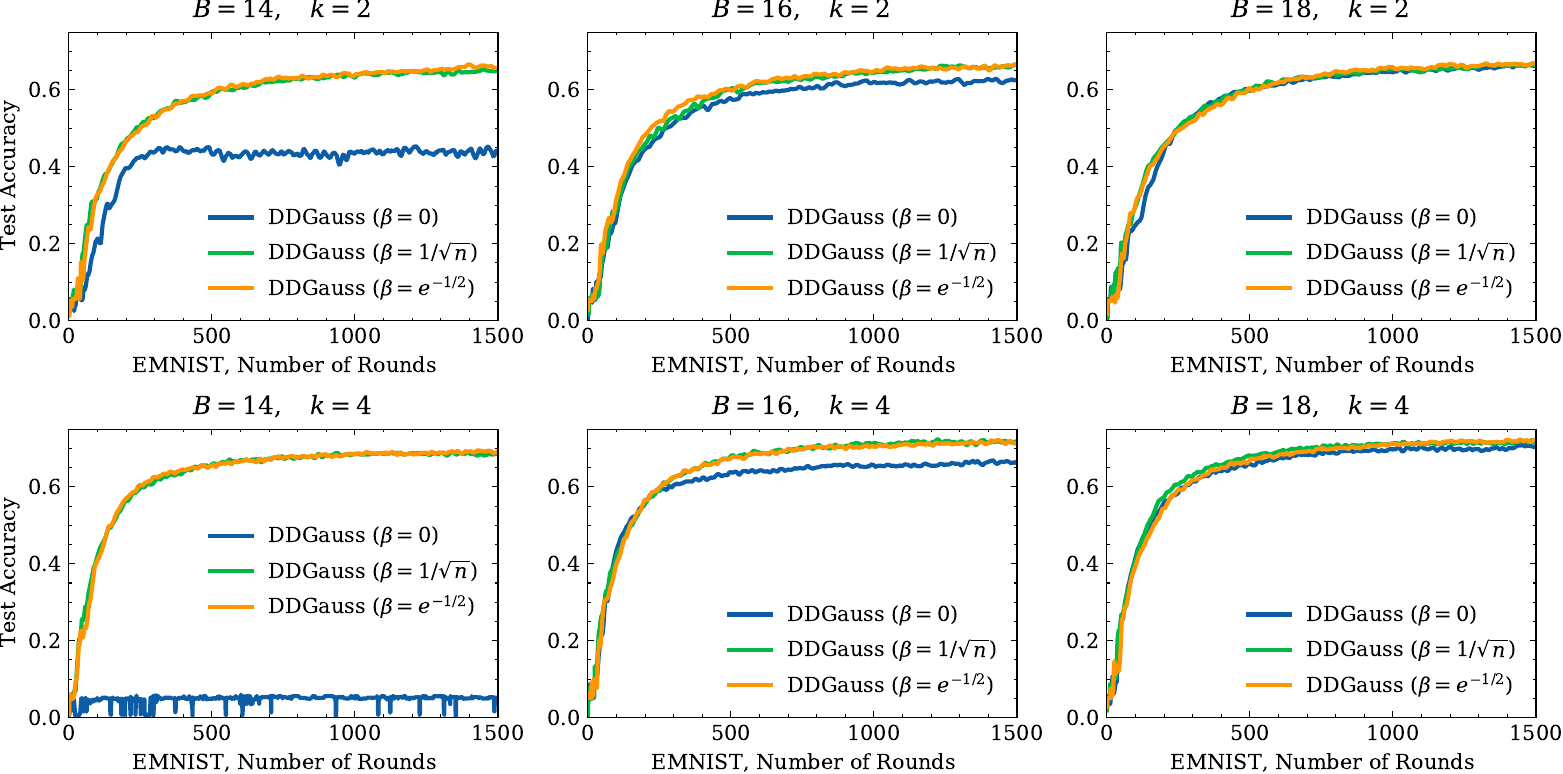}
    \vspace{-1em}
    \caption{Effects of $\beta$ on Federated EMNIST. The number of clients per round is $n=100$ and the user-level privacy budget is fixed at $\varepsilon=3$. $\delta = 1/N$.  }
    \label{fig:emnist-beta}
\end{figure}
\begin{figure}[!htb]
    \centering
    \includegraphics[width=0.5\linewidth]{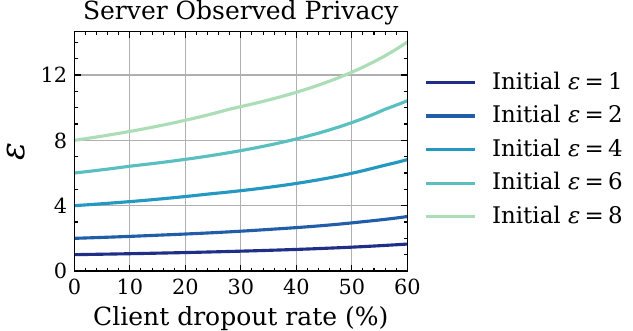}
    \caption{Effects of client dropouts on the server observed privacy. $\delta = 10^{-5}$.}
    \label{fig:eps-dropout}
\end{figure}

\paragraph{Large-Scale Training} \label{sec:supp-large-scale-training}

We additionally consider scaling up the SO-NWP experiments to $n=1000$ clients per round (similar to production settings described in~\cite{hard2018federated, mcmahan2018learning,ramaswamy2019federated}), and we show the validation accuracies during training across different noise multipliers\footnote{\label{footnote:noise-mult}$z=\widehat\sigma/c$ where $\widehat\sigma$ is the equivalent central noise standard deviation (i.e. $\sqrt{n}\sigma$ for DDGauss). The values of $z$ are aligned on privacy budgets and thus $z$ is in fact slightly larger for DDGauss compared to Gaussian due to effects of rounding, generic amplification, etc.} in Figure~\ref{fig:sonwp-n1000}.
We set $c=1$ and $\eta_s=1$ for $z\approx0.3$ and $z\approx0.5$, and we set $\eta_s=3$ otherwise. 
$z\approx 0.07$ gives a target test accuracy of around 25.2\% (e.g. a utility-first approach to limit performance degradation from DP~\cite{kairouz2021practical}) while $z\approx 0.5$ and $z\approx 0.3$ give $\varepsilon$ of around 10 and 234 respectively. The results bear significant practical relevance as they indicate that as long as DDGauss is parameterized properly, it can perform as good as the continuous Gaussian in real-world settings (with large $n$, large model size $d$, and natural client heterogeneity from Stack Overflow).

\paragraph{Effects of $\beta$}
Recall from Section~\ref{sec:rr} that the hyperparameter $\beta$ controls the growth of the client vector norm from conditional randomized rounding. Here, we are interested to know how $\beta$ and the bias and variance it introduces influence the communication-utility trade-off in practice. Figure~\ref{fig:emnist-beta} shows the results on Federated EMNIST with $\beta \in \{0, \frac{1}{\sqrt{n}}, e^{-1/2}\}$ across $B\in \{14, 16, 18\}$ and $k \in \{2, 4\}$ with user-level privacy budget fixed at $\varepsilon = 3$; other parameters follow those described earlier.\footnote{$\beta = 0$ leads to unconditional rounding, in which case we use the worst case bound $\Delta_{2} \leq \|x\|_{2}+\gamma \sqrt{d}$.}
We note that when the communication budget is tight (i.e. large $k$ and small $B$, where the ``room'' for larger norm and noise variance is limited), the bounded norm growth from conditional rounding can be pivotal to model learning and convergence. When the communication budget is sufficient (i.e. small $k$ and large $B$, where we can afford unconditional rounding), the bias introduced by $\beta > 0$ have insignificant impact on the model utility (e.g. $\beta = e^{-1/2} \approx 0.607$ and $\beta = 1/\sqrt n = 0.1$ give similar performance and convergence speed).

\paragraph{Privacy Degradation from Client Dropouts}
Figure~\ref{fig:eps-dropout} shows the privacy degradation as observed by the server if a certain percentage of the clients drops out during aggregation (thus there would be missing local noise shares). Note that for the external analyst, the server can always add the missing shares of noise onto the aggregate to prevent this degradation. Note also that the values of the parameters ($\gamma, \beta, B, c, d, k, n, T$) does not affect the degradation as they influence each other to arrive at the same initial $\varepsilon$. The sampling rate $q$ is also fixed at 1.0 as subsampling does not apply from the server's perspective. Results indicate that the privacy guarantees degrade gracefully as clients drop out.

\section{Concluding Remarks} \label{sec:conclusion}
 We have presented an complete end-to-end protocol for federated learning with distributed DP and secure aggregation. Our solution relies on efficiently flattening and discretizing the client model updates before adding discrete Gaussian noise and applying secure aggregation. A significant advantage of this approach is that it allows an untrusted server to perform complex learning tasks on decentralized and privacy-sensitive data while achieving the accuracy of a trusted server. Our theoretical guarantees highlight the complex tension between communication, privacy, and accuracy. Our experimental results demonstrate that our solution is essentially able to match the accuracy of central differential privacy with 16 or fewer bits of precision per value. 
 
 Several questions remain to be addressed, including (a) tightening the generic RDP amplification via sampling results or conducting a precise analysis of the subsampled distributed discrete Gaussian mechanism, (b) exploring the use of a discrete Fourier transform or other methods instead of the Walsh-Hadamard transform to avoid having to pad by (up to) $d-1$ zeros, (c) developing private self-tuning algorithms that learn how to optimally set the parameters of the algorithm on the fly, and (d) proving a lower bound on $m$ that either confirms that the distributed discrete Gaussian's $m \ge \tilde{O}\left( n  + \sqrt{\frac{\varepsilon^2 n^3}{d}} + \frac{\sqrt{d}}{\varepsilon}\right)$ is order optimal or suggests the existence of a better mechanism.

\section{Acknowledgments} \label{sec:acknolwedgments}
We thank Naman Agarwal and Kallista Bonawitz for helpful discussions and comments on drafts of this paper. We thank Andrea La Mantia for pointing out an error in one of our calculations.

\printbibliography

\end{document}